\newtheorem{definition}{Definition}
\newtheorem{proposition}{Proposition}
\newtheorem{theorem}{Theorem}
\newtheorem{lemma}{Lemma}
\newtheorem{assumption}{Assumption}
\def\Tableref#1{Table~\ref{#1}}
\def\Appref#1{Appendix~\ref{#1}}
\def\Figref#1{Figure~\ref{#1}}
\def\Secref#1{Section~\ref{#1}}
\def\eqref#1{equation~\ref{#1}}
\def\1{\bm{1}}
\newcommand{\pdata}{p_{\rm{data}}}
\newcommand{\pmodel}{p_{\rm{model}}}
\newcommand{\pseudoexp}[2]{\widetilde{\mathbb{E}}_{#1} \left[ #2 \right]}
\newcommand{\pseudokl}[2]{\widetilde{\mathrm{KL}} \left( #1 \mid \mid #2 \right)}
\newcommand{\pseudomi}[2]{\widetilde{\mathrm{I}} \left( #1 ; #2 \right)}
\definecolor{blindred}{RGB}{222,45,38}
\definecolor{blindblue}{RGB}{49,130,189}
\definecolor{blindorange}{RGB}{230,85,13}
\begin{document}

\title{Robust Experimental Design via \\ Generalised Bayesian Inference}

\author{%
  Yasir Zubayr Barlas \\
  The University of Manchester\\
  \texttt{yasir.barlas@manchester.ac.uk} \\
  \And
  Sabina J. Sloman \\
  The University of Manchester\\
  \texttt{sabina.sloman@manchester.ac.uk} \\
  \AND
  Samuel Kaski \\
  Aalto University\\
  The University of Manchester\\
  \texttt{samuel.kaski@manchester.ac.uk} \\
}

\maketitle

\begin{abstract}
  Bayesian optimal experimental design is a principled framework for conducting experiments that leverages Bayesian inference to quantify how much information one can expect to gain from selecting a certain design. However, accurate Bayesian inference relies on the assumption that one's statistical model of the data-generating process is correctly specified. If this assumption is violated, Bayesian methods can lead to poor inference and estimates of information gain. Generalised Bayesian (or Gibbs) inference is a more robust probabilistic inference framework that replaces the likelihood in the Bayesian update by a suitable loss function. In this work, we present \textit{Generalised Bayesian Optimal Experimental Design (GBOED)}, an extension of Gibbs inference to the experimental design setting which achieves robustness in both design and inference. Using an extended information-theoretic framework, we derive a new acquisition function, the \textit{Gibbs expected information gain (Gibbs EIG)}. Our empirical results demonstrate that GBOED enhances robustness to outliers and incorrect assumptions about the outcome noise distribution.
\end{abstract}

\section{Introduction}

Many real-world settings are characterised by heavy resource and time constraints on data collection.
In these cases, effective learning requires practitioners to carefully select these scarce data to maximise their learning objectives.
Bayesian (optimal) experimental design (BOED) is a framework to optimise data acquisition in such settings \citep{atkinson1992optimum, ryanboed, rainforth2024modern, Huan_Jagalur_Marzouk_2024}. The framework has found application in scores of disciplines, such as systems biology \citep{bioinformaticsbtt436, Pauwels_Lajaunie_Vert_2014}, psychology \citep{eigpsycho1, eigpsycho2}, and (medical) imaging \citep{KARIMI2021113489, hyvonen2024bayesian}. 

BOED leverages Bayesian inference to update beliefs about parameters of interest \citep{rainforth2024modern}. A key assumption is that the data are generated by a statistical model whose structure is known and under certain (unknown) parameter values. The goal of Bayesian inference is to identify these parameter values by constructing a posterior distribution on the basis of observed data. 

BOED extends Bayesian inference to specify how the modeller wishes to allocate their resources to set the design.
In BOED, one sets the design that maximises an objective of interest, cast as a \textit{utility function}, enabling experiments to be optimally conducted according to this objective. The optimal design depends on the state of the world, of which we are uncertain; Bayesian inference offers a coherent approach to modelling this uncertainty. A common utility function is the expected information gain \citep{lindley}, which assesses the expected amount of information obtained about the parameters of interest.
Thus, in the context of BOED, the modeller relies on the model \textit{twice}: to design experiments, and to make inferences.

The assumption that the assumed statistical model is well-specified -- i.e., that the model is able to accurately capture the true data-generating process (DGP) -- is often broken in the real world. If accurate domain knowledge is available, scientists often choose to represent this domain knowledge as simple and tractable models that exclude some aspects of reality due to scientific uncertainty or for the sake of interpretability. Often, accurate domain knowledge is simply unavailable. 

\textit{Model misspecification} refers to the case where (due to intentional simplification and/or unavailable domain knowledge) the assumed statistical model cannot fully capture the true DGP \citep{WALKER20131621}. Even in standard data collection settings, model misspecification detrimentally affects inferences made through the Bayesian framework \citep{berk1966limiting, kleijn2012bernstein-von-mises}.
In the BOED context, it can also affect the optimality of the design sequence through uninformative or misleading design choices \citep{vincent2017darc, sloman2022characterizingrobustnessbayesianadaptive, tang2025generalizationanalysisbayesianoptimal}.

The possible damaging effect of model misspecification on the effectiveness of BOED is demonstrated in \Cref{locfindingplot}.
The top row shows the designs selected by BOED over the course of an experiment designed to locate two signal-emitting objects (\textcolor{blindred}{red crosses}) on the basis of the signal intensity measured at selected locations on a pre-defined grid.
When the model is well-specified, BOED selects designs that cluster around the objects, providing the experimenter with ample information to precisely locate the objects.
On the other hand, when the model is misspecified, the designs cluster around regions in which no objects are present.
The misspecified model's inability to accurately assess the information available at a given location results in the design selection method it informs bringing the experimenter sequentially \textit{further} from the objects. 

\begin{figure}
    \centering
    \includegraphics[width=\linewidth]{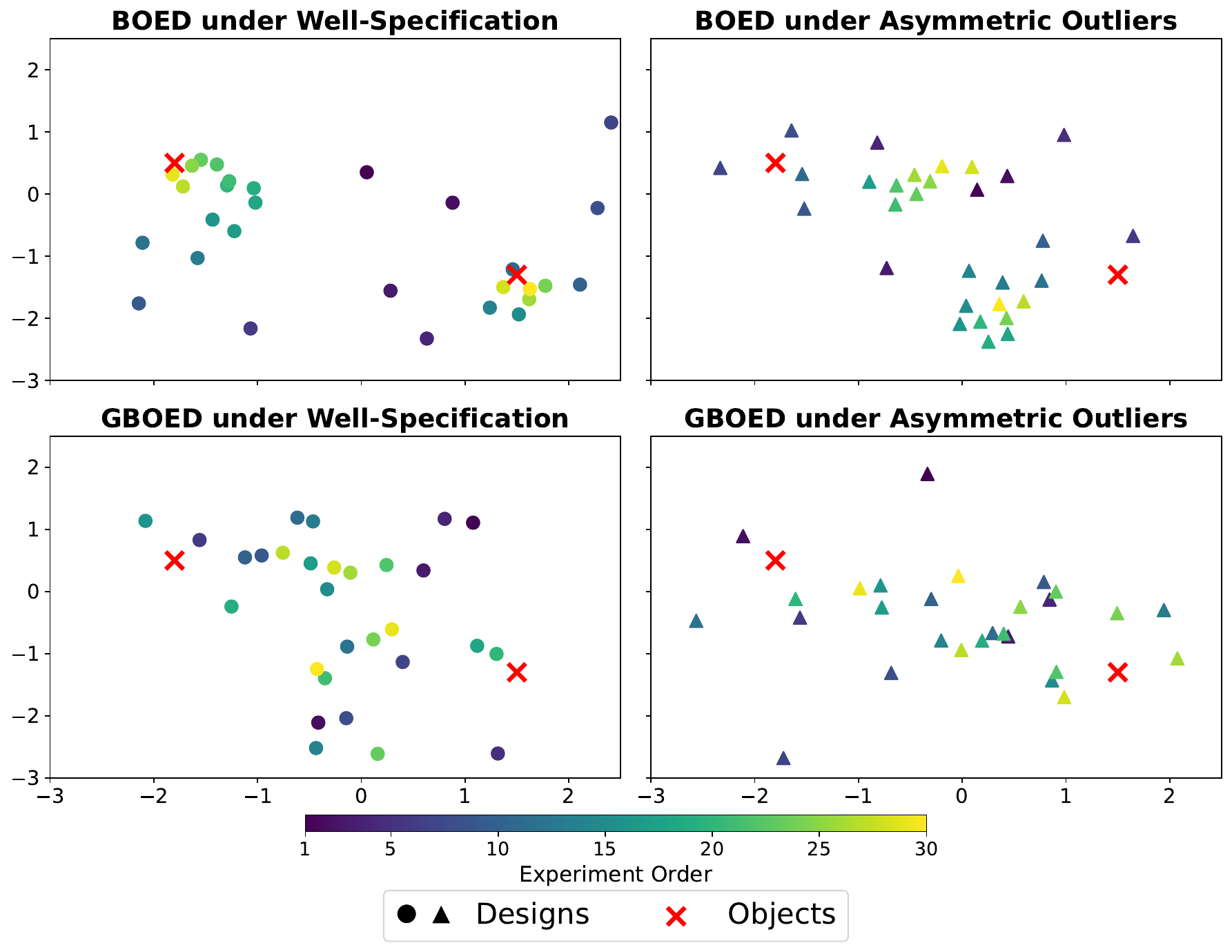}
    \caption{Designs selected by both BOED and GBOED in a 2D location finding example in well-specified and misspecified scenarios.  
    Designs that cluster around the objects (\textcolor{blindred}{red crosses}) are most informative in determining the objects' locations.
    \textit{Top left}: In the well-specified setting, BOED selects designs that cluster around the objects. \textit{Top right}: When the model is misspecified, BOED clusters around irrelevant regions with no objects. \textit{Bottom row}: GBOED effectively avoids clustering in irrelevant regions.}
    \label{locfindingplot}
\end{figure}

A design selection method robust to model misspecification would be less easily misled by an inaccurate model.
The bottom row of \Cref{locfindingplot} shows the sequence of designs selected by our method, generalised BOED (GBOED).
As a result of GBOED's ``awareness'' that the model that informs it has a limited ability to assess potential information gains, GBOED explores more of the design space, i.e., the designs it selects do not cluster around a single area as often as BOED.

Whilst model misspecification in BOED has received considerable attention, comparatively little work has explored \textit{generalised Bayesian inference} (or \textit{Gibbs inference}; \citealp{Bissiri_2016, knoblauch}) in the context of experimental design, despite its robustness to misspecified statistical models when updating beliefs about parameters of interest. In Gibbs inference, a loss function replaces the likelihood in the traditional Bayesian update. 
Gibbs inference has offered promising theoretical and empirical results \citep{knoblauch, martin2022direct} as an alternative to Bayesian inference in the presence of model misspecification.

In this work, we extend Gibbs inference to the experimental design setting, applying the Gibbs framework to both design selection and parameter inference. We introduce \textit{Generalised Bayesian Optimal Experimental Design (GBOED)}, a novel framework that leverages Gibbs inference to address model misspecification.  
Implementation requires a user-specified loss function to account for challenges such as outliers in data. We consider the weighted score matching loss \citep{robustgpr}, well-suited to sequential applications such as experimental design, and propose a novel parameterisation of this function.

Our contributions are summarised by the following: 
\begin{enumerate}
    \item We propose a generalised BOED framework, GBOED, which seeks to tackle the model misspecification problem. 
    \item We introduce new, unconventional, information-theoretic concepts that enable the use of measures that arise in Gibbs inference that violate properties of probability density functions. 
    \item We derive a generalised form of the expected information gain, which we coin the \textit{Gibbs expected information gain}, and present methods to approximate this utility. 
    \item We finally provide a number of empirical results, detailing the conditions under which it is advantageous to use GBOED over BOED, and to use our proposed utility over alternatives.
\end{enumerate}

\section{Preliminaries}
\label{sec_prelims}

\subsection{Notation}

Bolded capital Greek or Latin letters refer to random variables (rvs) (e.g., $\boldsymbol{\Theta}$ is the rv assigned to the parameter space). Realisations of rvs are bolded lowercase (e.g., a realisation of $\boldsymbol{\Theta}$ is $\boldsymbol{\theta}$). Sets are expressed as calligraphic capital letters (e.g., $\boldsymbol{\theta}$ can take values in the parameter space $\mathcal{T}$). $\mathbb{E}_{p(\boldsymbol{x})} \left[ f(\boldsymbol{x}) \right]$ is the expectation of the function $f(\boldsymbol{x})$ with respect to (wrt) the probability density function (pdf) $p$ of values $\boldsymbol{x}$. Unbolded capital Greek or Latin letters denote probability distributions.

\subsection{Bayesian inference}

The experimenter selects designs $\boldsymbol{\xi}$ which produce output data $\boldsymbol{y}$. Stochasticity in the value of $\boldsymbol{y}$ is captured by the rv $\boldsymbol{Y}$. 
They do not know the distribution underlying $\boldsymbol{Y}$, but presume it in the form of a likelihood function $p(\boldsymbol{y} \mid \boldsymbol{\theta}, \boldsymbol{\xi})$. We refer to the presumptive likelihood function as the \textit{statistical model}. 

In Bayesian inference \citep{gelman2013bayesian}, the learner assigns a prior density $\pi(\boldsymbol{\theta})$ to their initial beliefs about the probabilities of values $\boldsymbol{\theta} \in \mathcal{T}$.
They subsequently update this prior to a posterior density $p(\boldsymbol{\theta} \mid \boldsymbol{y}, \boldsymbol{\xi})$ on the basis of new observations of data $\boldsymbol{y} \mid \boldsymbol{\xi}$. 

More specifically, the Bayesian posterior has pdf
\begin{equation*}
p(\boldsymbol{\theta} \mid \boldsymbol{y}, \boldsymbol{\xi}) = \frac{p(\boldsymbol{y} \mid \boldsymbol{\theta}, \boldsymbol{\xi}) \pi(\boldsymbol{\theta})}{\int_{\mathcal{T}} p(\boldsymbol{y} \mid \boldsymbol{\theta^{\prime}}, \boldsymbol{\xi}) \pi(\boldsymbol{\theta^{\prime}}) d\boldsymbol{\theta^{\prime}}},
\end{equation*}
where the denominator is referred to as the marginal likelihood $p(\boldsymbol{y} \mid \boldsymbol{\xi}) = \int_{\mathcal{T}} p(\boldsymbol{y} \mid \boldsymbol{\theta}, \boldsymbol{\xi}) \pi(\boldsymbol{\theta}) d\boldsymbol{\theta}$. 

The values of the parameters $\boldsymbol{\theta}$ that best describe the data are unknown \textit{a priori}. In the \textit{well-specified} case, there exists a $\boldsymbol{\theta}^{*} \in \mathcal{T}$ such that the data $\boldsymbol{y}$ arise from the model, i.e., $p(\boldsymbol{y} \mid \boldsymbol{\theta}^{*}, \boldsymbol{\xi})$ accurately characterises the probability of encountering $\boldsymbol{y}$ at design $\boldsymbol{\xi}$. The experimenter's goal is to learn $\boldsymbol{\theta}^{*}$.

Model misspecification is the event that the assumed statistical model is dissimilar from the true DGP -- in which case there exists no $\boldsymbol{\theta}^{*} \in \mathcal{T}$ for which $p(\boldsymbol{y} \mid \boldsymbol{\theta}^{*}, \boldsymbol{\xi})$ corresponds to the true DGP.  
In the presence of possible misspecification, different values of $\boldsymbol{\theta}$ may be more or less useful for the experimenter. Gibbs inference (presented in \Secref{gibbsinf}) provides a way to specify context-dependent criteria for useful parameter values.
The two forms of misspecification covered in this work are: outliers being present in the data-stream and incorrect noise distributional assumptions.  The existence of outliers is common in many real-world environments, often due to poor data collection practices and faulty equipment. In constructing a statistical model, one may also make poor assumptions about the noise inherent in the data-stream. 

\subsection{Bayesian Optimal Experimental Design}\label{sec:boed}

Presuming the model is well-specified, the experimenter's goal is to select designs $\boldsymbol{\xi}$ whose corresponding outcomes $\boldsymbol{y}$ provide as much information as possible about the value $\boldsymbol{\theta}^{*}$.
BOED \citep{rainforth2024modern, Huan_Jagalur_Marzouk_2024} is a principled framework for conducting experiments in a way that maximises a utility function. This utility function is typically the (Bayesian) expected information gain (EIG; \citealp{lindley}), an information-theoretic measure of how much one can expect to learn about $\boldsymbol{\Theta}$ from an experiment conducted using a certain design $\boldsymbol{\xi}$ from the design space $\mathcal{X}$. 
The definition of the EIG in \Cref{eigdefinition} requires a definition of the Kullback-Leibler (KL) divergence \citep{kullbackleibler}, which measures the difference between two probability distributions.

\begin{definition}[KL divergence]\label{def:kldiv}
        The KL divergence from a distribution $P$ to a distribution $Q$ is
        $$\mathrm{KL}({p(\boldsymbol{x})} \mid \mid {q(\boldsymbol{x})}) \coloneqq \mathbb{E}_{p(\boldsymbol{x})}\left[{\log{\frac{p(\boldsymbol{x})}{q(\boldsymbol{x})}}}\right],$$
        where $P$ and $Q$ have pdfs $p, q : \mathcal{X} \rightarrow \mathbb{R}_{\geq 0}$, respectively.\footnote{
            Note that we define the KL divergence as a function of the pdfs of distributions $P$ and $Q$, rather than of $P$ and $Q$ themselves.
        }
\end{definition}

\begin{definition}[EIG] The EIG is the expected KL divergence from the posterior to the prior \label{eigdefinition}
\begin{align}
\mathrm{EIG}(\boldsymbol{\xi}) & = \mathbb{E}_{p(\boldsymbol{y} \mid \boldsymbol{\xi})}\left[\mathrm{KL}\left(p(\boldsymbol{\theta} \mid \boldsymbol{y}, \boldsymbol{\xi}) \mid \mid \pi(\boldsymbol{\theta})\right)\right].
\end{align}
\end{definition}
It can be verified that the EIG is equivalent to the mutual information between $\boldsymbol{\Theta}$ and $\boldsymbol{Y} \mid \boldsymbol{\xi}$ \citep{rainforth2024modern}. By choosing the design $\boldsymbol{\xi}^{*}$ that maximises the EIG, the goal is to efficiently utilise our experimental resources to reduce uncertainty about $\boldsymbol{\Theta}$. Traditional BOED proceeds by performing a posterior update every time a new design-observation pair is obtained.

Notice how the experimenter relies on the statistical model $p(\boldsymbol{y} \mid \boldsymbol{\theta}, \boldsymbol{\xi})$ twice: once when computing the EIG, and again when computing the Bayesian posterior. This is known to be an effective procedure for updating beliefs about $\boldsymbol{\theta}$ when the model is well-specified \citep{optimal_bayes, paninski_asymptotic_2005}. However, when the model is misspecified, this affects BOED's effectiveness in both fitting data and gathering new data \citep{rainforth2024modern}. The result can be wasted experimental resources and/or misleading inferences. 

\subsection{Gibbs Inference}
\label{gibbsinf}

Gibbs inference \citep{Bissiri_2016} generalises Bayesian updating by replacing the likelihood $p(\boldsymbol{y} \mid \boldsymbol{\theta}, \boldsymbol{\xi})$ with a loss function $\ell_{\boldsymbol{\theta}}: \mathcal{T} \times \mathcal{X} \times \mathcal{Y} \rightarrow \mathbb{R}$ which quantifies the ``agreement'' (defined through the loss) between parameters of interest $\boldsymbol{\theta}$ and data $\boldsymbol{y} \mid \boldsymbol{\xi}$.

The Gibbs posterior has pdf 
\begin{equation} \label{gibbsposteriorrule}
\pi(\boldsymbol{\theta} \mid \boldsymbol{y}, \boldsymbol{\xi}) = \frac{\exp\left(-\omega\ell_{\boldsymbol{\theta}}(\boldsymbol{\xi}, \boldsymbol{y}) \right) \pi(\boldsymbol{\theta})}{\int_{\mathcal{T}} \exp\left(-\omega\ell_{\boldsymbol{\theta^{\prime}}}(\boldsymbol{\xi}, \boldsymbol{y}) \right) \pi(\boldsymbol{\theta^{\prime}}) d\boldsymbol{\theta^{\prime}}},
\end{equation}
where $\pi(\boldsymbol{\theta})$ is the pdf of the prior distribution as in Bayesian inference, $\omega > 0$ is a learning rate determining the influence of the data on the final posterior, and $\exp\left(-\omega\ell_{\boldsymbol{\theta}}(\boldsymbol{\xi}, \boldsymbol{y}) \right)$ is known as the \textit{generalised likelihood}. \Cref{assumption1} below ensures that a Gibbs posterior exists so that conducting Gibbs inference is sensible \citep{Bissiri_2016, knoblauch}.

\begin{assumption} \label{assumption1} 
The loss function $\ell_{\boldsymbol{\theta}}(\boldsymbol{\xi}, \boldsymbol{y})$ satisfies $$0 < \int_{\mathcal{T}} \exp\left(-\omega\ell_{\boldsymbol{\theta}}(\boldsymbol{\xi}, \boldsymbol{y}) \right) \pi(\boldsymbol{\theta}) d\boldsymbol{\theta} < \infty.$$
\end{assumption}

The generalised likelihood enables the loss function to provide information about the data, determining which parameter values are given higher weight in the Gibbs posterior update. When a statistical model is available, one can recover Bayesian inference using $\omega = 1$ and $\ell_{\boldsymbol{\theta}}(\boldsymbol{\xi}, \boldsymbol{y}) = -\log p(\boldsymbol{y} \mid \boldsymbol{\theta}, \boldsymbol{\xi})$. This would be optimal when the model is well-specified \citep{optimal_bayes}. In instances of potential misspecification, specific types of loss functions have been explored that try to induce robustness to misspecified models (see \citet{knoblauch} for a comprehensive list of loss functions). 

While the loss function need not rely on a statistical model, in many cases, the model contains some valid information. For example, it may capture a general trend that is contaminated by outliers. In such cases, the experimenter likely wants their inferences to reflect the information contained in the model. Loss functions that depend on the statistical model can be referred to as \textit{scoring rules} \citep{dawid2014theory, Giummol__2018}. In this work, we consider scoring rules because they can capture relevant information from the model, while simultaneously enabling robust inference.

\subsection{Scoring Rules}

The scoring rules we investigate in our work are power likelihoods \citep{holmes2017assigning, mclatchie2025predictiveperformancepowerposteriors} and score matching \citep{barp2019minimum, Matsubaradiscrete, altamirano23a}. See \Appref{scoringrulesappendix} for more detail about our scoring rules.

\paragraph{Power Likelihoods} Power likelihoods \citep{holmes2017assigning, mclatchie2025predictiveperformancepowerposteriors} use the negative log-likelihood loss $\ell_{\boldsymbol{\theta}}(\boldsymbol{\xi}, \boldsymbol{y}) = -\log p(\boldsymbol{y} \mid \boldsymbol{\theta}, \boldsymbol{\xi})$. The learning rate $\omega \neq 1$ determines how much one relies on the statistical model for the Gibbs posterior update. 

\paragraph{Score Matching} The score function corresponding to an outcome distribution is the gradient of the logarithm of the corresponding density wrt outcomes $\boldsymbol{y}$.
For an outcome distribution with density $p$, we write the corresponding score function, evaluated at a given design-observation pair $(\boldsymbol{\xi}, \boldsymbol{y})$, as $\nabla_{\boldsymbol{y}} \log{p} (\boldsymbol{\xi}, \boldsymbol{y})$. 

Score matching \citep{hyvarinen} is an inferential framework in which one selects parameter values that minimise the Fisher divergence between the score functions of the statistical model and true DGP.  
This is particularly useful when the statistical model contains intractable normalising constants that cannot be evaluated, as is common in many real-world problems; evaluating the score function of a model does not require computing such constants. Using $\pmodel$ and $\pdata$ to refer, respectively, to the densities characterising the statistical model and true DGP, the score matching loss is \citep{robustgpr} 
\begin{align*}
\ell_{\boldsymbol{\theta}}(\boldsymbol{\xi}, \boldsymbol{y}) & = \lVert r\left(\nabla_{\boldsymbol{y}}\log \pmodel(\boldsymbol{\xi}, \boldsymbol{y}) - \nabla_{\boldsymbol{y}}\log \pdata(\boldsymbol{\xi}, \boldsymbol{y}) \right)\rVert^{2}_{2},
\end{align*}
where $r : \mathcal{X} \times \mathcal{Y} \rightarrow \mathbb{R}_{\neq 0}$ is an optional weighting function that can lead to improved robustness \citep{altamirano23a, robustgpr}. 

Notice that the score matching loss requires the true DGP $\pdata$, which one does not have access to in practice.
The dependence on $\pdata$ can be avoided under certain regularity conditions through integration by parts \citep{liu2022estimating, altamirano23a, robustgpr}, making score matching useful in practice. We leave the computable form of the score matching loss to \Appref{scorematchingappendix}. 

\paragraph{Weighted Score Matching}\label{sec:weighted-sm} Extending (unweighted) score matching, one can introduce a weighting function $r$ to induce robustness in the computed loss. \citet{robustgpr} propose the inverse multi-quadric (IMQ) kernel (``bump function'') as a way of dealing with outliers in data. The IMQ kernel function $r_{\mathrm{IMQ}} : \mathcal{X} \times \mathcal{Y} \rightarrow \mathbb{R}_{> 0}$ relies on a centring function $\gamma : \mathcal{X} \rightarrow \mathbb{R}$, shrinking function $c : \mathcal{X} \rightarrow \mathbb{R}_{> 0}$, and learning rate $\omega > 0$: 
\begin{equation}
\label{rimq}
r_{\mathrm{IMQ}}(\boldsymbol{\xi}, \boldsymbol{y}) = \omega \left(1 + \frac{\left(\boldsymbol{y} - \gamma(\boldsymbol{\xi})\right)^{2}}{c(\boldsymbol{\xi})^{2}}\right)^{-\frac{1}{2}}.
\end{equation}
$\omega$ is the largest possible weight that can be assigned by the kernel, $\gamma$ controls the position of the bump ($\boldsymbol{y}$ values far from $\gamma$ are downweighted), and $c$ determines how quickly observations are downweighted.
The effectiveness of $r_{\mathrm{IMQ}}$ depends on the choice of $\gamma$ and $c$. 
\citet{laplante2025robustconjugatespatiotemporalgaussian} suggest to use the posterior predictive mean and variance for $\gamma$ and $c^{2}$ respectively.

\section{Generalised Bayesian Optimal Experimental Design}
\label{sec_GBOED}

Our proposed framework, Generalised Bayesian Optimal Experimental Design (GBOED), is an extension of BOED to the generalised Bayesian inference setting. In traditional BOED, we seek to select designs $\boldsymbol{\xi}^{*}$ that maximise the Bayesian EIG (BEIG). In GBOED, we use generalised Bayesian inference to update beliefs about our parameters of interest, and so the amount of ``information gained'' is a function of the Gibbs posterior. Here, we introduce the \textit{Gibbs EIG}, a measure of the expected information gained within the Gibbs inference framework.
\Cref{definitiongibbseig} shows that, analogously to the BEIG, our definition of the Gibbs EIG can be interpreted as the KL divergence from the Gibbs posterior to the prior. \Cref{theorem1} shows that computation of the Gibbs EIG avoids the need for expensive posterior computations for the utility.

Throughout, we assume access to a (possibly) misspecified model $p(\boldsymbol{y} \mid \boldsymbol{\theta}, \boldsymbol{\xi})$ as in traditional Bayesian inference. This is used in \Cref{theorem1} to compute the Gibbs EIG tractably, and in our chosen scoring rules as the model we want to make robust inferences over.

\subsection{Additional Notation}

\Cref{sec:boed} introduced the KL divergence to quantify the amount of expected information gained in the Bayesian framework. Our aim is to construct an analogous measure for Gibbs inference. However, the notion of the expected information gained requires reasoning about an expectation wrt a distribution of outcomes. In Bayesian inference, the expected outcome distribution is characterised by the marginal and conditional outcome distributions, $p(\boldsymbol{y} \mid \boldsymbol{\xi})$ and $p(\boldsymbol{y} \mid \boldsymbol{\theta}, \boldsymbol{\xi})$, both derived from the likelihood. In Gibbs inference, the absence of a likelihood precludes access to an expected outcome distribution, making the BEIG in \Cref{sec:boed} inapplicable. Therefore, in order to reason about the information one expects to gain under the Gibbs framework, we require, in addition to the generalised inference framework, a generalised information-theoretic framework.

Let the denominator of \Cref{gibbsposteriorrule} be called the \textit{marginal generalised likelihood} $\widetilde{\pi}(\boldsymbol{y} \mid \boldsymbol{\xi})$.
Throughout, we use tildes to denote quantities characterising entities we refer to as \textit{pseudo-random variables (pseudo-rvs)} that imply a stochastic sampling process. 

\begin{definition}[Pseudo-rv]\label{def:pseudo-rv}
    A pseudo-rv is a function $\boldsymbol{\widetilde{Z}}: \mathcal{Z} \rightarrow \mathbb{R}$ on a finite measure space $\left( \mathcal{Z}, \Sigma, \widetilde{\Pi} \right)$, where, for an event $\zeta \in \Sigma$ and pseudo-pdf $\widetilde{\pi} : \mathcal{Z} \rightarrow \mathbb{R}_{\geq 0}$, $$\widetilde{\Pi}(\zeta) = \int_{\boldsymbol{z} \in \zeta} \widetilde{\pi}(\boldsymbol{z}) d\boldsymbol{z}.$$
\end{definition}

We now introduce notation specific to Gibbs inference, where we use loss functions that may not respect the properties of pdfs. The pseudo-rv $\boldsymbol{\widetilde{Y} \mid \boldsymbol{\xi}}$ (the generalised Bayesian counterpart to $\boldsymbol{Y \mid \boldsymbol{\xi}}$) satisfies \Cref{def:pseudo-rv} and is characterised by a loss function-based pseudo-pdf. Although one does not require pseudo-rvs in conducting inference on real-world data, they are employed to formulate the Gibbs EIG as in \Secref{gibbseigsubsection}. Again, this is because the marginal and conditional distributions of outcomes are absent in Gibbs inference, and so we formalise the stochasticity in the realisation of outcomes as pseudo-rvs. 

\Cref{def:pseudo-exp} below generalises the notion of an expectation to marginalise across the stochasticity implied by a pseudo-rv.
\begin{definition}[Pseudo-expectation $\widetilde{\mathbb{E}}$]\label{def:pseudo-exp}
        The pseudo-expectation $\pseudoexp{\widetilde{\pi}(\boldsymbol{x})}{f(\boldsymbol{x})}$ of a function $f : \mathcal{X} \rightarrow \mathbb{R}$ wrt a pseudo-pdf $\widetilde{\pi} : \mathcal{X} \rightarrow \mathbb{R}_{\geq 0}$ is
        $$\pseudoexp{\widetilde{\pi}(\boldsymbol{x})}{f(\boldsymbol{x})} \coloneqq \int_{\mathcal{X}} f(\boldsymbol{x}) \widetilde{\pi}(\boldsymbol{x}) d\boldsymbol{x}.$$
    \end{definition}
Notice that in the special case $\widetilde{\pi}$ is a standard pdf, \Cref{def:pseudo-exp} is the expectation of $f$ wrt the distribution characterised by $\widetilde{\pi}$.

\subsection{Measures of Gibbs information}

Our first challenge is to define ``information-theoretic-like'' measures of unexpectedness and divergence within Gibbs inference. These measures enable the construction of utility functions based on loss functions, which can be evaluated without costly normalising constants (see \Appref{nmcappendix}).

\begin{definition}[Pseudo-KL divergence]\label{def:pseudo-ent}
        The pseudo-KL divergence $\pseudokl{\widetilde{\pi}(\boldsymbol{x})}{f(\boldsymbol{x})}$ from a pseudo-pdf 
        $\widetilde{\pi} : \mathcal{X} \rightarrow \mathbb{R}_{\geq 0}$ to a function $f : \mathcal{X} \rightarrow \mathbb{R}_{\geq 0}$ is
        $$\pseudokl{\widetilde{\pi}(\boldsymbol{x})}{f(\boldsymbol{x})} \coloneqq \pseudoexp{\widetilde{\pi}(\boldsymbol{x})}{\log{\frac{\widetilde{\pi}(\boldsymbol{x})}{f(\boldsymbol{x})}}}.$$
\end{definition}
\Cref{def:pseudoproduct} is used to define the pseudo-mutual information (\Cref{def:pseudo-mi}), which will enable an analogue to the EIG under the Gibbs inference setting.

\begin{definition}[Pseudo-joint density]\label{def:pseudoproduct}
        The pseudo-joint density implied by a pseudo-pdf $\widetilde{\pi} : \mathcal{Y} \rightarrow \mathbb{R}_{\geq 0}$ is
        $$\widetilde{\pi}\left( \boldsymbol{x}, \boldsymbol{y} \right) \coloneqq \pi(\boldsymbol{x} \mid \boldsymbol{y}) \times \widetilde{\pi}(\boldsymbol{y}),$$
        where $\pi(\boldsymbol{x} \mid \boldsymbol{y})$ is a pdf of an rv $\boldsymbol{X}$ that takes values $\boldsymbol{x}$ and which may depend on $\boldsymbol{y}$.
\end{definition}

\begin{definition}[Pseudo-mutual information]\label{def:pseudo-mi}
        Take an rv $\boldsymbol{X}$ with pdf $\pi(\boldsymbol{x})$ and a pseudo-rv $\widetilde{\boldsymbol{Y}}$ with pseudo-pdf $\widetilde{\pi}(\boldsymbol{y})$. The pseudo-mutual information is written as
        $$\pseudomi{\boldsymbol{X}}{\widetilde{\boldsymbol{Y}}} \coloneqq \pseudokl{\widetilde{\pi}\left( \boldsymbol{x}, \boldsymbol{y} \right)}{\pi(\boldsymbol{x})\widetilde{\pi}(\boldsymbol{y})}.$$
\end{definition}

\subsection{Gibbs Expected Information Gain}
\label{gibbseigsubsection}

We define the Gibbs EIG as the pseudo-mutual information between $\boldsymbol{\Theta}$ and $\boldsymbol{\widetilde{Y} \mid \boldsymbol{\xi}}$. 
In so doing, we both remain consistent with information-theoretic design selection, and generalise the BEIG to Gibbs posteriors, in the sense that we recover the BEIG under the negative log-likelihood loss and $\omega = 1$.

\begin{definition}[Gibbs EIG] \label{definitiongibbseig}
The Gibbs EIG is the pseudo-mutual information between $\boldsymbol{\Theta}$ and $\boldsymbol{\widetilde{Y} \mid \boldsymbol{\xi}}$
\begin{align}
\label{gibbseig}
 \mathrm{EIG}_{\mathrm{Gibbs}}(\boldsymbol{\xi}) & = \pseudomi{\boldsymbol{\Theta}}{\boldsymbol{\widetilde{Y}} \mid \boldsymbol{\xi}} \\
 & = \widetilde{\mathbb{E}}_{\pi(\boldsymbol{\theta}, \boldsymbol{y} \mid \boldsymbol{\xi})}\left[\log \left( \frac{\pi(\boldsymbol{\theta}, \boldsymbol{y} \mid \boldsymbol{\xi})}{\pi(\boldsymbol{\theta})\widetilde{\pi}(\boldsymbol{y} \mid \boldsymbol{\xi})} \right)\right],
\end{align}
where $\pi(\boldsymbol{\theta}, \boldsymbol{y} \mid \boldsymbol{\xi}) = \pi(\boldsymbol{\theta} \mid \boldsymbol{y}, \boldsymbol{\xi})\widetilde{\pi}(\boldsymbol{y} \mid \boldsymbol{\xi})$.
\end{definition}

Analogously to the Shannon mutual information, the pseudo-mutual information between $\boldsymbol{\Theta}$ and $\boldsymbol{\widetilde{Y}} \mid \boldsymbol{\xi}$ is equivalent to a pseudo-expectation of the KL divergence from the Gibbs posterior to the prior wrt $\widetilde{\pi}(\boldsymbol{y} \mid \boldsymbol{\xi})$. See more in \Appref{proposition1proof}. The pseudo-mutual information between $\boldsymbol{\Theta}$ and $\boldsymbol{\widetilde{Y}} \mid \boldsymbol{\xi}$ is also both non-negative and symmetric, the proof of which we defer to \Appref{propertiesproof}.

\paragraph{Computability of the Gibbs EIG}
A great challenge with directly using \Cref{definitiongibbseig} is that we are likely unable to sample from $\pi(\boldsymbol{\theta} \mid \boldsymbol{y}, \boldsymbol{\xi})\widetilde{\pi}(\boldsymbol{y} \mid \boldsymbol{\xi})$. It would be much more convenient and practical to sample directly from a statistical model. To do this, we can express the Gibbs EIG as an expectation wrt the outcome distribution implied by the statistical model. This enables the use of importance sampling in its estimation, as in \Cref{theorem1} below. 

\begin{theorem}
    The Gibbs EIG can be expressed as \label{theorem1}
\begin{equation}
\label{theoremeq}
\mathrm{EIG}_{\mathrm{Gibbs}}(\boldsymbol{\xi}) = \mathbb{E}_{\pi(\boldsymbol{\theta})p(\boldsymbol{y} \mid \boldsymbol{\theta}, \boldsymbol{\xi})}\left[\left(-\omega\ell_{\boldsymbol{\theta}}(\boldsymbol{\xi}, \boldsymbol{y}) - \log \widetilde{\pi}(\boldsymbol{y} \mid \boldsymbol{\xi})\right) \cdot \left(\frac{\exp(-\omega\ell_{\boldsymbol{\theta}}(\boldsymbol{\xi}, \boldsymbol{y}))}{p(\boldsymbol{y} \mid \boldsymbol{\theta}, \boldsymbol{\xi})}\right)\right].
\end{equation}
\end{theorem}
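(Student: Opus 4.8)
The plan is to start from the second line of \Cref{definitiongibbseig} and reduce the pseudo-mutual information to a single integral that can be rewritten, by an importance-sampling identity, as an ordinary expectation under the tractable sampling density $\pi(\boldsymbol{\theta})p(\boldsymbol{y} \mid \boldsymbol{\theta}, \boldsymbol{\xi})$. The first step is to unfold the pseudo-joint density. Using the factorisation $\pi(\boldsymbol{\theta}, \boldsymbol{y} \mid \boldsymbol{\xi}) = \pi(\boldsymbol{\theta} \mid \boldsymbol{y}, \boldsymbol{\xi})\widetilde{\pi}(\boldsymbol{y} \mid \boldsymbol{\xi})$ recorded in \Cref{definitiongibbseig}, I would substitute the Gibbs posterior from \Cref{gibbsposteriorrule}. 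Since $\widetilde{\pi}(\boldsymbol{y} \mid \boldsymbol{\xi})$ is precisely the normalising denominator of that posterior, it cancels, leaving the clean identity
$$\pi(\boldsymbol{\theta}, \boldsymbol{y} \mid \boldsymbol{\xi}) = \exp\!\left(-\omega\ell_{\boldsymbol{\theta}}(\boldsymbol{\xi}, \boldsymbol{y})\right)\pi(\boldsymbol{\theta}),$$
i.e. the pseudo-joint is simply the generalised likelihood weighted by the prior.

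Next I would simplify the logarithmic integrand. Applying \Cref{def:pseudo-exp} to expand \Cref{gibbseig} into an integral over $\mathcal{T}\times\mathcal{Y}$, the argument of the logarithm becomes $\exp(-\omega\ell_{\boldsymbol{\theta}}(\boldsymbol{\xi}, \boldsymbol{y}))\pi(\boldsymbol{\theta}) / \bigl(\pi(\boldsymbol{\theta})\widetilde{\pi}(\boldsymbol{y} \mid \boldsymbol{\xi})\bigr)$. The prior cancels and the logarithm separates into $-\omega\ell_{\boldsymbol{\theta}}(\boldsymbol{\xi}, \boldsymbol{y}) - \log\widetilde{\pi}(\boldsymbol{y} \mid \boldsymbol{\xi})$, which is exactly the first bracketed factor appearing in \Cref{theoremeq}.

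The final step is the change of sampling distribution. At this point the integrand is $\exp(-\omega\ell_{\boldsymbol{\theta}}(\boldsymbol{\xi}, \boldsymbol{y}))\pi(\boldsymbol{\theta})$ multiplied by the bracketed log-term, integrated over $\boldsymbol{\theta}$ and $\boldsymbol{y}$. I would multiply and divide the generalised likelihood by the model density $p(\boldsymbol{y} \mid \boldsymbol{\theta}, \boldsymbol{\xi})$, regrouping $\pi(\boldsymbol{\theta})p(\boldsymbol{y} \mid \boldsymbol{\theta}, \boldsymbol{\xi})$ as the sampling density and $\exp(-\omega\ell_{\boldsymbol{\theta}}(\boldsymbol{\xi}, \boldsymbol{y}))/p(\boldsymbol{y} \mid \boldsymbol{\theta}, \boldsymbol{\xi})$ as the importance weight. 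Recognising the resulting double integral as $\mathbb{E}_{\pi(\boldsymbol{\theta})p(\boldsymbol{y} \mid \boldsymbol{\theta}, \boldsymbol{\xi})}[\cdot]$ then delivers \Cref{theoremeq} directly.

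The main obstacle is not algebraic but one of well-definedness: the importance-sampling step requires $p(\boldsymbol{y} \mid \boldsymbol{\theta}, \boldsymbol{\xi}) > 0$ wherever $\exp(-\omega\ell_{\boldsymbol{\theta}}(\boldsymbol{\xi}, \boldsymbol{y}))\pi(\boldsymbol{\theta})$ is nonzero, so that the support of the statistical model covers that of the generalised likelihood and the weight is finite almost everywhere. One should also verify that the integrand is absolutely integrable, so that \Cref{def:pseudo-exp} may be read as an iterated integral and the order of integration and the multiply-divide manipulation are legitimate (Fubini). Finally, I would note that \Cref{assumption1} guarantees $\widetilde{\pi}(\boldsymbol{y}\mid\boldsymbol{\xi})$ is strictly positive and finite, ensuring $\log\widetilde{\pi}(\boldsymbol{y}\mid\boldsymbol{\xi})$ is well-defined throughout.
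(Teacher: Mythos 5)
Your proposal is correct and takes essentially the same approach as the paper: the cancellation identity $\pi(\boldsymbol{\theta})\exp\left(-\omega\ell_{\boldsymbol{\theta}}(\boldsymbol{\xi},\boldsymbol{y})\right) = \pi(\boldsymbol{\theta}\mid\boldsymbol{y},\boldsymbol{\xi})\,\widetilde{\pi}(\boldsymbol{y}\mid\boldsymbol{\xi})$ that you use to collapse the pseudo-joint and simplify the log-ratio is exactly how the paper obtains \Cref{lemma1} (which you derive inline from \Cref{definitiongibbseig} rather than invoking as a separate lemma), and your multiply-and-divide by $p(\boldsymbol{y}\mid\boldsymbol{\theta},\boldsymbol{\xi})$ is precisely the paper's importance-sampling step. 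Your closing remarks on support coverage, integrability/Fubini, and \Cref{assumption1} are sensible regularity housekeeping that the paper leaves implicit.
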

The proof is in \Appref{theorem1proof}. 
\Cref{theorem1} suggests that, like the BEIG \citep{rainforth2018nesting}, the Gibbs EIG can be estimated using a nested Monte Carlo (NMC) estimator.

\begin{definition}[Gibbs EIG NMC estimator] 
The Gibbs EIG estimator is
\begin{align}
\label{nmcestimator}
U^{\mathrm{Gibbs}}_{\mathrm{NMC}}(\boldsymbol{\xi}) & \triangleq \sum_{i = 1}^{N}\left[\left(-\omega\ell_{\boldsymbol{\theta}_{i}}(\boldsymbol{\xi}, \boldsymbol{y}_{i}) - \log\left(\frac{1}{M}\sum_{j = 1}^{M} \exp\left(-\omega\ell_{\boldsymbol{\theta}_{ij}}(\boldsymbol{\xi}, \boldsymbol{y}_{i}) \right)\right)\right) \cdot Z_{\boldsymbol{\theta}_{i}}(\boldsymbol{\xi}, \boldsymbol{y}_{i})\right],
\end{align}
where $$Z_{\boldsymbol{\theta}_{i}}(\boldsymbol{\xi}, \boldsymbol{y}_{i}) = \left(\frac{\exp(-\omega\ell_{\boldsymbol{\theta}_{i}}(\boldsymbol{\xi}, \boldsymbol{y}_{i}))}{p(\boldsymbol{y}_{i} \mid \boldsymbol{\theta}_{i}, \boldsymbol{\xi})}\right) / \left( \mathbf{c}_1\mathbf{c}_2 \right),$$ $\mathbf{c}_1\mathbf{c}_2$ is a suitable weight described in \Appref{ap:normalisation}, and $$\boldsymbol{\theta}_{i}, \boldsymbol{y}_{i} \sim \pi(\boldsymbol{\theta})p(\boldsymbol{y} \mid \boldsymbol{\theta}, \boldsymbol{\xi}), \boldsymbol{\theta}_{ij} \sim \pi(\boldsymbol{\theta}).$$
\end{definition}

\Cref{nmcestimator}, like standard NMC estimators of the BEIG, has computational cost $\mathcal{O}(NM)$ \citep{rainforth2018nesting, foster2019variational}. In many cases, numerical instability occurs as a result of taking the exponential over very large or small loss values. The generalised likelihood $\exp(-\omega\ell_{\boldsymbol{\theta}}(\boldsymbol{\xi}, \boldsymbol{y}))$ is generally also not a pdf, and would therefore normally need to be normalised. 
By suitable specification of the importance ratio $Z$, which is computed as in self-normalised importance sampling \citep{Elvira_2021}, we bypass these issues. As a result, computing \Cref{nmcestimator} does not require the constant that arises from computationally expensive normalisation of $\exp(-\omega\ell_{\boldsymbol{\theta}}(\boldsymbol{\xi}, \boldsymbol{y}))$. Details are left for \Appref{nmcappendix}.

\subsection{Exponential Decay for IMQ Parameters} 
\label{ourimqmethod}

Recall from \Cref{sec:weighted-sm} the weighted score matching loss function, which uses an IMQ kernel to downweight the influence of observations the kernel determines more likely to be outliers \citep{robustgpr}.
To tune the IMQ kernel's parameters and affect its determination that a given observation is an outlier, \citet{laplante2025robustconjugatespatiotemporalgaussian} proposed to specify the centring function $\gamma$ and shrinking function $c$ on the basis of the posterior predictive mean and standard deviation, respectively.

In large-data regimes, the posterior standard deviation (the precision of one's posterior estimate) is often closely connected to the bias in the posterior mean (the accuracy of one's posterior estimate). However, in the small-data regimes that motivate the use of experimental design methods, the precision and accuracy of the posterior estimate may be quite different. The posterior variance typically becomes smaller after each update (precision increases), and so the posterior predictive variance (which recall controls how quickly we downweight observations) too becomes smaller. In the situation where the chosen prior places low probability on the data-generating value of $\boldsymbol{\theta}$, precision increases more quickly than accuracy: More posterior updates would be needed to identify the data-generating value of $\boldsymbol{\theta}$ than to substantially decrease posterior variance. In initial experiments, our predictive mean would be a poor estimator of the centre of the data.
If $c$ decreases too quickly, we place more confidence in a predictive mean that is not a reliable estimator. 

One way of tackling this is to choose $c$ according to a different adaptive method, without relying on the posterior predictive distribution for reasons already described. We select $c$ using exponential decay: We initialise $c$ at a pre-specified value, and decrease it over the course of experimentation according to a pre-determined schedule. 
More specifically, our exponential decay method computes $c$ according to $$c(i) = q_{1} \exp(-b(i - 1)) + q_{2}$$ for experiment $i \in \{1, \ldots, T\}$. $b > 0$ is a rate parameter to be chosen, and $q_{1}, q_{2} > 0$ are parameters controlling the starting and ending values (assuming convergence) of $c$ during experimentation ($q_{1} + q_{2}$ is the value of $c$ for the first experiment). Small values of $b$ allow for small decreases in $c$ per experiment, while large $b$ values cause $c$ to decay and thus converge to $q_{2}$ quickly (see \Figref{bexponentialdecay} in \Appref{exponentialdecappendrate}). $q_{1}$ and $q_{2}$ reflect the distance between one's prior and the true posterior.
Assuming a fixed value of $b$, higher values of $q_{1}$ affect how large the value of $c$ is at the beginning of experimentation. 
The value $q_{2}$ determines the lower bound to which $c$ approaches as the experiment progresses.
When $q_{2}$ is small, observations are more likely to be treated as outliers -- i.e., the loss function focuses more on robustness in the later stages of experimentation. 
The exponential decay method presented is design independent, although one could introduce additional criteria for setting the hyperparameters that depend on the design.

One could instead use another decay method for selecting $c$, such as linear decay.
We advise selecting a decay method that ensures that $c$ does not fall in value too quickly, but not too slowly either in order to allow for robust inference.

\section{Related Work}
\label{sec_relatedwork}

Several approaches have been proposed to tackle model misspecification in BOED. 
Many fall under the $\mathcal{M}$-closed setting, where the true model is assumed to exist amongst a known set of possible models. BOED could be applied to the problem of selecting the model that best explains the data within this set \citep{cavagnaro, Hainy_2022}. In a similar avenue, one could manipulate the utility function to enable robustness to an entire set of models, by taking an expectation over data generated under this set of models \citep{catanach2023metrics}. Another approach is to take an expectation of the utility function under a single alternative model, which, for example, is thought to better capture the true DGP \citep{overstall2022bayesian}. Finally, one could use an alternative acquisition function to select designs that enhance robustness to model misspecification \citep{forstermisspec, tang2025generalizationanalysisbayesianoptimal}. GBOED not only enables robustness in design, but also in inference through generalised Bayesian inference.

The idea of using Gibbs inference to perform experimental design was first proposed by \citet{overstall2023gibbsoptimaldesignexperiments}. However, their framework requires that an alternative model, coined the \textit{designer distribution}, is made available. This distribution is assumed to be flexible and close to the true DGP, and allows one to compute the expected utility using draws from this distribution. The problem with this version of conducting experimental design is that the assumption usually fails: we are often not able to choose a model that we know is certainly close to the true DGP. Our approach avoids making this assumption, using Gibbs inference (informed by a -- possibly misspecified -- statistical model) to induce robustness into the experimental design procedure. Here, we are open to the possibility that, while misspecified, the statistical model has information relevant to an experimenter, and is our best understanding of how reality operates. This enables the use of loss functions that can directly take the statistical model into account when conducting Gibbs inference, in particular, through scoring rules \citep{dawid2014theory, Giummol__2018}. In addition, we compute the expected utility in an information-theoretic fashion using Gibbs measures, rather than straightforwardly taking an expectation wrt the statistical model (as one would do following \citet{overstall2023gibbsoptimaldesignexperiments}; see \Appref{importanceofimportanceweight} for a comparison between our approach and \citet{overstall2023gibbsoptimaldesignexperiments}). 

Our approach also departs from that of \citet{overstall2023gibbsoptimaldesignexperiments} in that \citet{overstall2023gibbsoptimaldesignexperiments} make a normal approximation of the Gibbs posterior -- utilising this approximation both in performing inference and in computing the expected utility. Although there are a number of scenarios under which normal approximations in the misspecified setting are viable (see \citet{bochkina2023bernstein} for a review), they generally require access to a large enough dataset for the approximation to be valid. The requirement of a large dataset is usually not satisfied in the experimental design setting. 

\section{Experiments}
\label{sec_experiments}

We empirically compare GBOED to the standard BOED approach across three experimental design problems of varying difficulty. In the \textit{linear regression} setting, the learner assumes a linear model with Gaussian errors and selects covariates to estimate coefficients. In the \textit{pharmacokinetics} setting, the learner uses a pharmacokinetic (PK) model \citep{RYAN201445} to study drug concentration over time, choosing administration times for a small patient cohort to learn model parameters. The \textit{location finding} setting presents a high-dimensional challenge: the task is to infer the positions of two objects in a $d$-dimensional space from signal intensities observed at selected points (stronger signals occur nearer the objects). We further test robustness under two misspecified scenarios: \textit{Asymmetric Outliers} (outlier-contaminated data) and \textit{Misspecified Error Variance} (incorrect noise model). Rather than using normal approximations of the posterior as in \citet{overstall2023gibbsoptimaldesignexperiments}, we opt for (generalised) variational inference \citep{knoblauch}. Here, one specifies a variational family and approximates the posterior with the member of the variational family that most closely resembles it. Experimental details of this, alongside other details, appear in \Appref{experimentdetails}.
Details of selecting the learning rate $\omega$ are deferred to \Appref{learningrateselection}.

We provide results of our proposed GBOED framework (Gibbs EIG + Gibbs inference) under various loss functions, helping understand the benefits of each loss. We also perform ablation studies to understand the effect of using the novel Gibbs EIG for design selection: we additionally compare GBOED to the use of Gibbs inference combined with alternative acquisition functions. In the tables/figures, \textit{\textcolor{blindred}{Random}} and \textit{\textcolor{blindred}{BEIG}} denote, respectively, random and BEIG-based design selection combined with Gibbs inference under the specified loss. The comparisons here isolate the impact of each of the inference method, acquisition function, and loss function, clarifying which factors drive performance.

The performance of each method is evaluated using the root mean square error (RMSE), maximum mean discrepancy (MMD) \citep{mmd}, and (negative) log-likelihood (NLL) between values sampled from the predictive distribution and those from the true DGP (see \Appref{metrics} for more details). We also provide qualitative accounts of performance, such as that in \Figref{locfindingplot}.

\begin{figure}
    \centering
    \includegraphics[width=\linewidth]{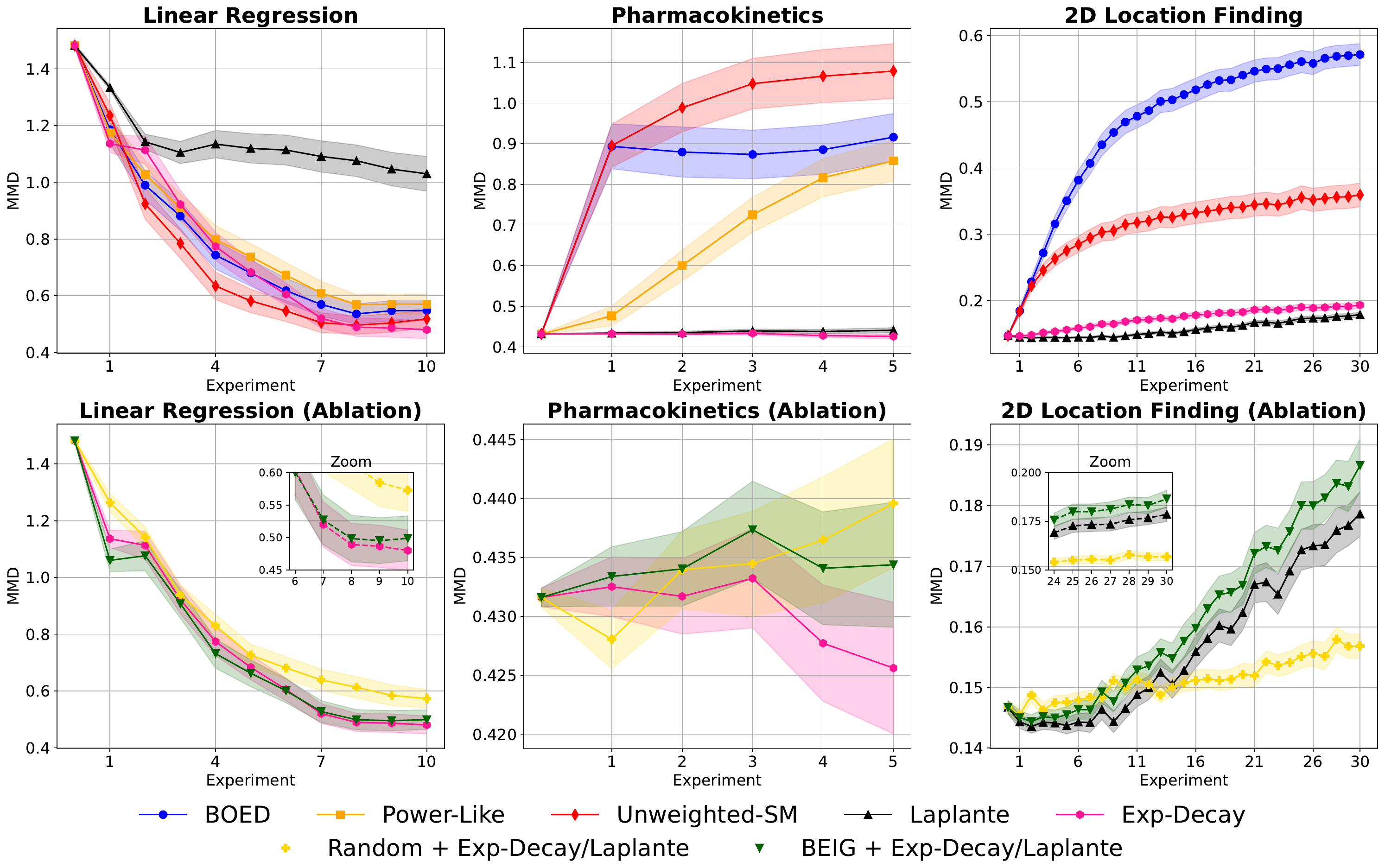}
    \caption{Methods compared under the asymmetric outlier scenario on three experimental design problems (columns); mean MMD (line) during experimentation and standard error (SE) shaded. Insets are zoomed in versions of the plots. Top row displays the outlier scenario using different loss functions. Bottom row displays the outlier scenario for GBOED but with alternative acquisition functions.}
    \label{allplotsmmd}
\end{figure}

\begin{table*}[ht!]
\centering
\begin{adjustbox}{max width=\textwidth}
\begin{tabular}{lccccccc}
\toprule
\multirow{2}{*}{\textbf{Method}} & \multicolumn{2}{c}{$d = 2$} & \multicolumn{2}{c}{$d = 4$} & \multicolumn{2}{c}{$d = 8$} \\
\cmidrule(lr){2-3} \cmidrule(lr){4-5} \cmidrule(lr){6-7} 
 & \textbf{MMD} & \textbf{NLL} & \textbf{MMD} & \textbf{NLL} & \textbf{MMD} & \textbf{NLL} \\
\midrule
\multicolumn{7}{c}{Well-Specified} \\[0.2em]
BOED & $0.367\,(0.013)$ & $3.075\,(0.095)$ 
     & $0.083\,(0.003)$ & $1.011\,(0.011)$ 
     & $0.007\,(0.000)$ & $0.745\,(0.001)$ \\
GBOED
    & $0.185\,(0.004)$ & $1.333\,(0.018)$ 
    & \textbf{0.037\,(0.001)} & \textbf{0.835\,(0.002)} 
    & \textbf{0.005\,(0.000)} & \textbf{0.740\,(0.000)} \\
\textcolor{blindred}{Random} + \textcolor{blindblue}{Laplante}\textsuperscript{2} 
& \textbf{0.156\,(0.002)} & \textbf{1.199\,(0.009)} 
& \textbf{0.037\,(0.001)} & \textbf{0.835\,(0.002)} 
& \textbf{0.005\,(0.000)} & \textbf{0.740\,(0.000)} \\
\textcolor{blindred}{BEIG} + \textcolor{blindblue}{Laplante}\textsuperscript{2} 
& $0.188\,(0.005)$ & $1.350\,(0.021)$ 
& \textbf{0.037\,(0.000)} & \textbf{0.835\,(0.002)} 
& \textbf{0.005\,(0.000)} & \textbf{0.740\,(0.000)} \\
\midrule
\multicolumn{7}{c}{Asymmetric Outliers} \\[0.2em]
BOED & $0.571\,(0.017)$ & $4.393\,(0.122)$ 
     & $0.285\,(0.005)$ & $1.777\,(0.020)$ 
     & $0.078\,(0.002)$ & $0.959\,(0.006)$ \\
GBOED
    & $0.179\,(0.004)$ & $1.300\,(0.017)$ 
    & \textbf{0.036\,(0.001)} & $0.834\,(0.002)$ 
    & \textbf{0.005\,(0.000)} & \textbf{0.739\,(0.000)} \\
\textcolor{blindred}{Random} + \textcolor{blindblue}{Laplante}\textsuperscript{2} 
& \textbf{0.157\,(0.002)} & \textbf{1.196\,(0.009)} 
& $0.037\,(0.001)$ & $0.834\,(0.002)$ 
& \textbf{0.005\,(0.000)} & $0.740\,(0.000)$ \\
\textcolor{blindred}{BEIG} + \textcolor{blindblue}{Laplante}\textsuperscript{2} 
& $0.187\,(0.004)$ & $1.334\,(0.019)$ 
& \textbf{0.036\,(0.001)} & \textbf{0.831\,(0.002)}
& \textbf{0.005\,(0.000)} & \textbf{0.739\,(0.000)} \\
\midrule
\multicolumn{7}{c}{Misspecified Error Variance} \\[0.2em]
BOED & $0.298\,(0.006)$ & $5.635\,(0.150)$ 
& $0.185\,(0.005)$ & $3.827\,(0.130)$ 
& $0.153\,(0.006)$ & $3.472\,(0.126)$ \\
GBOED
& $0.150\,(0.003)$ & $2.851\,(0.075)$ 
& $0.144\,(0.005)$ & $3.201\,(0.108)$ 
& \textbf{0.150\,(0.006)} & \textbf{3.430\,(0.123)} \\
\textcolor{blindred}{Random} + \textcolor{blindblue}{Laplante}\textsuperscript{2} 
& $0.151\,(0.003)$ & $2.890\,(0.080)$ 
& $0.145\,(0.005)$ & $3.231\,(0.109)$ 
& \textbf{0.150\,(0.006)} & \textbf{3.430\,(0.123)} \\
\textcolor{blindred}{BEIG} + \textcolor{blindblue}{Laplante}\textsuperscript{2} 
& \textbf{0.147\,(0.004)} & \textbf{2.816\,(0.075)} 
& \textbf{0.142\,(0.005)} & \textbf{3.178\,(0.107)}
& \textbf{0.150\,(0.006)} & \textbf{3.430\,(0.123)} \\
\bottomrule
\end{tabular}
\end{adjustbox}
\caption{Mean ($\pm$ SE) MMD/NLL over 100 runs for $d$-dimensional location-finding (well- and misspecified); best in bold. Full results in \Appref{locfindingappendix}. GBOED uses the \cite{laplante2025robustconjugatespatiotemporalgaussian} method for selecting the parameters in $r_{\mathrm{IMQ}}$ for the weighted score matching loss function.
\textsuperscript{2}\textcolor{blindred}{Acquisition} + \textcolor{blindblue}{Gibbs Loss}.}
\label{locfindingresults}
\end{table*}

\paragraph{Summary of Results} 
In the well-specified case, GBOED is comparable to BOED and still offers relatively strong inferences. This is helpful in the (unlikely) circumstance that the model is well-specified.  
Since GBOED is motivated and designed specifically for misspecified settings, we here focus on misspecification. We defer further discussion on the well-specified setting to \Appref{regresspharmacoapp}.

The results in \Figref{allplotsmmd}, \Tableref{locfindingresults}, and \Appref{regresspharmacoapp} show that GBOED using scoring rules with well-chosen hyperparameters leads to enhanced predictive performance compared to BOED. Overall, GBOED is more capable than BOED at tackling misspecification. Our ablation study suggests that the performance of GBOED can be attributed to the Gibbs EIG, in addition to Gibbs inference, in at least two of the experimental design problems. In particular, weighted score matching -- where $c$ is selected according to our exponential decay method or the method proposed by \citet{laplante2025robustconjugatespatiotemporalgaussian} -- leads to the best performance. 
The results in \Figref{allplotsmmd} for linear regression show that our proposed exponential decay method leads to more favourable performance than the \citet{laplante2025robustconjugatespatiotemporalgaussian} IMQ parameter tuning method because there are gradual decreases in $c$, rather than rapid ones. This is useful when the true posterior of the model’s functional form is far from the initial prior. 
When the two are close, exponential decay remains competitive to the \citet{laplante2025robustconjugatespatiotemporalgaussian} method. \Appref{additionalexperappendix} contains a comprehensive set of results and additional details.

\paragraph{Isolating the Effect of the Gibbs EIG} 
In the presence of misspecification, the Gibbs EIG leads to better predictive performance than using the BEIG or Random in the linear regression and PK settings (see \Figref{allplotsmmd}).
This may be a result of design selection and parameter inference complementing one another (BEIG uses Bayesian posteriors, and the Gibbs EIG uses Gibbs posteriors). It could also be that the Gibbs EIG queries designs that better deal with observation error. In the location finding setting, the resulting performance varies according to the choice of acquisition function as the dimensionality rises, but the Gibbs EIG performs better than the BEIG on average when $d = 2$ (see \Tableref{locfindingresults}). 
In \Appref{beignllregression}, we provide results showing that performing Gibbs inference on a dataset acquired using BOED does not result in optimal predictive performance. In other words, using GBOED actively during experimentation, rather than BOED and then conducting Gibbs inference on the final dataset, can result in significantly improved performance.

\paragraph{Exploratory Behaviour of the Gibbs EIG}
Randomly querying designs (total exploration) can be a natural strategy in the absence of prior knowledge, and can perform better than the BEIG under misspecification \citep{sloman2022characterizingrobustnessbayesianadaptive, tang2025generalizationanalysisbayesianoptimal}. We leave a qualitative comparison between Random and the Gibbs EIG on the location finding problem for \Appref{randomdesigngibbs}, in which we show that the explorative nature of the Gibbs EIG depends on the loss function. 
In general, the Gibbs EIG exhibits strong exploration capabilities, which can be improved by tuning the learning rate $\omega$. In the linear regression setting, the BEIG favours designs at the extremes of the design space, whilst the Gibbs EIG tends to query further away from the extremes. See \Appref{visualgibbs1} and \Appref{visualgibbs2} for the Gibbs EIG over the design spaces in the linear regression and PK settings, respectively.
    
We can visualise how BOED and GBOED explore on the location finding problem through \Figref{locfindingplot}, offering an alternative view on how ``good" the curated dataset is, instead of making judgements based on predictive performance metrics. We find that designs selected by BOED can cluster around the wrong region of the design space when outliers are present in the data stream (top right). However, GBOED has the ability to avoid this clustering by instead exploring more in regions where one may have faced an outlier. This exploratory behaviour in turn prevents the drop in predictive performance exhibited by BOED. While the method of \citet{laplante2025robustconjugatespatiotemporalgaussian} may score well on our metrics, in contrast, the dataset obtained via exponential decay explores more of the design space (see \Appref{randomdesigngibbs}).

\paragraph{GBOED in Higher Dimensions} GBOED is more powerful than BOED as the dimensionality $d$ of the designs and parameters to be learnt rises. \Tableref{locfindingresults} contains the results from performing GBOED with the \citet{laplante2025robustconjugatespatiotemporalgaussian} method, which we found offers the strongest predictive performance. This interestingly occurs both in well-specified and misspecified settings.

However, our ablation study shows that the difference in performance may be due to Gibbs inference rather than the Gibbs EIG: the Gibbs EIG does not always outperform other acquisition functions with Gibbs inference. The performance of GBOED improves when the learning rate is small (causing smaller deviations from the prior in the computed posterior), which can improve the performance of using the Gibbs EIG over alternatives (see \Appref{sensitlocfindapp}). 

For tackling outliers, selecting designs at random appears to perform best in 2D location finding, perhaps as a result of the constrained design space. This does not continue as $d$ increases. On the other hand, when the noise model is incorrect, using the BEIG performs better than the Gibbs EIG and Random, suggesting that heavy exploitation is favourable. 
\citet{ivanova-phd} mention that variational inference, which we use to approximate posteriors, for location finding is quite far from being optimal, even more so through myopically maximising the EIG. Avoiding variational inference may improve the performance of GBOED, considering that this may be why BOED fails even in well-specified cases (as we see in \Tableref{locfindingresults}).

\section{Discussion}
\label{sec_discussion}

We introduced GBOED, a framework for performing sequential experimental design in the face of model misspecification. GBOED uses generalised Bayesian inference for improved parameter inference, and the Gibbs EIG to select an optimal sequence of designs for experimentation. Empirical results suggest that, consistent with prior literature, the BEIG leads to suboptimal performance in the presence of model misspecification. In these cases, the Gibbs EIG induces more exploration of the design space, usually leading to more robust design selections and inferences than the BEIG. With this framework, scientists can now both robustly select designs and conduct inference with a possibly misspecified model. 

Our framework is not without its limitations, which could be addressed in future work. Firstly, our importance sampling regime in \Cref{theorem1} can have repercussions if the statistical model is not a suitable proposal to compute the Gibbs EIG. This leads to issues with high variance and numerical instability. In this case, one may wish to use an alternative distribution that makes a better proposal. 
Score matching and many other scoring rules are closely related to the statistical model, reducing the possibility of encountering such issues. Secondly, we could use a better approximation method for computing the Gibbs EIG, knowing that the NMC estimator has a slow convergence rate and can instead be replaced by variational estimators \citep{foster2019variational}. Thirdly, GBOED relies on a well-chosen learning rate; we still lack a method suitable for the experimental design setting to select this. Lastly, our framework is not so easily scalable to complicated and high-dimensional experimental design problems, as explained in the context of the location finding problem. Recent advances in amortisation and learning policies \citep{foster2021deep, blau2022optimizing} can aid in selecting designs non-myopically, with few works investigating (prior and/or model) misspecification and generalisability in amortised experimental design settings \citep{ivanova2024stepdad, barlasrl, tang2025generalizationanalysisbayesianoptimal}.

\subsubsection*{Acknowledgements}

The authors thank Ayush Bharti, François-Xavier Briol, Timothy Waite, and Zachris Björkman for helpful comments and feedback on early versions of this work. YZB was supported by a departmental studentship at The University of Manchester. SJS and SK were supported by the UKRI Turing AI World-Leading Researcher Fellowship [EP/W002973/1]. All experimental results were gathered through the Computational Shared Facility at The University of Manchester.

\bibliography{references}
\bibliographystyle{apalike}

\appendix

\section{Gibbs Expected Information Gain Proofs and Properties}
\label{proofs}

In this appendix, we detail certain proofs for results either mentioned in the main paper, or that would be helpful to derive other proofs. The properties of the Gibbs EIG are also covered in full.

For convenience, $\pi(\boldsymbol{\theta}, \boldsymbol{y} \mid \boldsymbol{\xi}) = \pi(\boldsymbol{\theta} \mid \boldsymbol{y}, \boldsymbol{\xi})\widetilde{\pi}(\boldsymbol{y} \mid \boldsymbol{\xi})$.

\subsection{Proposition 1 -- Pseudo-Expectation of the KL Divergence From the Posterior to the Prior}
\label{proposition1proof}

We mentioned in the main paper that the pseudo-mutual information between $\boldsymbol{\Theta}$ and $\boldsymbol{\widetilde{Y}} \mid \boldsymbol{\xi}$ is equivalent to the pseudo-expected KL divergence from the Gibbs posterior to the prior. \Cref{proposition1} proves that this is true, enabling an alternative interpretation of the Gibbs EIG in terms of KL divergences.

\begin{proposition} \label{proposition1}
The Gibbs EIG is equivalent to the pseudo-expectation of the KL divergence from the Gibbs posterior to the prior with respect to $\widetilde{\pi}(\boldsymbol{y} \mid \boldsymbol{\xi})$ 
\begin{equation}
\label{kldiverge}
\mathrm{EIG}_{\mathrm{Gibbs}}(\boldsymbol{\xi}) = \widetilde{\mathbb{E}}_{\widetilde{\pi}(\boldsymbol{y} \mid \boldsymbol{\xi})}\left[\mathrm{KL}(\pi(\boldsymbol{\theta} \mid \boldsymbol{y}, \boldsymbol{\xi}) \mid \mid \pi(\boldsymbol{\theta}))\right].
\end{equation}
\end{proposition}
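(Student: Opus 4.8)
The plan is to start from the definition of the Gibbs EIG as the pseudo-mutual information $\pseudomi{\boldsymbol{\Theta}}{\boldsymbol{\widetilde{Y}} \mid \boldsymbol{\xi}}$ in \Cref{definitiongibbseig}, unfold the pseudo-expectation into its integral form via \Cref{def:pseudo-exp}, and then exploit the factorisation of the pseudo-joint density $\pi(\boldsymbol{\theta}, \boldsymbol{y} \mid \boldsymbol{\xi}) = \pi(\boldsymbol{\theta} \mid \boldsymbol{y}, \boldsymbol{\xi})\widetilde{\pi}(\boldsymbol{y} \mid \boldsymbol{\xi})$ supplied by \Cref{def:pseudoproduct}. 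Writing the pseudo-mutual information explicitly gives
$$\mathrm{EIG}_{\mathrm{Gibbs}}(\boldsymbol{\xi}) = \int_{\mathcal{Y}} \int_{\mathcal{T}} \pi(\boldsymbol{\theta}, \boldsymbol{y} \mid \boldsymbol{\xi}) \log\!\left( \frac{\pi(\boldsymbol{\theta}, \boldsymbol{y} \mid \boldsymbol{\xi})}{\pi(\boldsymbol{\theta})\widetilde{\pi}(\boldsymbol{y} \mid \boldsymbol{\xi})} \right) d\boldsymbol{\theta}\, d\boldsymbol{y},$$
so that the entire argument reduces to manipulating this single integrand.

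The key step is a cancellation inside the logarithm. Substituting the factorisation into the numerator of the log-ratio, the marginal generalised likelihood $\widetilde{\pi}(\boldsymbol{y} \mid \boldsymbol{\xi})$ appears in both numerator and denominator and cancels, leaving $\pi(\boldsymbol{\theta} \mid \boldsymbol{y}, \boldsymbol{\xi})/\pi(\boldsymbol{\theta})$. Substituting the same factorisation into the leading density and then factoring the double integral — pulling $\widetilde{\pi}(\boldsymbol{y} \mid \boldsymbol{\xi})$, which does not depend on $\boldsymbol{\theta}$, outside the inner integral over $\mathcal{T}$ — rewrites the expression as $\int_{\mathcal{Y}} \widetilde{\pi}(\boldsymbol{y} \mid \boldsymbol{\xi}) \left[ \int_{\mathcal{T}} \pi(\boldsymbol{\theta} \mid \boldsymbol{y}, \boldsymbol{\xi}) \log\!\left( \pi(\boldsymbol{\theta} \mid \boldsymbol{y}, \boldsymbol{\xi})/\pi(\boldsymbol{\theta}) \right) d\boldsymbol{\theta} \right] d\boldsymbol{y}$. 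Interchanging the order of integration here is justified by the finite-measure structure of \Cref{def:pseudo-rv} together with \Cref{assumption1}, which guarantee the relevant integrals are well defined.

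Finally I would identify the two pieces with their named quantities: the inner integral over $\mathcal{T}$ matches \Cref{def:kldiv} exactly and is therefore the \emph{genuine} KL divergence $\mathrm{KL}(\pi(\boldsymbol{\theta} \mid \boldsymbol{y}, \boldsymbol{\xi}) \mid\mid \pi(\boldsymbol{\theta}))$, while the outer integral against $\widetilde{\pi}(\boldsymbol{y} \mid \boldsymbol{\xi})$ is precisely the pseudo-expectation of \Cref{def:pseudo-exp}, yielding \Cref{kldiverge}. The one point deserving care — and the only real subtlety, since the algebra is otherwise routine — is keeping straight which objects are genuine and which are pseudo: the inner quantity is an honest KL divergence because both the Gibbs posterior (normalised by construction in \Cref{gibbsposteriorrule}) and the prior are bona fide pdfs, whereas the outer integration must remain a pseudo-expectation because the marginal generalised likelihood $\widetilde{\pi}(\boldsymbol{y} \mid \boldsymbol{\xi})$ need not integrate to one. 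This asymmetry is exactly what makes the statement a nontrivial analogue of, rather than a direct instance of, the classical identity relating mutual information to the expected posterior-to-prior KL divergence.
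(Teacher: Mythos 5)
Your proof is correct and is essentially the paper's own argument traversed in the opposite direction: the paper starts from the pseudo-expected KL divergence and multiplies numerator and denominator inside the logarithm by $\widetilde{\pi}(\boldsymbol{y} \mid \boldsymbol{\xi})$ to reassemble the pseudo-mutual information, whereas you start from the pseudo-mutual information and cancel $\widetilde{\pi}(\boldsymbol{y} \mid \boldsymbol{\xi})$ to expose the inner KL divergence. The factorisation $\pi(\boldsymbol{\theta}, \boldsymbol{y} \mid \boldsymbol{\xi}) = \pi(\boldsymbol{\theta} \mid \boldsymbol{y}, \boldsymbol{\xi})\widetilde{\pi}(\boldsymbol{y} \mid \boldsymbol{\xi})$, the log-ratio cancellation, and the integral rearrangement are identical in both, so the two proofs coincide step for step.
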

\begin{proof}
Starting from \Cref{kldiverge} and the pseudo-expectation definition in \Cref{def:pseudo-exp}, 
\begin{align*}
\mathrm{EIG}_{\mathrm{Gibbs}}(\boldsymbol{\xi}) & = \widetilde{\mathbb{E}}_{\widetilde{\pi}(\boldsymbol{y} \mid \boldsymbol{\xi})}\left[\mathrm{KL}(\pi(\boldsymbol{\theta} \mid \boldsymbol{y}, \boldsymbol{\xi}) \mid \mid \pi(\boldsymbol{\theta}))\right] \\
& = \int_{\mathcal{Y}}\left(\int_{\mathcal{T}}\log \left(\frac{\pi(\boldsymbol{\theta} \mid \boldsymbol{y}, \boldsymbol{\xi})}{\pi(\boldsymbol{\theta})}\right)\pi(\boldsymbol{\theta} \mid \boldsymbol{y}, \boldsymbol{\xi})d\boldsymbol{\theta}\right)\widetilde{\pi}(\boldsymbol{y} \mid \boldsymbol{\xi}) d\boldsymbol{y} \\
& = \int_{\mathcal{Y}}\int_{\mathcal{T}}\log \left(\frac{\pi(\boldsymbol{\theta} \mid \boldsymbol{y}, \boldsymbol{\xi})}{\pi(\boldsymbol{\theta})}\right)\pi(\boldsymbol{\theta} \mid \boldsymbol{y}, \boldsymbol{\xi})\widetilde{\pi}(\boldsymbol{y} \mid \boldsymbol{\xi}) d\boldsymbol{\theta} d\boldsymbol{y} \\
& = \int_{\mathcal{Y}}\int_{\mathcal{T}}\log \left( \frac{\pi(\boldsymbol{\theta} \mid \boldsymbol{y}, \boldsymbol{\xi})\widetilde{\pi}(\boldsymbol{y} \mid \boldsymbol{\xi})}{\pi(\boldsymbol{\theta})\widetilde{\pi}(\boldsymbol{y} \mid \boldsymbol{\xi})} \right)\pi(\boldsymbol{\theta} \mid \boldsymbol{y}, \boldsymbol{\xi})\widetilde{\pi}(\boldsymbol{y} \mid \boldsymbol{\xi}) d\boldsymbol{\theta} d\boldsymbol{y} \\
& = \widetilde{\mathbb{E}}_{\pi(\boldsymbol{\theta} \mid \boldsymbol{y}, \boldsymbol{\xi})\widetilde{\pi}(\boldsymbol{y} \mid \boldsymbol{\xi})}\left[\log \left( \frac{\pi(\boldsymbol{\theta} \mid \boldsymbol{y}, \boldsymbol{\xi})\widetilde{\pi}(\boldsymbol{y} \mid \boldsymbol{\xi})}{\pi(\boldsymbol{\theta})\widetilde{\pi}(\boldsymbol{y} \mid \boldsymbol{\xi})} \right)\right] \\
& = \widetilde{\mathbb{E}}_{\pi(\boldsymbol{\theta}, \boldsymbol{y} \mid \boldsymbol{\xi})}\left[\log \left( \frac{\pi(\boldsymbol{\theta}, \boldsymbol{y} \mid \boldsymbol{\xi})}{\pi(\boldsymbol{\theta})\widetilde{\pi}(\boldsymbol{y} \mid \boldsymbol{\xi})} \right)\right],
\end{align*}
which is the Gibbs EIG as in \Cref{definitiongibbseig}.
\end{proof}

\subsection{Lemma 1 -- Gibbs EIG in Terms of Loss Functions}
\label{proposition2proof}

\Cref{lemma1} offers an important interpretation of the Gibbs EIG in terms of loss functions, rather than directly using posteriors. This is equivalent to the BEIG being expressed in terms of likelihood functions. The idea is that this is often a cheaper and more convenient way to compute the Gibbs EIG, knowing that posteriors are usually expensive to compute. This is because posteriors do not always have closed-forms, as is often the case in both Bayesian and Gibbs inference.

\begin{lemma} \label{lemma1}
The Gibbs EIG is the difference between the negative loss $-\omega\ell_{\boldsymbol{\theta}}(\boldsymbol{\xi}, \boldsymbol{y})$ and the log marginal generalised likelihood $\log \widetilde{\pi}(\boldsymbol{y} \mid \boldsymbol{\xi})$ in pseudo-expectation with respect to $\pi(\boldsymbol{\theta})\exp\left(-\omega\ell_{\boldsymbol{\theta}}(\boldsymbol{\xi}, \boldsymbol{y}) \right)$
\begin{equation}
\label{gibbseigapp}
\mathrm{EIG}_{\mathrm{Gibbs}}(\boldsymbol{\xi}) = \widetilde{\mathbb{E}}_{\pi(\boldsymbol{\theta})\exp\left(-\omega\ell_{\boldsymbol{\theta}}(\boldsymbol{\xi}, \boldsymbol{y}) \right)}\left[-\omega\ell_{\boldsymbol{\theta}}(\boldsymbol{\xi}, \boldsymbol{y}) - \log \widetilde{\pi}(\boldsymbol{y} \mid \boldsymbol{\xi})\right].
\end{equation}
\end{lemma}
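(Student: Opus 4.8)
The plan is to start from the pseudo-mutual-information form of the Gibbs EIG in \Cref{definitiongibbseig} and simplify it by exploiting the structure of the Gibbs posterior. The single identity that drives the whole argument is that the joint pseudo-density factorises, by \Cref{def:pseudoproduct}, as $\pi(\boldsymbol{\theta}, \boldsymbol{y} \mid \boldsymbol{\xi}) = \pi(\boldsymbol{\theta} \mid \boldsymbol{y}, \boldsymbol{\xi})\widetilde{\pi}(\boldsymbol{y} \mid \boldsymbol{\xi})$; substituting the Gibbs posterior rule \Cref{gibbsposteriorrule} and recognising that $\widetilde{\pi}(\boldsymbol{y} \mid \boldsymbol{\xi})$ is exactly the normalising denominator of that posterior, this product collapses to the unnormalised generalised joint $\pi(\boldsymbol{\theta})\exp(-\omega\ell_{\boldsymbol{\theta}}(\boldsymbol{\xi}, \boldsymbol{y}))$.

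First I would use this identity to rewrite the pseudo-expectation measure appearing in \Cref{definitiongibbseig}: the governing pseudo-pdf $\pi(\boldsymbol{\theta}, \boldsymbol{y} \mid \boldsymbol{\xi})$ is replaced by $\pi(\boldsymbol{\theta})\exp(-\omega\ell_{\boldsymbol{\theta}}(\boldsymbol{\xi}, \boldsymbol{y}))$, which is precisely the integrating weight named in the statement. This step is legitimate under \Cref{def:pseudo-exp}, since a pseudo-pdf need not integrate to one, so the unnormalised generalised joint is an admissible weight. Then I would substitute the same identity into the numerator of the log-ratio inside the pseudo-expectation: the prior $\pi(\boldsymbol{\theta})$ in the numerator cancels against the prior in the denominator, leaving the argument of the logarithm as $\exp(-\omega\ell_{\boldsymbol{\theta}}(\boldsymbol{\xi}, \boldsymbol{y}))/\widetilde{\pi}(\boldsymbol{y} \mid \boldsymbol{\xi})$.

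Finally, applying the usual logarithm rules, $\log(\exp(-\omega\ell_{\boldsymbol{\theta}}(\boldsymbol{\xi}, \boldsymbol{y}))/\widetilde{\pi}(\boldsymbol{y} \mid \boldsymbol{\xi})) = -\omega\ell_{\boldsymbol{\theta}}(\boldsymbol{\xi}, \boldsymbol{y}) - \log \widetilde{\pi}(\boldsymbol{y} \mid \boldsymbol{\xi})$, so that the integrand matches the one claimed in \Cref{gibbseigapp}. Combining the rewritten integrand with the rewritten measure then yields the statement directly. Throughout I would invoke \Cref{assumption1} to guarantee that $\widetilde{\pi}(\boldsymbol{y} \mid \boldsymbol{\xi})$ is strictly positive and finite, so that both the logarithm and the Gibbs posterior are well-defined.

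I do not expect a genuine obstacle here: the result is an algebraic rewriting rather than a substantive inequality or limit argument. The only point demanding care is consistent book-keeping of the identification $\pi(\boldsymbol{\theta}, \boldsymbol{y} \mid \boldsymbol{\xi}) = \pi(\boldsymbol{\theta})\exp(-\omega\ell_{\boldsymbol{\theta}}(\boldsymbol{\xi}, \boldsymbol{y}))$, which is used simultaneously in two roles -- as the pseudo-expectation subscript and inside the logarithm -- so I must ensure the cancellation of $\pi(\boldsymbol{\theta})$ is applied only within the log-ratio and not mistakenly to the measure as well.
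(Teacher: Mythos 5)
Your proposal is correct and follows essentially the same route as the paper's own proof: both rest entirely on the identity $\pi(\boldsymbol{\theta})\exp\left(-\omega\ell_{\boldsymbol{\theta}}(\boldsymbol{\xi}, \boldsymbol{y})\right) = \pi(\boldsymbol{\theta} \mid \boldsymbol{y}, \boldsymbol{\xi})\widetilde{\pi}(\boldsymbol{y} \mid \boldsymbol{\xi})$, applied once inside the log-ratio (where the prior cancels) and once to the integrating pseudo-measure. The only cosmetic difference is your entry point -- you start directly from the pseudo-mutual-information form of \Cref{definitiongibbseig}, whereas the paper enters via the pseudo-expected KL form of \Cref{proposition1} -- but since those two forms are equivalent by construction, both derivations pass through the identical intermediate double integral.
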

\begin{proof}
Using \Cref{proposition1}, we can write the EIG in terms of loss functions knowing that
\begin{equation}
\label{posteriorrule}
\pi(\boldsymbol{\theta})\exp\left(-\omega\ell_{\boldsymbol{\theta}}(\boldsymbol{\xi}, \boldsymbol{y}) \right) = \pi(\boldsymbol{\theta} \mid \boldsymbol{y}, \boldsymbol{\xi})\widetilde{\pi}(\boldsymbol{y} \mid \boldsymbol{\xi}).
\end{equation} We take a pseudo-expectation over the marginal generalised likelihood $\widetilde{\pi}(\boldsymbol{y} \mid \boldsymbol{\xi})$ of the information gain, resulting in our Gibbs EIG measure (the pseudo-mutual information between $\boldsymbol{\Theta}$ and $\boldsymbol{\widetilde{Y}} \mid \boldsymbol{\xi}$). This makes the Gibbs EIG equivalent to the BEIG under the negative log-likelihood loss, where in the Bayesian setting we take an expectation over the marginal likelihood. We also remain within the Gibbs inference framework, enabling the use of \Cref{posteriorrule} for rewriting the Gibbs EIG in various forms, such as what we derive here. \Cref{theorem1} explains how, despite the fact that the Gibbs EIG is defined in terms of pseudo-rvs which cannot be directly sampled from, we can tractably compute the Gibbs EIG using a misspecified statistical model.

By using \Cref{proposition1} and the definition of a pseudo-expectation as in \Cref{def:pseudo-exp}, we have
\begin{align*}
\mathrm{EIG}_{\mathrm{Gibbs}}(\boldsymbol{\xi}) & = \widetilde{\mathbb{E}}_{\widetilde{\pi}(\boldsymbol{y} \mid \boldsymbol{\xi})}\left[\mathrm{KL}(\pi(\boldsymbol{\theta} \mid \boldsymbol{y}, \boldsymbol{\xi}) \mid \mid \pi(\boldsymbol{\theta}))\right] \\
& = \int_{\mathcal{Y}}\left(\int_{\mathcal{T}}\log \left(\frac{\pi(\boldsymbol{\theta} \mid \boldsymbol{y}, \boldsymbol{\xi})}{\pi(\boldsymbol{\theta})}\right)\pi(\boldsymbol{\theta} \mid \boldsymbol{y}, \boldsymbol{\xi})d\boldsymbol{\theta}\right)\widetilde{\pi}(\boldsymbol{y} \mid \boldsymbol{\xi}) d\boldsymbol{y} \\
& = \int_{\mathcal{Y}}\int_{\mathcal{T}}\log \left(\frac{\pi(\boldsymbol{\theta} \mid \boldsymbol{y}, \boldsymbol{\xi})}{\pi(\boldsymbol{\theta})}\right)\pi(\boldsymbol{\theta} \mid \boldsymbol{y}, \boldsymbol{\xi})\widetilde{\pi}(\boldsymbol{y} \mid \boldsymbol{\xi}) d\boldsymbol{\theta} d\boldsymbol{y} \\
& = \int_{\mathcal{Y}}\int_{\mathcal{T}}\log \left(\frac{\pi(\boldsymbol{\theta})\exp\left(-\omega\ell_{\boldsymbol{\theta}}(\boldsymbol{\xi}, \boldsymbol{y}) \right)}{\pi(\boldsymbol{\theta})\widetilde{\pi}(\boldsymbol{y} \mid \boldsymbol{\xi})}\right)\pi(\boldsymbol{\theta} \mid \boldsymbol{y}, \boldsymbol{\xi})\widetilde{\pi}(\boldsymbol{y} \mid \boldsymbol{\xi}) d\boldsymbol{\theta} d\boldsymbol{y} \\
& = \int_{\mathcal{Y}}\int_{\mathcal{T}} \left(-\omega\ell_{\boldsymbol{\theta}}(\boldsymbol{\xi}, \boldsymbol{y}) - \log \widetilde{\pi}(\boldsymbol{y} \mid \boldsymbol{\xi})\right) \pi(\boldsymbol{\theta} \mid \boldsymbol{y}, \boldsymbol{\xi})\widetilde{\pi}(\boldsymbol{y} \mid \boldsymbol{\xi}) d\boldsymbol{\theta} d\boldsymbol{y} \\
& = \int_{\mathcal{Y}}\int_{\mathcal{T}} \left(-\omega\ell_{\boldsymbol{\theta}}(\boldsymbol{\xi}, \boldsymbol{y}) - \log \widetilde{\pi}(\boldsymbol{y} \mid \boldsymbol{\xi})\right) \pi(\boldsymbol{\theta})\exp\left(-\omega\ell_{\boldsymbol{\theta}}(\boldsymbol{\xi}, \boldsymbol{y}) \right) d\boldsymbol{\theta} d\boldsymbol{y} \\
& = \widetilde{\mathbb{E}}_{\pi(\boldsymbol{\theta})\exp\left(-\omega\ell_{\boldsymbol{\theta}}(\boldsymbol{\xi}, \boldsymbol{y}) \right)}\left[-\omega\ell_{\boldsymbol{\theta}}(\boldsymbol{\xi}, \boldsymbol{y}) - \log \widetilde{\pi}(\boldsymbol{y} \mid \boldsymbol{\xi})\right],
\end{align*}
which is exactly the form in \Cref{gibbseigapp}.
\end{proof}

\subsection{Theorem 1 -- Computing the Gibbs EIG with Importance Sampling}
\label{corollproof}
\label{theorem1proof}

Perhaps the most important element of this work is moving from pseudo-rvs to standard rvs for computing and interpreting the Gibbs EIG. This notably enables the use of many standard statistical and computational practices for computing the EIG \citep{foster2019variational}. For example, we can use NMC as explained in \Appref{nmcappendix}.

\textbf{Theorem 1.} \textit{The Gibbs EIG can be expressed as}
\begin{equation*}
\mathrm{EIG}_{\mathrm{Gibbs}}(\boldsymbol{\xi}) = \mathbb{E}_{\pi(\boldsymbol{\theta})p(\boldsymbol{y} \mid \boldsymbol{\theta}, \boldsymbol{\xi})}\left[\left(-\omega\ell_{\boldsymbol{\theta}}(\boldsymbol{\xi}, \boldsymbol{y}) - \log \widetilde{\pi}(\boldsymbol{y} \mid \boldsymbol{\xi})\right) \cdot \left(\frac{\exp(-\omega\ell_{\boldsymbol{\theta}}(\boldsymbol{\xi}, \boldsymbol{y}))}{p(\boldsymbol{y} \mid \boldsymbol{\theta}, \boldsymbol{\xi})}\right)\right].
\end{equation*}
\begin{proof}
Starting from \Cref{lemma1}, we have
\begin{align*}
\mathrm{EIG}_{\mathrm{Gibbs}}(\boldsymbol{\xi}) & = \widetilde{\mathbb{E}}_{\pi(\boldsymbol{\theta})\exp\left(-\omega\ell_{\boldsymbol{\theta}}(\boldsymbol{\xi}, \boldsymbol{y}) \right)}\left[-\omega\ell_{\boldsymbol{\theta}}(\boldsymbol{\xi}, \boldsymbol{y}) - \log \widetilde{\pi}(\boldsymbol{y} \mid \boldsymbol{\xi})\right] \\
& = \int_{\mathcal{T}}\int_{\mathcal{Y}}\left(-\omega\ell_{\boldsymbol{\theta}}(\boldsymbol{\xi}, \boldsymbol{y}) - \log\widetilde{\pi}(\boldsymbol{y} \mid \boldsymbol{\xi})\right)\pi(\boldsymbol{\theta})\exp\left(-\omega\ell_{\boldsymbol{\theta}}(\boldsymbol{\xi}, \boldsymbol{y}) \right) d\boldsymbol{y} d\boldsymbol{\theta}.
\end{align*}
We can then use importance sampling to sample from $\pi(\boldsymbol{\theta})p(\boldsymbol{y} \mid \boldsymbol{\theta}, \boldsymbol{\xi})$, meaning we no longer need pseudo-rvs,
\begin{align*}
& = \int_{\mathcal{T}}\int_{\mathcal{Y}}\left(-\omega\ell_{\boldsymbol{\theta}}(\boldsymbol{\xi}, \boldsymbol{y}) - \log \widetilde{\pi}(\boldsymbol{y} \mid \boldsymbol{\xi})\right) \pi(\boldsymbol{\theta})\exp\left(-\omega\ell_{\boldsymbol{\theta}}(\boldsymbol{\xi}, \boldsymbol{y}) \right) d\boldsymbol{y} d\boldsymbol{\theta} \\
& = \int_{\mathcal{T}}\int_{\mathcal{Y}}\left(-\omega\ell_{\boldsymbol{\theta}}(\boldsymbol{\xi}, \boldsymbol{y}) - \log \widetilde{\pi}(\boldsymbol{y} \mid \boldsymbol{\xi})\right) \left(\frac{\pi(\boldsymbol{\theta})\exp\left(-\omega\ell_{\boldsymbol{\theta}}(\boldsymbol{\xi}, \boldsymbol{y}) \right)}{\pi(\boldsymbol{\theta})p(\boldsymbol{y} \mid \boldsymbol{\theta}, \boldsymbol{\xi})}\right) \pi(\boldsymbol{\theta})p(\boldsymbol{y} \mid \boldsymbol{\theta}, \boldsymbol{\xi}) d\boldsymbol{y} d\boldsymbol{\theta} \\
& = \mathbb{E}_{\pi(\boldsymbol{\theta})p(\boldsymbol{y} \mid \boldsymbol{\theta}, \boldsymbol{\xi})}\left[\left(-\omega\ell_{\boldsymbol{\theta}}(\boldsymbol{\xi}, \boldsymbol{y}) - \log \widetilde{\pi}(\boldsymbol{y} \mid \boldsymbol{\xi})\right) \cdot \left(\frac{\exp(-\omega\ell_{\boldsymbol{\theta}}(\boldsymbol{\xi}, \boldsymbol{y}))}{p(\boldsymbol{y} \mid \boldsymbol{\theta}, \boldsymbol{\xi})}\right)\right],
\end{align*}
which is now an expectation over the statistical model and exactly \Cref{theoremeq}.
\end{proof}

\subsection{Properties of the Gibbs EIG}
\label{propertiesproof}

We determine whether the Gibbs EIG satisfies the non-negativity and symmetric properties that the standard mutual information has. We start by repeating the pseudo-mutual information between $\boldsymbol{\Theta}$ and $\boldsymbol{\widetilde{Y}} \mid \boldsymbol{\xi}$ in the Gibbs inference setting.

\paragraph{Relation to Mutual Information} The Gibbs EIG can equivalently have the following form, matching the pseudo-mutual information between $\boldsymbol{\Theta}$ and $\boldsymbol{\widetilde{Y}} \mid \boldsymbol{\xi}$ through Gibbs inference $$\pseudomi{\boldsymbol{\Theta}}{\boldsymbol{\widetilde{Y}} \mid \boldsymbol{\xi}} = \widetilde{\mathbb{E}}_{\pi(\boldsymbol{\theta}, \boldsymbol{y} \mid \boldsymbol{\xi})}\left[\log \left( \frac{\pi(\boldsymbol{\theta}, \boldsymbol{y} \mid \boldsymbol{\xi})}{\pi(\boldsymbol{\theta})\widetilde{\pi}(\boldsymbol{y} \mid \boldsymbol{\xi})} \right)\right].$$

\paragraph{Non-Negativity} We can rewrite the information gain as a KL divergence between the posterior and the prior as shown in \Cref{proposition1}, which is non-negative. It follows that as the KL divergence is non-negative, so too is the Gibbs EIG. 

\paragraph{Symmetry} We need to show that the pseudo-mutual information between $\boldsymbol{\Theta}$ and $\boldsymbol{\widetilde{Y}} \mid \boldsymbol{\xi}$ is the same as the pseudo-mutual information between $\boldsymbol{\widetilde{Y}} \mid \boldsymbol{\xi}$ and $\boldsymbol{\Theta}$. In other words, that $$\pseudomi{\pi(\boldsymbol{\theta} \mid \boldsymbol{y}, \boldsymbol{\xi})\widetilde{\pi}(\boldsymbol{y} \mid \boldsymbol{\xi})}{\pi(\boldsymbol{\theta})\widetilde{\pi}(\boldsymbol{y} \mid \boldsymbol{\xi})} = \pseudomi{\pi(\boldsymbol{\theta})\exp\left(-\omega\ell_{\boldsymbol{\theta}}(\boldsymbol{\xi}, \boldsymbol{y}) \right)}{\widetilde{\pi}(\boldsymbol{y} \mid \boldsymbol{\xi})\pi(\boldsymbol{\theta})}.$$ Since $\pi(\boldsymbol{\theta})\exp\left(-\omega\ell_{\boldsymbol{\theta}}(\boldsymbol{\xi}, \boldsymbol{y}) \right) = \pi(\boldsymbol{\theta} \mid \boldsymbol{y}, \boldsymbol{\xi})\widetilde{\pi}(\boldsymbol{y} \mid \boldsymbol{\xi})$, the symmetric property holds.

\subsection{Interpreting the Gibbs Expected Information Gain}
\label{interpretgibbs}

The Gibbs EIG depends heavily on the choice of loss function and learning rate. In this subsection, we discuss how the Gibbs EIG varies depending on the choice of $c$ in $r_{\mathrm{IMQ}}$, or \Cref{rimq}, for the weighted score matching loss. A comparison of how the learning rate $\omega$ affects the Gibbs EIG is reserved for \Figref{eigcomparisonlearningr} in \Appref{learningrateselection}. 

We will first discuss how to interpret the Gibbs EIG. From \Cref{proposition1}, the Gibbs EIG is equivalent to the (pseudo-)expected KL divergence from the posterior to the prior, analogously to the BEIG. Ultimately, this means that the two would normally be equivalent if the posterior were the same in both the Gibbs EIG and BEIG -- though this is generally never the case because of how we compute posteriors. The differently computed posterior causes the Gibbs EIG to be a transformation of the BEIG, controlled by the choice of learning rate and loss function. In the context of dealing with misspecification, rather than seeking informative designs according to a Bayesian posterior, now one seeks informative designs according to Gibbs posteriors -- allowing parameter inference and design selection to complement one another. We offer a comprehensive set of results in \Appref{additionalexperappendix} focusing on how different learning rates and loss functions affect design selection and inference. 

\Figref{ceig_combined} displays the effect of changing $c$ on the Gibbs EIG under the weighted score matching loss, with $\omega = 1$. A ``poor" prior refers to simply using a unit Gaussian as a prior, which generally places very low probability on the true parameter values (at least in the well-specified case). A ``good" prior refers to a prior much closer to what the true posterior should be, having lower variance as a result. The EIG surfaces generally have similar qualitative shapes regardless of whether the prior is close to the true posterior or not.

The unweighted score matching loss and weighted score matching loss when $c = 10$ both exhibit behaviour similar to the BEIG, querying the extremes of the design space. As $c \to \infty$, the fraction in $r_{\mathrm{IMQ}}$ converges to zero, resulting in $r_{\mathrm{IMQ}}$ producing the same weight for all observations. This explains the closeness of unweighted score matching and $c = 10$. Smaller values of $c$ appear to cause falls in the information gain one can expect to receive, and indicate an interest in querying slightly away from the extremes with a poor prior. When using a good prior, querying at the extremes seems to be preferred slightly more. Tuning $c$ appears to behave similarly to tuning $\omega$, at least in the linear regression setting (see \Appref{learningrateselection}).

\begin{figure}[ht!]
    \centering
    \begin{minipage}[t]{0.48\linewidth}
        \centering
        \includegraphics[width=\linewidth]{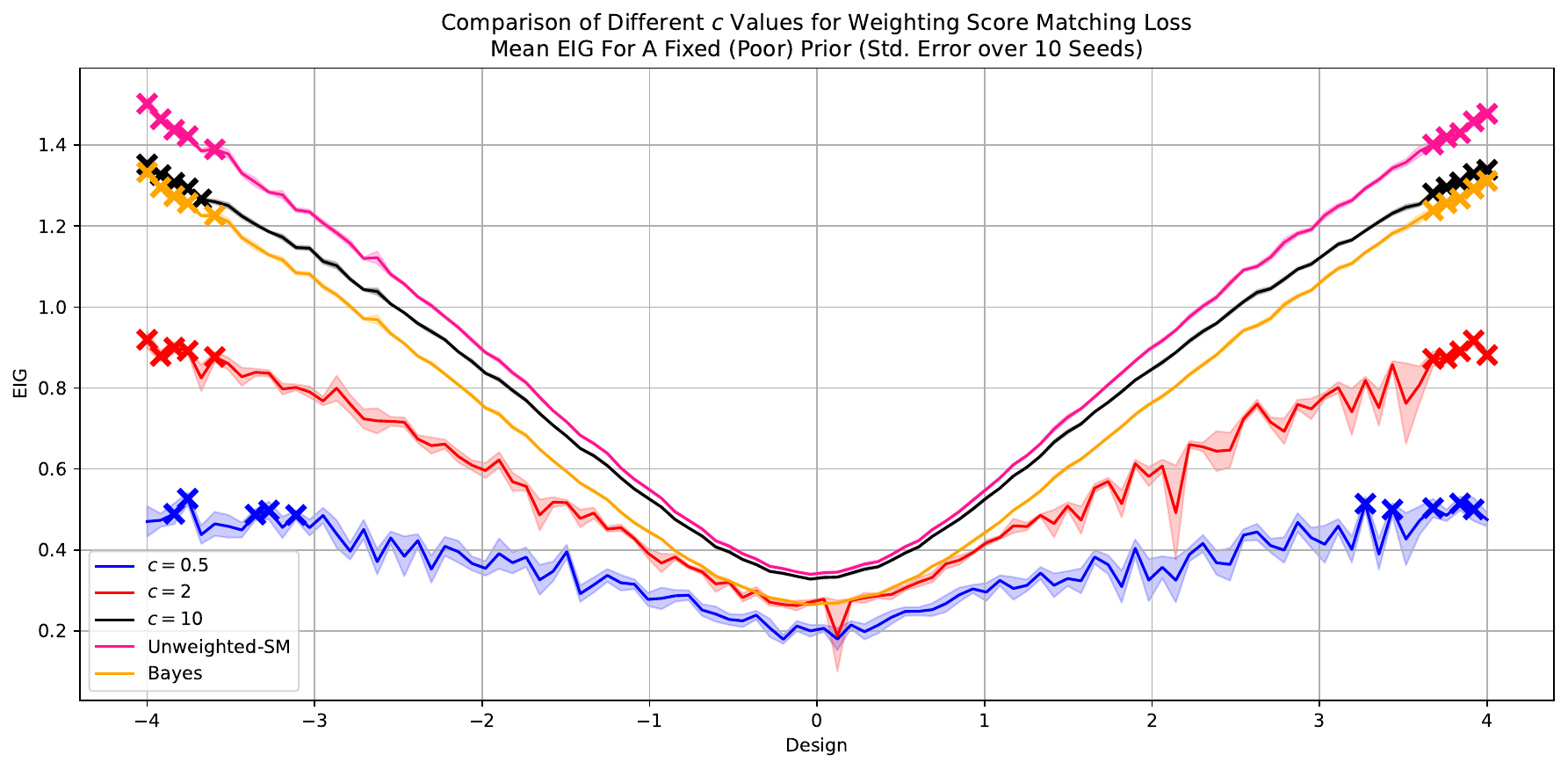}
        \label{ceig1}
    \end{minipage}
    \hfill
    \begin{minipage}[t]{0.48\linewidth}
        \centering
        \includegraphics[width=\linewidth]{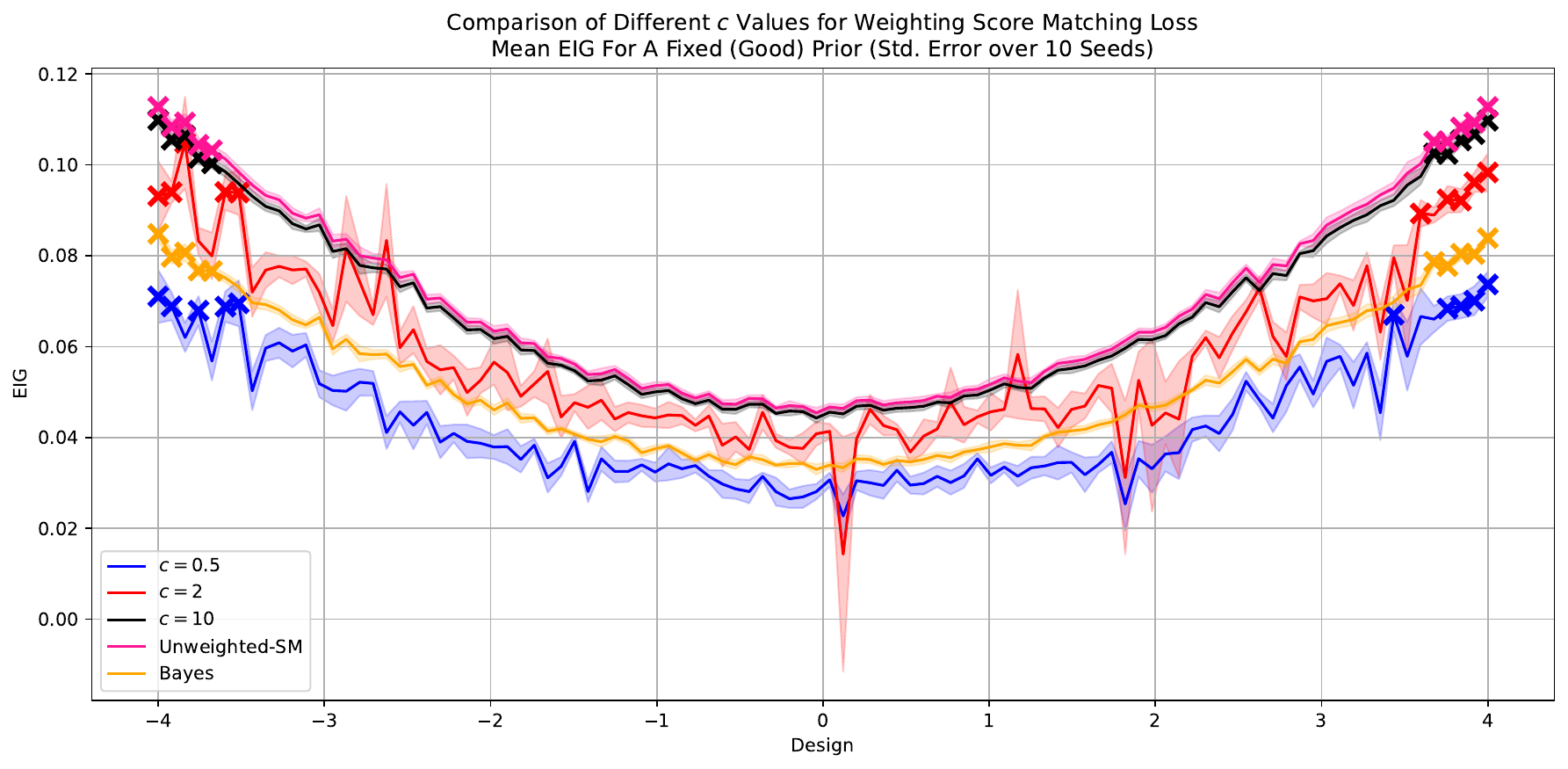}
        \label{ceig2}
    \end{minipage}
    \caption{Comparison of different downweighting rates $c$ on the Gibbs EIG under different priors, for the Bayesian linear regression problem. Left: poor prior (unit Gaussian, larger posterior variance). Right: good prior (close to the true posterior, smaller posterior variance). Smaller $c$ values tend to cause queries slightly away from the extremes, with more noise compared to the other curves. Large $c$ values have EIG curves close to the unweighted score matching loss.}
    \label{ceig_combined}
\end{figure}

Since unweighted score matching and the negative log-likelihood are related, the behaviour in \Figref{ceig_combined} meets expectations -- values of $c$ that do not excessively downweight observations too quickly (i.e., large values of $c$) would cause the Gibbs EIG to have the same parabolic form as the BEIG, and values of $c$ that seek to downweight observations much more deviate from the BEIG. This deviation from the BEIG results in other regions of the design space having a greater chance of being queried during experimentation (see $c = 0.5$ in \Figref{ceig_combined}). Combining changes in $c$ together with lower learning rates $\omega$ can additionally result in more diverse optimal designs.

\section{Gibbs Expected Information Gain Estimation} 
\label{nmcappendix}

Many of the estimators for the BEIG \citep{foster2019variational} can be naturally generalised to methods for approximating the Gibbs EIG. This appendix explains our Gibbs EIG NMC estimator.

\subsection{Nested Monte Carlo Estimator}

The NMC estimator for the BEIG is well-studied in BOED \citep{rainforth2018nesting}. It is a simple approach to approximating the EIG which provides a na\"{i}ve NMC approximation to the marginal likelihood.

Following \Cref{theorem1}, we arrive at our NMC estimator knowing that $$\widetilde{\pi}(\boldsymbol{y} \mid \boldsymbol{\xi}) = \mathbb{E}_{\pi(\boldsymbol{\theta})}\left[\exp(-\omega\ell_{\boldsymbol{\theta}}(\boldsymbol{\xi}, \boldsymbol{y}))\right]$$ can be approximated by, for sufficiently large $M$ samples from $\pi(\boldsymbol{\theta})$, $$\widetilde{\pi}(\boldsymbol{y} \mid \boldsymbol{\xi}) \approx \frac{1}{M}\sum_{j = 1}^{M} \exp\left(-\omega\ell_{\boldsymbol{\theta}_{j}}(\boldsymbol{\xi}, \boldsymbol{y}) \right).$$

The NMC estimator then easily follows by substitution $$U^{\mathrm{Gibbs}}_{\mathrm{NMC}}(\boldsymbol{\xi}) \triangleq \sum_{i = 1}^{N}\left[\left(-\omega\ell_{\boldsymbol{\theta}_{i}}(\boldsymbol{\xi}, \boldsymbol{y}_{i}) - \log\left(\frac{1}{M}\sum_{j = 1}^{M} \exp\left(-\omega\ell_{\boldsymbol{\theta}_{ij}}(\boldsymbol{\xi}, \boldsymbol{y}_{i}) \right)\right)\right) \cdot Z_{\boldsymbol{\theta}_{i}}(\boldsymbol{\xi}, \boldsymbol{y}_{i})\right],$$
where $$Z_{\boldsymbol{\theta}_{i}}(\boldsymbol{\xi}, \boldsymbol{y}_{i}) = \left(\frac{\exp(-\omega\ell_{\boldsymbol{\theta}_{i}}(\boldsymbol{\xi}, \boldsymbol{y}_{i}))}{p(\boldsymbol{y}_{i} \mid \boldsymbol{\theta}_{i}, \boldsymbol{\xi})}\right) / \left( \mathbf{c}_1\mathbf{c}_2 \right),$$ $\mathbf{c}_1\mathbf{c}_2$ is a suitable weight described in \Appref{ap:normalisation},
and $$\boldsymbol{\theta}_{i}, \boldsymbol{y}_{i} \sim \pi(\boldsymbol{\theta})p(\boldsymbol{y} \mid \boldsymbol{\theta}, \boldsymbol{\xi}), \boldsymbol{\theta}_{ij} \sim \pi(\boldsymbol{\theta}).$$

\subsection{Normalising the Importance Weight}\label{ap:normalisation}

We mentioned briefly that we use self-normalised importance sampling \citep{Elvira_2021}. Moving beyond standard importance sampling, the importance ratio is made to sum to one. In more detail, we first normalise the numerator of $\frac{\exp(-\omega\ell_{\boldsymbol{\theta}}(\boldsymbol{\xi}, \boldsymbol{y}))}{p(\boldsymbol{y} \mid \boldsymbol{\theta}, \boldsymbol{\xi})}$, optionally the denominator, and finally the resulting numerator and denominator normalised fraction overall. The first of these tackles the numerical stability problem of working with exponentials. Normalising the denominator/likelihood is optional since it is both already a pdf in $\boldsymbol{y}$ and whether or not it is normalised does not change the final value of the importance ratio. Since we require the importance ratio to be normalised, it is a given that we finally normalise this after normalising its numerator.

\newcommand{\const}[1]{\mathbf{c}_{#1}}

Starting by normalising $\exp(-\omega\ell_{\boldsymbol{\theta}}(\boldsymbol{\xi}, \boldsymbol{y}))$, and then the resulting fraction, we have the importance ratio
\begin{align}
    Z_{\boldsymbol{\theta}_{i}}(\boldsymbol{\xi}, \boldsymbol{y}_{i}) &= \frac{1}{\const{2}} \frac{\frac{1}{\const{1}}\exp(-\omega\ell_{\boldsymbol{\theta}_{i}}(\boldsymbol{\xi}, \boldsymbol{y}_{i}))}{p(\boldsymbol{y}_{i} \mid \boldsymbol{\theta}_{i}, \boldsymbol{\xi})} \nonumber \\
    &= \frac{1}{\const{1} \const{2}} \frac{\exp(-\omega\ell_{\boldsymbol{\theta}_{i}}(\boldsymbol{\xi}, \boldsymbol{y}_{i}))}{p(\boldsymbol{y}_{i} \mid \boldsymbol{\theta}_{i}, \boldsymbol{\xi})} \nonumber \\
    &= \frac{1}{\sum_{i=1}^N \frac{\exp(-\omega\ell_{\boldsymbol{\theta}_{i}}(\boldsymbol{\xi}, \boldsymbol{y}_{i}))}{p(\boldsymbol{y}_{i} \mid \boldsymbol{\theta}_{i}, \boldsymbol{\xi})}} \frac{\exp(-\omega\ell_{\boldsymbol{\theta}_{i}}(\boldsymbol{\xi}, \boldsymbol{y}_{i}))}{p(\boldsymbol{y}_{i} \mid \boldsymbol{\theta}_{i}, \boldsymbol{\xi})} \label{eq:const-cancel}
\end{align}
where $\const{1} \equiv \sum_{j=1}^N \exp(-\omega\ell_{\boldsymbol{\theta}_{i}}(\boldsymbol{\xi}, \boldsymbol{y}_{j}))$, $\const{2} \equiv \sum_{j=1}^N \frac{\frac{1}{\const{1}}\exp(-\omega\ell_{\boldsymbol{\theta}_{i}}(\boldsymbol{\xi}, \boldsymbol{y}_{j}))}{p(\boldsymbol{y}_{j} \mid \boldsymbol{\theta}_{i}, \boldsymbol{\xi})}$ and \Cref{eq:const-cancel} follows since
\begin{align}
    \const{2} &= \sum_{i=1}^N \frac{\frac{1}{\const{1}}\exp(-\omega\ell_{\boldsymbol{\theta}_{i}}(\boldsymbol{\xi}, \boldsymbol{y}_{j}))}{p(\boldsymbol{y}_{j} \mid \boldsymbol{\theta}_{i}, \boldsymbol{\xi})} \nonumber \\
    &= \frac{1}{\const{1}} \sum_{i=1}^N \frac{\exp(-\omega\ell_{\boldsymbol{\theta}_{i}}(\boldsymbol{\xi}, \boldsymbol{y}_{j}))}{p(\boldsymbol{y}_{j} \mid \boldsymbol{\theta}_{i}, \boldsymbol{\xi})}. \nonumber
\end{align}

We should demonstrate that we can recover the BEIG using $\omega = 1$ and $\ell_{\boldsymbol{\theta}}(\boldsymbol{\xi}, \boldsymbol{y}) = -\log p(\boldsymbol{y} \mid \boldsymbol{\theta}, \boldsymbol{\xi})$. We have that
\begin{align*}
    Z_{\boldsymbol{\theta}_{i}}(\boldsymbol{\xi}, \boldsymbol{y}_{i}) &= \frac{1}{\sum_{i=1}^N \frac{\exp(-\omega\ell_{\boldsymbol{\theta}_{i}}(\boldsymbol{\xi}, \boldsymbol{y}_{j}))}{p(\boldsymbol{y}_{j} \mid \boldsymbol{\theta}_{i}, \boldsymbol{\xi})}} \frac{\exp(-\omega\ell_{\boldsymbol{\theta}_{i}}(\boldsymbol{\xi}, \boldsymbol{y}_{i}))}{p(\boldsymbol{y}_{i} \mid \boldsymbol{\theta}_{i}, \boldsymbol{\xi})} \\
    &= \frac{1}{\sum_{i=1}^N \frac{p(\boldsymbol{y}_{j} \mid \boldsymbol{\theta}_{i}, \boldsymbol{\xi})}{p(\boldsymbol{y}_{j} \mid \boldsymbol{\theta}_{i}, \boldsymbol{\xi})}} \frac{p(\boldsymbol{y}_{i} \mid \boldsymbol{\theta}_{i}, \boldsymbol{\xi})}{p(\boldsymbol{y}_{i} \mid \boldsymbol{\theta}_{i}, \boldsymbol{\xi})} \\
    &= \frac{1}{N}.
\end{align*}

The same result is recovered regardless of whether or not one normalises the numerator or denominator first, but the point of normalising these is to protect against numerical instability and to accurately estimate the Gibbs EIG. 

Putting this back into the Gibbs EIG NMC estimator,
\begin{align*}
U_{\mathrm{NMC}}(\boldsymbol{\xi}) & \triangleq \sum_{i = 1}^{N}\left[\left(-\omega\ell_{\boldsymbol{\theta}_{i}}(\boldsymbol{\xi}, \boldsymbol{y}_{i}) - \log\left(\frac{1}{M}\sum_{j = 1}^{M} \exp\left(-\omega\ell_{\boldsymbol{\theta}_{ij}}(\boldsymbol{\xi}, \boldsymbol{y}_{i}) \right)\right)\right) \cdot Z_{\boldsymbol{\theta}_{i}}(\boldsymbol{\xi}, \boldsymbol{y}_{i})\right] \\
& = \sum_{i = 1}^{N}\left[\left(\log p(\boldsymbol{y}_{i} \mid \boldsymbol{\theta}_{i}, \boldsymbol{\xi}) - \log\left(\frac{1}{M}\sum_{j = 1}^{M} p(\boldsymbol{y}_{i} \mid \boldsymbol{\theta}_{ij}, \boldsymbol{\xi})\right)\right) \cdot \frac{1}{N}\right] \\
& = \frac{1}{N}\sum_{i = 1}^{N}\left[\left(\log p(\boldsymbol{y}_{i} \mid \boldsymbol{\theta}_{i}, \boldsymbol{\xi}) - \log\left(\frac{1}{M}\sum_{j = 1}^{M} p(\boldsymbol{y}_{i} \mid \boldsymbol{\theta}_{ij}, \boldsymbol{\xi})\right)\right)\right].
\end{align*}
Here, we recover the BEIG NMC estimator from \citet{rainforth2018nesting} since $Z$ becomes the constant $\frac{1}{N}$.

\subsection{Importance of the Importance Weight}
\label{importanceofimportanceweight}

One could instead consider computing the following $$U(\boldsymbol{\xi}) \triangleq \frac{1}{N}\sum_{i = 1}^{N}\left[\left(-\omega\ell_{\boldsymbol{\theta}_{i}}(\boldsymbol{\xi}, \boldsymbol{y}_{i}) - \log\left(\frac{1}{M}\sum_{j = 1}^{M} \exp\left(-\omega\ell_{\boldsymbol{\theta}_{ij}}(\boldsymbol{\xi}, \boldsymbol{y}_{i}) \right)\right)\right)\right],$$
where $$\boldsymbol{\theta}_{i}, \boldsymbol{y}_{i} \sim \pi(\boldsymbol{\theta})p(\boldsymbol{y} \mid \boldsymbol{\theta}, \boldsymbol{\xi}), \boldsymbol{\theta}_{ij} \sim \pi(\boldsymbol{\theta}),$$
i.e., the Gibbs EIG in \Cref{theorem1} without the importance weight.

This is equivalent to the setting by \citet{overstall2023gibbsoptimaldesignexperiments}, where we directly use their Equation (4) for computing the Gibbs expected utility. The connection to our setting exists when the designer distribution (which they specify as $\mathcal{D}$) is our (misspecified) statistical model, and the outer-expectation of the utility with respect to a quantity (which they specify as $\mathcal{C}$) is the prior distribution (the prior is the quantity). The utility is then the log-ratio between the posterior and prior. In other words, the expected utility is the following (non-pseudo) expectation 
\begin{align*}
\mathrm{EIG}(\boldsymbol{\xi}) & = \mathbb{E}_{\pi(\boldsymbol{\theta})p(\boldsymbol{y} \mid \boldsymbol{\theta}, \boldsymbol{\xi})}\left[\log \pi(\boldsymbol{\theta} \mid \boldsymbol{y}, \boldsymbol{\xi}) - \log \pi(\boldsymbol{\theta})\right] \\
& = \mathbb{E}_{\pi(\boldsymbol{\theta})p(\boldsymbol{y} \mid \boldsymbol{\theta}, \boldsymbol{\xi})}\left[-\omega\ell_{\boldsymbol{\theta}}(\boldsymbol{\xi}, \boldsymbol{y}) - \log \widetilde{\pi}(\boldsymbol{y} \mid \boldsymbol{\xi})\right].
\end{align*}
It is easy to see that, like our proposed Gibbs EIG, this is equivalent to the BEIG for $\omega = 1$ and $\ell_{\boldsymbol{\theta}}(\boldsymbol{\xi}, \boldsymbol{y}) = -\log p(\boldsymbol{y} \mid \boldsymbol{\theta}, \boldsymbol{\xi})$. However, this is not in general equivalent to the Gibbs EIG presented in \Cref{definitiongibbseig}. The lack of the importance weight means that we cannot rearrange to recover any of \Cref{definitiongibbseig}, \Cref{proposition1}, and \Cref{lemma1} -- in other words, we do not make use of pseudo-expectations. 

As a result, a different expectation gets computed, rather than that provided by the Gibbs EIG. This therefore gives a different interpretation of the EIG, where one computes the same function but using samples solely from the statistical model. Without the importance weight, the generalised likelihood $\exp\left(-\omega\ell_{\boldsymbol{\theta}}(\boldsymbol{\xi}, \boldsymbol{y}) \right)$ has no say in how the samples from the statistical model contribute to the log-ratio we estimate. Consequently, we deviate from the intuition laid out behind developing the Gibbs EIG.

We can show empirically what the consequences of not using the importance weight in the Gibbs EIG are. Below in \Tableref{importanceweightregres} are the results from performing regression as in \Appref{bayesregresappendix}, using weighted score matching with exponential decay $(b = 0.04)$. The same loss function and other related parameters are used for a fair comparison. Evidence suggests that using the importance weight as in \Cref{theorem1} provides better performance, additionally suggesting that na\"ively using directly the \citet{overstall2023gibbsoptimaldesignexperiments} method of computing the Gibbs expected utility is not optimal. 

\begin{table}[ht!]
\centering
\begin{tabular}{lccc}
\toprule
\textbf{Rate} & \textbf{RMSE} & \textbf{MMD} & \textbf{NLL} \\
\midrule
\multicolumn{4}{c}{Well-Specified} \\[0.2em]
Importance Weight  & \textbf{1.6024\,(0.0402)} & \textbf{0.0768\,(0.0079)} & \textbf{1.6288\,(0.0371)} \\
No Importance Weight  & $1.6437\,(0.0428)$ & $0.0965\,(0.0099)$ & $1.6954\,(0.0434)$ \\
\midrule
\multicolumn{4}{c}{Asymmetric Outliers} \\[0.2em]
Importance Weight  & \textbf{2.4716\,(0.1016)} & \textbf{0.4803\,(0.0314)} & \textbf{3.4828\,(0.2064)} \\
No Importance Weight  & $2.4991\,(0.0998)$ & $0.4886\,(0.0312)$ & $3.5836\,(0.2010)$ \\
\midrule
\multicolumn{4}{c}{Laplacian Errors} \\[0.2em]
Importance Weight  & \textbf{1.9652\,(0.0487)} & \textbf{0.1125\,(0.0115)} & \textbf{2.2494\,(0.0534)} \\
No Importance Weight  & $1.9878\,(0.0499)$ & $0.1227\,(0.0118)$ & $2.2883\,(0.059)$ \\
\bottomrule
\end{tabular}
\caption{Comparison of using an importance weight as in \Cref{theorem1}, or avoiding it as following \citet{overstall2023gibbsoptimaldesignexperiments}. RMSE, MMD, and NLL are recorded with mean (and standard error) over 3 different models, each replicated under 30 random seeds $(3 \times 30 = 90 \text{ replications in total})$. Top: Well-specified setting. Middle: Observations corrupted with asymmetric outliers. Bottom: True error distribution is Laplacian.}
\label{importanceweightregres}
\end{table}

\section{Scoring Rules}
\label{scoringrulesappendix}

The loss functions, or scoring rules, that we use have been detailed in full in this appendix.

\subsection{Power Likelihoods}

Put simply, power likelihoods (also called power priors or power posteriors, depending on the context) \citep{powerprior, Bissiri_2016, holmes2017assigning, mclatchie2025predictiveperformancepowerposteriors} are exactly the negative log-likelihood loss, except that the learning rate $\omega \neq 1$ (avoiding the recovery of Bayesian inference). $\omega$ is used to determine how much one relies on the statistical model for the Gibbs posterior update. As always, for $\omega < 1$ the update places more weight on the prior, and $\omega > 1$ gives the data more weight, which is usually not done in misspecified settings.

The power likelihood is simply the use of the negative log-likelihood loss function along with a calibration weight $\omega \neq 1$, $$\ell_{\boldsymbol{\theta}}(\boldsymbol{\xi}, \boldsymbol{y}) = -\log p(\boldsymbol{y} \mid \boldsymbol{\theta}, \boldsymbol{\xi}).$$ Putting this into the GBOED framework, the Gibbs posterior is \begin{align*}
\pi(\boldsymbol{\theta} \mid \boldsymbol{y}, \boldsymbol{\xi}) & = \frac{\exp\left(\omega\log p(\boldsymbol{y} \mid \boldsymbol{\theta}, \boldsymbol{\xi}) \right) \cdot \pi(\boldsymbol{\theta})}{\widetilde{\pi}(\boldsymbol{y} \mid \boldsymbol{\xi})} \\
& = \frac{p(\boldsymbol{y} \mid \boldsymbol{\theta}, \boldsymbol{\xi})^{\omega} \cdot \pi(\boldsymbol{\theta})}{\widetilde{\pi}(\boldsymbol{y} \mid \boldsymbol{\xi})}.
\end{align*}
The Gibbs EIG is then simply (by \Cref{theorem1})
\begin{align*}
\mathrm{EIG}_{\mathrm{Gibbs}}(\boldsymbol{\xi}) & = \mathbb{E}_{\pi(\boldsymbol{\theta})p(\boldsymbol{y} \mid \boldsymbol{\theta}, \boldsymbol{\xi})}\left[\left(\omega\log p(\boldsymbol{y} \mid \boldsymbol{\theta}, \boldsymbol{\xi}) - \log \widetilde{\pi}(\boldsymbol{y} \mid \boldsymbol{\xi})\right) \cdot \left(\frac{\exp(\log p(\boldsymbol{y} \mid \boldsymbol{\theta}, \boldsymbol{\xi})^{\omega})}{p(\boldsymbol{y} \mid \boldsymbol{\theta}, \boldsymbol{\xi})}\right)\right] \\
& = \mathbb{E}_{\pi(\boldsymbol{\theta})p(\boldsymbol{y} \mid \boldsymbol{\theta}, \boldsymbol{\xi})}\left[\left(\log p(\boldsymbol{y} \mid \boldsymbol{\theta}, \boldsymbol{\xi})^{\omega} - \log \widetilde{\pi}(\boldsymbol{y} \mid \boldsymbol{\xi})\right) \cdot \left(\frac{p(\boldsymbol{y} \mid \boldsymbol{\theta}, \boldsymbol{\xi})^{\omega}}{p(\boldsymbol{y} \mid \boldsymbol{\theta}, \boldsymbol{\xi})}\right)\right],
\end{align*}
which would recover the BEIG at $\omega = 1$.

\subsection{Score Matching}
\label{scorematchingappendix}

The score function corresponding to an outcome distribution is the gradient of the logarithm of the corresponding density with respect to outcomes $\boldsymbol{y}$.
For an outcome distribution with density $p$, we write the corresponding score function as $\nabla_{\boldsymbol{y}} \log{p}$. 

Score matching \citep{hyvarinen} is an inferential framework in which one selects parameter values that minimise the Fisher divergence between the score functions of the statistical model and true DGP. 
This is particularly useful when the statistical model contains intractable normalising constants that cannot be evaluated, as is common in many real-world problems; evaluating the score function of a model does not require computing such constants. Using $\pmodel$ and $\pdata$ to refer, respectively, to the densities characterising the statistical model and true DGP, the score matching loss is \citep{robustgpr}
\begin{align*}
\label{scorematch}
\mathcal{D}(\pdata \mid \mid \pmodel) & = \mathbb{E}_{\boldsymbol{y} \sim \pdata}\left[\lVert r\left(\nabla_{\boldsymbol{y}}\log \pmodel(\boldsymbol{\xi}, \boldsymbol{y}) - \nabla_{\boldsymbol{y}}\log \pdata(\boldsymbol{\xi}, \boldsymbol{y}) \right)\rVert^{2}_{2}\right],
\end{align*}
where $r : \mathcal{X} \times \mathcal{Y} \rightarrow \mathbb{R}_{\neq 0}$ is an optional weighting function that can lead to improved robustness \citep{altamirano23a, robustgpr}. The score matching loss can additionally include an expectation over a design distribution (\citealt{robustgpr}, Equation (2)); in the experimental design setting, we instead select the design according to our utility.

Since the score matching loss function requires $\pdata$ to be known, we desire a way to avoid needing to know this explicitly. Thankfully, under certain regularity conditions \citep{liu2022estimating, altamirano23a, robustgpr}, we can use integration by parts to write the score matching loss function as
\begin{align*}
\mathcal{D}(\pdata \mid \mid \pmodel) & = \mathbb{E}_{\boldsymbol{y} \sim \pdata}\left[\lVert r\left(\nabla_{\boldsymbol{y}}\log \pmodel(\boldsymbol{\xi}, \boldsymbol{y}) - \nabla_{\boldsymbol{y}}\log \pdata(\boldsymbol{\xi}, \boldsymbol{y}) \right)\rVert^{2}_{2}\right] \\
& = \mathbb{E}_{\boldsymbol{y} \sim \pdata}\left[\left(r\nabla_{\boldsymbol{y}}\log \pmodel(\boldsymbol{\xi}, \boldsymbol{y})\right)^{2} + 2\nabla_{\boldsymbol{y}}\left(r^{2}\nabla_{\boldsymbol{y}}\log \pmodel(\boldsymbol{\xi}, \boldsymbol{y})\right)\right].
\end{align*}

$\mathcal{D}(\pdata \mid \mid \pmodel)$ then translates to $$\ell_{\boldsymbol{\theta}}(\boldsymbol{\xi}, \boldsymbol{y}) = \left(r\nabla_{\boldsymbol{y}}\log p(\boldsymbol{y} \mid \boldsymbol{\theta}, \boldsymbol{\xi})\right)^{2} + 2\nabla_{\boldsymbol{y}}\left(r^{2}\nabla_{\boldsymbol{y}}\log p(\boldsymbol{y} \mid \boldsymbol{\theta}, \boldsymbol{\xi})\right).$$

The corresponding Gibbs posterior and Gibbs EIG then follow easily by substitution.

A special property of the score matching loss function is that it admits conjugacy for statistical models in the exponential family \citep{altamirano23a}. This results in closed-form (Gibbs) posteriors being made available, avoiding the need for expensive variational approximations. This can be taken advantage of in our GBOED framework.

\subsection{Weighted Score Matching}

\citet{robustgpr} advocate for the inverse multi-quadric (IMQ) kernel (``bump function'') as a way of dealing with outliers in data. For the purposes of our investigation, this is the weighting function $r$ we consider. The IMQ kernel function $r_{\mathrm{IMQ}} : \mathcal{X} \times \mathcal{Y} \rightarrow \mathbb{R}_{> 0}$ relies on a centring function $\gamma : \mathcal{X} \rightarrow \mathbb{R}$, shrinking function $c : \mathcal{X} \rightarrow \mathbb{R}_{> 0}$, and learning rate $\omega > 0$: $$r_{\mathrm{IMQ}}(\boldsymbol{\xi}, \boldsymbol{y}) = \omega \left(1 + \frac{\left(\boldsymbol{y} - \gamma(\boldsymbol{\xi})\right)^{2}}{c(\boldsymbol{\xi})^{2}}\right)^{-\frac{1}{2}}.$$
$\omega$ is the largest possible weight that can be assigned by the kernel, i.e. is the value the kernel evaluates to when the inner fraction in the kernel returns a zero. $\gamma$ controls the position of the bump; $\boldsymbol{y}$ values far from $\gamma$ are downweighted. $c$ determines how quickly observations are downweighted.

The effectiveness of the IMQ kernel in dealing with outliers in data heavily depends on the choice of $\gamma$ and $c$. \citet{robustgpr} propose to use the prior mean at each design $\boldsymbol{\xi}$ as $\gamma$ and a design independent value of $c$ based on how many outliers one expects to see in the data. In practice, suitable specification of $\gamma$ and $c$ is challenging: One typically would not know how often they would see an outlier, nor the prior mean. \citet{laplante2025robustconjugatespatiotemporalgaussian} highlight the disadvantages of using the \citet{robustgpr} approach, and instead suggest to use the posterior predictive mean and standard deviation for $\gamma$ and $c$ respectively. This is adaptive and particularly allows one to tackle the challenge of determining how to set the centre of the data and how quickly to downweight observations at each point in time. 

We then have $$\ell_{\boldsymbol{\theta}}(\boldsymbol{\xi}, \boldsymbol{y}) = \left(r_{\mathrm{IMQ}}(\boldsymbol{\xi}, \boldsymbol{y})\nabla_{\boldsymbol{y}}\log p(\boldsymbol{y} \mid \boldsymbol{\theta}, \boldsymbol{\xi})\right)^{2} + 2\nabla_{\boldsymbol{y}}\left(r_{\mathrm{IMQ}}(\boldsymbol{\xi}, \boldsymbol{y})^{2}\nabla_{\boldsymbol{y}}\log p(\boldsymbol{y} \mid \boldsymbol{\theta}, \boldsymbol{\xi})\right).$$

\section{Learning Rate Selection}
\label{learningrateselection}

It is common in generalised Bayesian inference to set a fixed calibration weight, or \textit{learning rate}, $\omega$ that determines how much one relies on the data in the posterior update \citep{knoblauch}. Previously proposed approaches to selecting the learning rate depend on a dataset already being available \citep{Wu_2023}.
Because of this, these approaches are not suitable for the experimental design setting which is concerned with how to gather these data in the first place.

We first discuss how the learning rate affects the Gibbs EIG. The NMC estimator tends to zero as the learning rate approaches zero, i.e., $\lim_{\omega \to 0}U^{\mathrm{Gibbs}}_{\mathrm{NMC}}(\boldsymbol{\xi}) = 0$ (knowing that $\exp(0) = 1$ and $\log(1) = 0$). This reflects the effect of the learning rate on the Gibbs posterior: A smaller learning rate results in the posterior being closer to the prior (and exactly the prior when $\omega = 0$). Setting $\omega < 1$ has the effect of downweighting the EIG. This effect makes intuitive sense, given that the BEIG itself is not actually a sensible estimate of information gain under misspecification. This results in the generalised Bayesian experimental design including areas of the design space that would normally never be queried by the BEIG, an attractive property that can help one examine the degree of misspecification in their model by querying the design space more widely. See \Figref{eigcomparisonlearningr} for an example of how the learning rate affects the Gibbs EIG.
\begin{figure}[ht!]
    \centering
    \includegraphics[width=100mm]{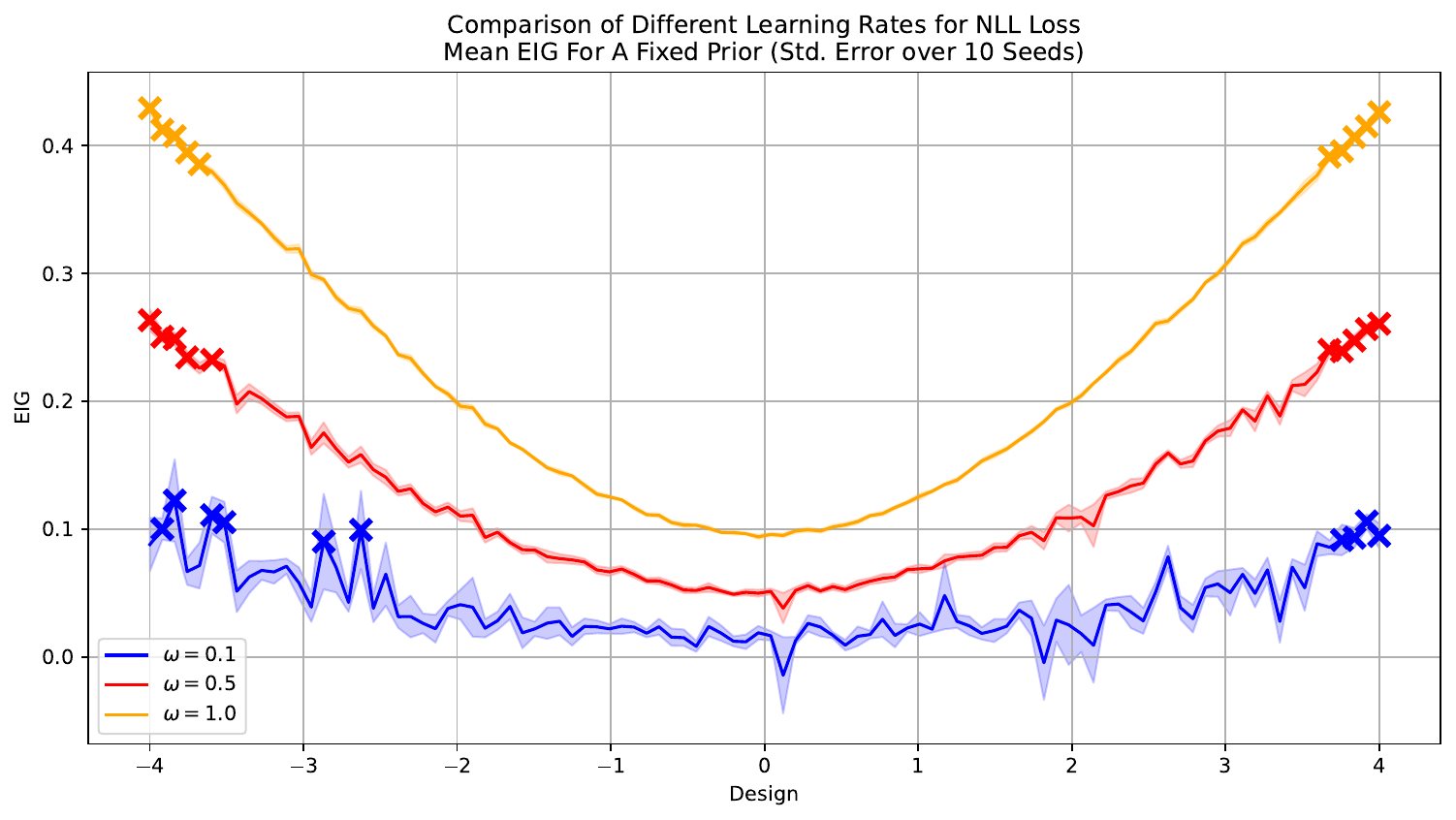}
    \caption{Comparison of different learning rates for computing the Gibbs EIG using the negative log-likelihood loss, under the Bayesian linear regression problem. Marked crosses are the 10 designs with the greatest (Gibbs) EIG for a particular curve. \textcolor{blindblue}{Bottom}: The Gibbs EIG at $\omega = 0.1$. EIG values are lower than the other two curves, and designs slightly further away from the extremes have greater EIG. \textcolor{blindred}{Middle}: The Gibbs EIG at $\omega = 0.5$. Less smooth EIG curve than the curve above, with some indication of designs with greater EIG values being somewhat away from the extremes. \textcolor{blindorange}{Top}: The BEIG ($\omega = 1$). Greatest EIG values are at the most extreme ends of the design space, and the EIG curve is much smoother than the other two.}
    \label{eigcomparisonlearningr}
\end{figure}

A learning rate too small results in slow learning for the (usually small) experimental budget given and, perhaps, overly downweighting the EIG. A learning rate too high can arguably make learning under misspecification too fast; it may be of interest to slowly deviate from our prior as we learn from incoming data that may not fit well with our assumed statistical model. A smaller learning rate also allows one to query downweighted informative designs, inducing robustness into the acquisition function.

We choose to set a fixed learning rate due to the weighted score matching method relying heavily on the choice of weighting function $r$, rather than on $\omega$. Since robustness is introduced by $r$, the learning rate becomes an additional vexatious hyperparameter to set -- though something reasonable still needs to be chosen. \citet{robustgpr} also explain that optimising both $\omega$ and $c$ in $r_{\mathrm{IMQ}}$ is numerically unstable due to near non-identifiability. Particularly in Gaussian process (GP) regression, \citet{robustgpr, laplante2025robustconjugatespatiotemporalgaussian} fix $\omega$ to a value depending on the Gaussian variance. This allows for the recovery of the standard GP as $c \to \infty$ in $r_{\mathrm{IMQ}}$. As we are not working in the GP regression setting, $\omega$ should be chosen according to the user's beliefs about the level of misspecification in their statistical model -- significant misspecification results in less reliable inference. Although we could easily reuse the learning rate selection method by \citet{robustgpr, laplante2025robustconjugatespatiotemporalgaussian}, we may not always be using Gaussian's or have a known variance in our model. As the variance exceeds $\sigma^{2} = 2$, the learning rate becomes $\omega > 1$, which is arguably not sensible under misspecification due to placing more weight on the data.

\section{Experiment Details}
\label{experimentdetails}

\subsection{Source Code}

A GitHub repository containing the code used to generate the results presented in this paper can be found at \url{https://github.com/yasirbarlas/GBOED}. The code is written in Python and relies heavily on Pyro \citep{bingham2018pyro} and PyTorch \citep{paszke2019pytorch}. Our code is based on existing code from \citet{foster2020unified}, \citet{ivanova2021implicit}, \citet{blau2022optimizing}, and \citet{barlasrl} where relevant.

\subsection{Hardware}

All experiments were run through the Computational Shared Facility High-Performance Computer at The University of Manchester, using the Slurm Workload Manager. An AMD EPYC 9634 84-Core CPU was used for the experiments. 8 CPU cores and 50GB of RAM were assigned to each experiment. No GPUs were used.

\subsection{Generalised Variational Inference}

To allow for scalability to high-dimensional and complex experimental design problems, we utilise generalised variational inference \citep{knoblauch} to learn (Gibbs) posteriors. In particular, we seek to maximise the generalised evidence lower-bound (ELBO) 
\begin{align*}
\mathrm{ELBO} & \equiv \mathbb{E}_{q_{\phi}(\boldsymbol{z})} \left[ \log \left(\exp\left(-\omega\ell_{\psi, \boldsymbol{z}}(\boldsymbol{y}) \right)\right) + \log p_{\psi}(\boldsymbol{z}) - \log q_{\phi}(\boldsymbol{z}) \right] \\
& \equiv \mathbb{E}_{q_{\phi}(\boldsymbol{z})} \left[ -\omega\ell_{\psi}(\boldsymbol{y}) + \log p_{\psi}(\boldsymbol{z}) - \log q_{\phi}(\boldsymbol{z}) \right],
\end{align*}
for a prior $p_{\psi}(\boldsymbol{z})$ and guide $q_{\phi}(\boldsymbol{z})$ with variational parameters $\psi$ and $\phi$ respectively. 

We used stochastic variational inference through Pyro \citep{bingham2018pyro} with the ELBO objective above. We took 10000 steps with the Adam optimiser \citep{kingma2014adammethodstochasticoptimization}, using a learning rate of 0.005. Both Bayesian and Gibbs inference make use of the same variational family and parameters.

\subsection{Scenarios of Misspecification}
\label{scenariomisspec}

The form of misspecification we study is related to contamination and outliers rather than misspecification in the functional form of a statistical model. We detail the studied scenarios below.

\paragraph{Asymmetric outliers} As similarly considered by \citet{robustgpr}, for the Bayesian linear regression experimental design problem, each observation generated by the statistical model has a 30\% chance of being contaminated by subtracting the observation by some $z \sim \mathcal{U}(3\sigma, 9\sigma)$, where $\mathcal{U}$ is the uniform distribution and $\sigma$ is the standard deviation assumed by the statistical model. For pharmacokinetics, as the standard deviation is not constant for each design, we subtract $z \sim \mathcal{U}(3, 7)$. Chance of contamination is also 50\% to account for the small experimental budget. For the location finding problem, $z \sim \mathcal{U}(3\sigma, 7\sigma)$, and the chance of contamination is 30\%.

\paragraph{Changes in error distribution} The misspecified model may assume a certain distribution for the errors, such as a normal distribution. For the Bayesian linear regression experimental design problem, this scenario considers that the true errors actually come from a Laplace distribution with the same location and scale parameters, rather than a normal distribution. We modify the additive noise and multiplicative noise for the pharmacokinetics experimental design problem. In the location finding experimental design problem, we look at changes to the scale of the assumed distribution. 

\subsection{Bayesian Linear Regression}
\label{bayesregresappendix}

We are interested in learning the relationship between covariates $\boldsymbol{\xi}$ and a dependent variable $\boldsymbol{y}$. To do so, we assume a linear regression model and seek to learn $K$ coefficients $\boldsymbol{\theta} = (\beta_{0}, \ldots, \beta_{K-1})$. We restrict our covariates in $\boldsymbol{\xi} \in [-4, 4]$. An experimenter conducts $T = 10$ experiments, meaning that we should learn $\boldsymbol{\theta}$ by selecting designs according to our choice of utility function.

For each coefficient $\beta_{i} \in \mathbb{R}$, we use a standard normal prior given by $$\beta_{i} \sim \mathcal{N}(0, 1).$$

For a given design (matrix) $\boldsymbol{\xi}$ and known standard deviation $\sigma$, the likelihood function is given by, for a single dependent variable $\boldsymbol{y}_{i}$, $$\boldsymbol{y}_{i} \mid \boldsymbol{\theta}, \boldsymbol{\xi}_{i} \sim \mathcal{N}(\boldsymbol{\xi}_{i}^{\intercal}\boldsymbol{\theta}, \sigma^{2}).$$

We assess the performance of each method as an average over 3 different true models:
$$\boldsymbol{y}_{i} = 10 -7\boldsymbol{\xi}_{i}, \boldsymbol{y}_{i} = -3 +8\boldsymbol{\xi}_{i}, \boldsymbol{y}_{i} = 9 + 9\boldsymbol{\xi}_{i},$$
with standard deviations $\sigma = 1.2$, $\sigma = 0.8$, and $\sigma = 1$ respectively.

\paragraph{Under misspecification} We explore the asymmetric outlier and change in error distribution scenarios for the Bayesian linear regression experimental design problem. Performance is averaged over the three different true models above $(3 \times 30 = 90 \text{ replications in total})$.

\paragraph{Computational details} For computing the (Gibbs) EIG, we set $N = 10000, M = 100$ in our NMC estimator. The design space $[-4, 4]$ is discretised into 100 possible designs that an experimenter can select. The EIG is computed for each design, and the design with the greatest EIG is selected for the real experiment to be performed. $\omega = 1$ unless stated otherwise. $q_{1} = 9$ and $q_{2} = 1$ when using our exponential decay method.

\subsection{Pharmacokinetics}

Pharmacokinetics is concerned with finding optimal blood sampling times to accurately characterise drug concentration–time profiles whilst minimising any costs. This problem has been considered in many experimental design studies \citep{RYAN201445, kleinegesse2020bayesian, zhang2021scalable, ivanova2021implicit}. A pharmacokinetics (PK) model built by \citet{RYAN201445} is used to simulate drug concentration at a particular time. The PK model is controlled by three parameters, $\boldsymbol{\theta} = (k_{\alpha}, k_{e}, V)$, where $V$ is the volume of distribution, $k_{\alpha}$ is the absorption rate, and $k_{e}$ is the elimination rate. Our goal is to select a single blood sampling time $\boldsymbol{\xi} \in [0, 24]$ once a drug has been administered to a patient, and this process of blood sampling is done sequentially for $T = 5$ unique patients, presenting a very small data scenario. 

We set the following prior on $\boldsymbol{\theta}$, matching \citet{RYAN201445}, $$\log \boldsymbol{\theta} \sim \mathcal{N}\left(\begin{bmatrix}
           \log 1 \\
           \log 0.1 \\
           \log 20
\end{bmatrix}, \begin{bmatrix}
           0.05 & 0 & 0 \\
           0 & 0.05 & 0\\
           0 & 0 & 0.05 \\
         \end{bmatrix}\right).$$

With the constraint $k_{\alpha} > k_{e}$, the PK model is given by $$z(\boldsymbol{\theta}, \boldsymbol{\xi}) = \frac{D_{V}}{V} \cdot \frac{k_{\alpha}}{k_{\alpha} - k_e} 
\left[ \exp(-k_{e} \boldsymbol{\xi}) - \exp(-k_{\alpha} \boldsymbol{\xi}) \right], y(\boldsymbol{\theta}, \boldsymbol{\xi}) = z(\boldsymbol{\theta}, \boldsymbol{\xi})(1 + \epsilon) + \eta,$$
where $D_{V} = 400$, $\epsilon \sim \mathcal{N}(0, 0.01)$ is multiplicative noise, and $\eta \sim \mathcal{N}(0, 0.1)$ is additive noise. As both noise sources are normally distributed, the likelihood can otherwise be viewed as $$\boldsymbol{y} \mid \boldsymbol{\theta}, \boldsymbol{\xi} \sim \mathcal{N}(z(\boldsymbol{\theta}, \boldsymbol{\xi}), 0.01z(\boldsymbol{\theta}, \boldsymbol{\xi})^{2} + 0.1).$$

Following \citet{kleinegesse2020bayesian}, the true parameters for the model are $\boldsymbol{\theta} = (k_{\alpha}, k_{e}, V) = (1.5, 0.15, 15)$.

\paragraph{Under misspecification} We investigate the asymmetric outlier scenario, where performance is averaged over a single true model $(1 \times 100 = 100 \text{ replications in total})$. We also investigate changes in the error distribution, where separately the additive noise is changed to $\eta \sim \mathcal{N}(0, 1.0)$, and the multiplicative noise is to be $\epsilon \sim \mathcal{N}(0, 0.15)$. Performance is then averaged over these two different true models $(2 \times 50 = 100 \text{ replications in total})$.

\paragraph{Computational details} For computing the (Gibbs) EIG, we set $N = 10000, M = 100$ in our NMC estimator. We used Bayesian optimisation (suitable for continuous design spaces) on the Gibbs EIG NMC estimator to select the optimal design in the design space for each experiment. A Mat\'ern52 kernel with lengthscale 20 and variance 10 was used alongside the GP-UCB1 algorithm \citep{srinivas2009gaussian}, with a UCB exploration parameter value of $\lambda = 6$. A total of 3000 function evaluations were made available of the EIG oracle (one optimal design out of 3000 randomly sampled designs within the design space is chosen for the real experiment). $\omega = 0.8$ for the well-specified scenario and $\omega = 0.1$ for the misspecified scenarios, unless stated otherwise. $q_{1} = 0.8$ and $q_{2} = 0.2$ when using our exponential decay method.

\subsection{Location Finding}
\label{locfindexpdetails}

Location finding is an experimental design problem that has been used to showcase the performance of various experimental design methods \citep{foster2021deep, ivanova2021implicit, blau2022optimizing}. There are $K$ objects on a $d$-dimensional space, and in this experiment the task is to identify their locations $\boldsymbol{\theta} = (\boldsymbol{\beta}_{1}, \ldots, \boldsymbol{\beta}_{K})$ based on the signals that the objects emit. We select designs $\boldsymbol{\xi}$, which are the coordinates chosen to observe the signal intensity, in an effort to learn the locations of the objects. Our spaces are restricted in $\boldsymbol{\xi} \in [-4, 4]^{d}$ to make the problem more tractable. An experimenter conducts $T = 30$ experiments, presenting a relatively large data scenario (compared to the other problems).

The total intensity at point $\boldsymbol{\xi}$ is the superposition of the individual intensities for each object, $$\mu(\boldsymbol{\theta}, \boldsymbol{\xi}) = b + \sum_{i = 1}^{K} \frac{\alpha}{m + ||\boldsymbol{\beta}_{i} - \boldsymbol{\xi}||^{2}},$$ where $\alpha$ is a constant, $b > 0$ is a constant controlling the background signal, and $m > 0$ is a constant controlling the maximum signal. The total intensity is then used in the likelihood function calculation.

For an object $\boldsymbol{\beta}_{i} \in \mathbb{R}^{d}$, we use a standard normal prior given by $$\boldsymbol{\beta}_{i} \sim \mathcal{N}_{d}(\boldsymbol{0}, I),$$ where $\boldsymbol{0}$ is the mean vector, and $I$ is the covariance matrix, an identity matrix, both with dimension $d$.

The likelihood function is the logarithm of the total signal intensity $\mu(\boldsymbol{\theta}, \boldsymbol{\xi})$ with Gaussian noise $\sigma$. For a given design $\boldsymbol{\xi}$, the likelihood function is given by $$\log \boldsymbol{y} \mid \boldsymbol{\theta}, \boldsymbol{\xi} \sim \mathcal{N}(\log \mu(\boldsymbol{\theta}, \boldsymbol{\xi}), \sigma^{2}).$$

Our assumed hyperparameter choices are detailed in \Tableref{locfindparam}.

\begin{table}
    \centering
    \begin{tabular}{cc}
    \hline
    Parameter & Value \\
    \hline
    $K$ & 2 \\
    $\alpha$ & 1 \\
    $b$ & 0.1 \\
    $m$ & 0.0001 \\
    $\sigma$ & 0.5
    \end{tabular}
    \caption{Assumed location finding parameters. $\sigma$ can differ in the real-world environment depending on the misspecification present.}
\label{locfindparam}
\end{table}

The true parameters for the model are $\boldsymbol{\theta} = (\boldsymbol{\beta}_{1}, \boldsymbol{\beta}_{2})$, where 
\begin{align*}
\boldsymbol{\beta}_{1} & = (1.5, -1.3, 0.1, -1.8, -0.7, -1.1, 0.4, 0.4, -2.0, -1.2, -0.3, 0.2, 1.6, -1.2, 1.5, 0.8) \\
\boldsymbol{\beta}_{2} & = (-1.8, 0.5, 1.9, -0.2, -1.7, 1.4, -0.5, 2.0, -1.1, 1.2, 1.6, -2.0, -0.1, 0.0, -1.6, -1.3),
\end{align*}
and $d = 16$. For $1 \leq d < 16$, the first $d$ dimensions from each $\beta_{i}$ are used in the model. For example, if $d = 2$, $\boldsymbol{\theta} = (\boldsymbol{\beta}_{1}, \boldsymbol{\beta}_{2}) = [(1.5, -1.3),(-1.8, 0.5)]$.

\paragraph{Under misspecification} We investigate the asymmetric outlier scenario, where performance is averaged over a single true model $(1 \times 100 = 100 \text{ replications in total})$. We also investigate changes in the error distribution, where $\sigma$ is made equal to a range of different values in $\{1, 1.5\}$. Performance is then averaged over these two different true models $(2 \times 50 = 100 \text{ replications in total})$.

\paragraph{Computational details} For computing the (Gibbs) EIG, we set $N = 10000, M = 100$ in our NMC estimator. We used Bayesian optimisation (suitable for continuous design spaces) on the Gibbs EIG NMC estimator to select the optimal design in the design space for each experiment. A Mat\'ern52 kernel with lengthscale 15 and variance 4 was used alongside the GP-UCB1 algorithm \citep{srinivas2009gaussian}, with a UCB exploration parameter value of $\lambda = 12$. A total of 5000 function evaluations were made available of the EIG oracle. $\omega = 0.2$ unless stated otherwise. $q_{1} = 9$ and $q_{2} = 1$ when using our exponential decay method.

\section{Metrics}
\label{metrics}

In this appendix we discuss our choice of metrics and how exactly they are computed.

For consistency, we sample $N$ values from the posterior predictive distribution for each design. This means that we have $N \times D$ values in total for the total number of designs $D$. We also sample $N$ values from the true (outlier-free) data-generating distribution, meaning $N \times D$ values in total. The same predictive distribution generated is used when required by the \citet{laplante2025robustconjugatespatiotemporalgaussian} and exponential decay methods, changing every posterior update.

$N = 1000$ in all experiments, $D = 100$ for the Bayesian linear regression experimental design problem, and $D = 500$ for the pharmacokinetics and location finding experimental design problems. The designs for both the pharmacokinetics and location finding problems are sampled randomly as part of the Bayesian optimisation procedure, but remain fixed each time the metrics are computed. This preserves fairness amongst all methods, since the same designs are used when computing the metrics.

\paragraph{Root Mean Square Error (RMSE)} To compute the RMSE, for a single design, we take the difference between the posterior predictive samples and the samples from the true data-generating distribution. This returns $N$ errors for each design. After taking this difference, we square it and take the mean over all $N$ squared differences. We then take the square root to give us the RMSE for each design. We finally compute the mean RMSE over all designs. Mathematically, this is
$$\mathrm{RMSE} = \frac{1}{D} \sum_{d = 1}^{D}\sqrt{\frac{1}{N} \sum_{i = 1}^{N} \left( \hat{\boldsymbol{y}}_{di} - \boldsymbol{y}_{di} \right)^{2}},$$
where $\hat{\boldsymbol{y}}_{di}$ are samples from the predictive distribution, and $\boldsymbol{y}_{di}$ are samples from the true data-generating process both for design $d$ and predictive/true sample $i$.

\paragraph{Maximum Mean Discrepancy (MMD)} To compute the MMD, we follow guidance by \citet{mmd}. For a single design, we have $N$ samples from the posterior predictive, and $N$ samples from the true data-generating distribution. So to compute the unbiased $\mathrm{MMD}$, we average over all designs
\begin{align*}
\mathrm{MMD} = \frac{1}{D} \sum_{d = 1}^{D} \Bigg[ & 
\frac{1}{N(N-1)} \sum_{i \ne j} k(\hat{\boldsymbol{y}}_{di}, \hat{\boldsymbol{y}}_{dj}) \\
& + \frac{1}{M(M-1)} \sum_{i \ne j} k(\boldsymbol{y}_{di}, \boldsymbol{y}_{dj}) \\
& - \frac{2}{NM} \sum_{i = 1}^{N} \sum_{j = 1}^{M} k(\hat{\boldsymbol{y}}_{di}, \boldsymbol{y}_{dj}) 
\Bigg],
\end{align*}
where $k(\boldsymbol{x}, \boldsymbol{y})$ is the radial basis function (RBF) $$k(\boldsymbol{x}, \boldsymbol{y}) = \exp\left(-\frac{\|\boldsymbol{x} - \boldsymbol{y}\|_{2}^{2}}{2\sigma^{2}}\right)$$ for bandwidth $\sigma$. \citet{mmd} recommend using the median heuristic to compute $\sigma$, which is the (empirical) median distance between points in the aggregate sample of $\boldsymbol{x}$ and $\boldsymbol{y}$. This is traditionally computed as $\sigma = \sqrt{\frac{H_{n}}{2}},$ where $H_{n} = \mathrm{Median} \left\{ \| XY_{n,i} - XY_{n,j} \|_{2}^{2} \,\middle|\, 1 \le i < j \le n \right\}$ for aggregate $XY$ \citep{garreau2017large}.

\paragraph{Log-Likelihood} Use the $N \times D$ samples from the true data-generating distribution and calculate the mean log-likelihood $$\text{Log-Likelihood} = \frac{1}{D} \sum_{d = 1}^{D} \left( \frac{1}{N} \sum_{i = 1}^{N}\log\left( \frac{1}{M} \sum_{j=1}^{M} p(\boldsymbol{y}_{i} \mid \boldsymbol{\theta}_j, \boldsymbol{\xi}_{d}) \right) \right),$$
where $\boldsymbol{y}_{i}$ are the samples from the true data-generating distribution and $\boldsymbol{\theta}_{j}$ are samples from the posterior. The likelihood used is that from the assumed statistical model.

\section{Additional Experiments}
\label{additionalexperappendix}

\subsection{Complete Results for Linear Regression, Pharmacokinetics, and Well-Specified Settings}
\label{regresspharmacoapp}

\Tableref{condensedresults} contains the results for both the linear regression and pharmacokinetics experimental design problems. We find that the Gibbs EIG generally performs better than using the BEIG or a random acquisition method for choosing designs. GBOED overall performs better than BOED too, and exponential decay appears to offer the best performance in general.

\paragraph{GBOED in Well-Specified Scenarios} In well-specified settings, there is no need to be robust. Bayesian inference is known to be the optimal way to proceed in performing parameter inference \citep{optimal_bayes}, and BOED performs strongly on both the linear regression and pharmacokinetics problems. As pharmacokinetics only has $T = 5$ experiments, it is a lot more difficult for GBOED to provide performance much closer to BOED. This is likely because the loss functions chosen, in particular weighted score matching, are focused on robustness, and so they do not place much trust in the well-specified data -- they also do not have enough data to realise that there are no outliers. However, BOED can either marginally or severely fall behind GBOED in all three experimental design problems under the well-specified setting (see \Tableref{locfindingresults} and \Tableref{condensedresults}). 
This could reflect slight inaccuracies in the posterior approximation, due to the use of variational inference or a lack of data being collected. If the issue was with variational inference (such as with the choice of variational family), an issue others have noted in the location finding setting \citep{ivanova-phd}, GBOED (specifically, Gibbs inference) appears to provide robustness against poor choices of variational family or other variational parameters. There is a good chance that, given enough data, BOED would eventually outperform GBOED for the linear regression and pharmacokinetics problems, after observing in \Figref{plotallscenarioregres} and \Figref{plotallscenariopharma} the continued rises in performance per experiment. But such a reliance on large enough data is problematic when the idea is to optimally acquire data using as little resources as possible; a small dataset with the same information as a much larger one is ideal. 
We finally mention the strength of GBOED over using alternative acquisition functions with Gibbs inference. The Gibbs EIG often offers superior performance in the well-specified setting over randomly selecting designs, and interestingly, over using the BEIG. This is particularly true for the linear regression and pharmacokinetics problems (see \Tableref{condensedresults}), where in pharmacokinetics the gap in performance can be very significant. In lower dimensions this seems to continue for the location finding problem too, but not necessarily as the dimensions increase (see \Tableref{locfindingresults}).

\begin{table*}[ht!]
\centering
\renewcommand{\arraystretch}{1.05}
\setlength{\tabcolsep}{4pt}
\begin{adjustbox}{max width=\textwidth}
\begin{tabular}{lcccccc}
\toprule
\multirow{2}{*}{\textbf{Method}} & 
\multicolumn{2}{c}{Well-Specified} & 
\multicolumn{2}{c}{Asymmetric Outliers} & 
\multicolumn{2}{c}{Laplacian / Misspec. Error Dist.} \\
\cmidrule(lr){2-3} \cmidrule(lr){4-5} \cmidrule(lr){6-7}
 & \textbf{MMD} & \textbf{NLL} & \textbf{MMD} & \textbf{NLL} & \textbf{MMD} & \textbf{NLL} \\
\midrule
\multicolumn{7}{c}{{Linear Regression (90 Replications)}} \\[0.2em]
BOED & $0.134\,(0.012)$ & $1.803\,(0.050)$ & $0.548\,(0.034)$ & $4.048\,(0.328)$ & $0.162\,(0.015)$ & $2.433\,(0.078)$ \\
Power-Like $\omega = 0.8$ & $0.189\,(0.016)$ & $1.988\,(0.067)$ & $0.571\,(0.035)$ & $4.042\,(0.278)$ & $0.177\,(0.014)$ & $2.445\,(0.065)$ \\
Unweighted-SM & $0.081\,(0.008)$ & $1.642\,(0.038)$ & $0.519\,(0.030)$ & $3.662\,(0.191)$ & \textbf{0.112\,(0.011)} & \textbf{2.249\,(0.049)} \\
\citet{laplante2025robustconjugatespatiotemporalgaussian} & $0.938\,(0.065)$ & $31.993\,(2.388)$ & $1.030\,(0.061)$ & $28.663\,(2.109)$ & $0.935\,(0.062)$ & $32.444\,(2.441)$ \\
Exp-Decay $b = 0.04$ & \textbf{0.077\,(0.008)} & \textbf{1.629\,(0.037)} & \textbf{0.480\,(0.031)} & \textbf{3.483\,(0.206)} & \textbf{0.112\,(0.012)} & \textbf{2.249\,(0.053)} \\
\textcolor{blindred}{Random} + \textcolor{blindblue}{Exp-Decay} & $0.107\,(0.011)$ & $1.742\,(0.053)$ & $0.573\,(0.033)$ & $4.515\,(0.341)$ & $0.140\,(0.014)$ & $2.359\,(0.069)$ \\
\textcolor{blindred}{BEIG} + \textcolor{blindblue}{Exp-Decay} & $0.081\,(0.009)$ & $1.645\,(0.041)$ & $0.499\,(0.035)$ & $3.928\,(0.321)$ & $0.131\,(0.017)$ & $2.424\,(0.129)$ \\
\midrule
\multicolumn{7}{c}{{Pharmacokinetics (100 Replications)}} \\[0.2em]
BOED & $0.122\,(0.009)$ & $1.417\,(0.033)$ & $0.916\,(0.058)$ & $16.188\,(1.643)$ & $0.251\,(0.012)$ & $4.978\,(0.298)$ \\
Power-Like & \textbf{0.116\,(0.008)} & \textbf{1.377\,(0.025)} & $0.858\,(0.050)$ & $10.581\,(1.288)$ & $0.232\,(0.008)$ & $3.874\,(0.110)$ \\
Unweighted-SM & $0.351\,(0.010)$ & $2.903\,(0.081)$ & $1.079\,(0.068)$ & $22.839\,(1.799)$ & $0.297\,(0.021)$ & $5.351\,(0.375)$ \\
\citet{laplante2025robustconjugatespatiotemporalgaussian} & $0.208\,(0.013)$ & $1.812\,(0.063)$ & $0.441\,(0.006)$ & $2.211\,(0.017)$ & $0.223\,(0.006)$ & $2.937\,(0.036)$ \\
Exp-Decay $b = 0.04$ & $0.176\,(0.014)$ & $1.741\,(0.115)$ & \textbf{0.426\,(0.006)} & \textbf{2.169\,(0.014)} & \textbf{0.218\,(0.007)} & \textbf{2.911\,(0.034)} \\
\textcolor{blindred}{Random} + \textcolor{blindblue}{Exp-Decay} & $0.329\,(0.015)$ & $2.993\,(0.128)$ & $0.440\,(0.005)$ & $2.204\,(0.015)$ & $0.271\,(0.007)$ & $3.099\,(0.044)$ \\
\textcolor{blindred}{BEIG} + \textcolor{blindblue}{Exp-Decay} &  $0.356\,(0.021)$ & $2.703\,(0.178)$ & $0.434\,(0.005)$ & $2.190\,(0.014)$ & $0.241\,(0.006)$ & $3.002\,(0.039)$ \\
\bottomrule
\end{tabular}
\end{adjustbox}
\caption{Mean ($\pm$ SE) MMD/NLL over 90 (linear regression) or 100 (pharmacokinetics) runs under well- and misspecified models; best in bold. Full results in \Appref{regressionappendix} and \Appref{pharmaappendix}. Methods named, excluding BOED and \textcolor{blindred}{Acquisition} + \textcolor{blindblue}{Gibbs Loss}, are GBOED with the loss function named. $\omega = 1.0$ for the linear regression problem, $\omega = 0.8$ for the pharmacokinetics problem under the well-specified setting, and $\omega = 0.1$ for the pharmacokinetics problem under misspecified settings.}
\label{condensedresults}
\end{table*} 

\subsection{Effect of Rate Selection on the Exponential Decay Method}
\label{exponentialdecappendrate}

A robust method for selecting $c$ should enable enough time (i.e., be large enough) to learn the centring function $\gamma$ in early experiments, and enable enough time to be robust to the acquired data (i.e., be small enough to discern between outliers). Finding the right balance is difficult, however, as we find throughout this paper, our exponential decay method is a competitive method for selecting $c$.

The effect of choosing a certain rate $b$ in our exponential decay method on the final value of $c$ in each experiment can be viewed in \Figref{bexponentialdecay}. Here, experiments are in both the Bayesian linear regression and location finding settings with the parameters controlling the starting and ending values (assuming convergence) of $c$ during experimentation $q_{1} = 9$ and $q_{2} = 1$. No matter the rate chosen, $c = 10$ in the first experiment as a result. In the early experiments, the loss function is similar to the unweighted score matching loss. By setting $b$, we control how quickly the loss function deviates from unweighted score matching to provide additional robustness.

\begin{figure}
    \centering
    \includegraphics[width=80mm]{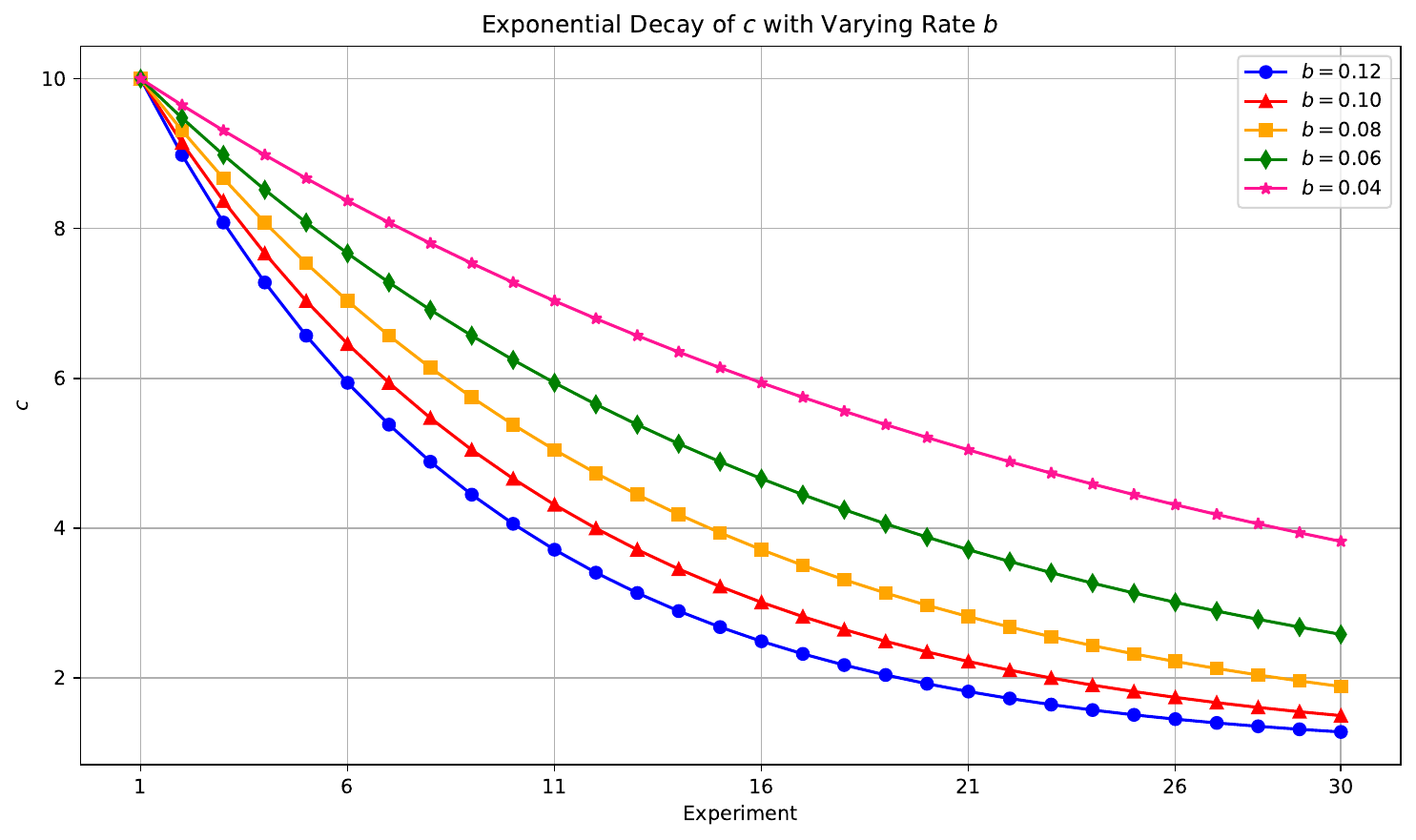}
    \caption{Comparison of different values of the rate $b$ in the exponential decay method for selecting $c$ in $r_{\mathrm{IMQ}}$. $b = 0.04$ results in the slowest decrease of $c$ per experiment, whereas $b = 0.12$ results in the fastest decrease.}
    \label{bexponentialdecay}
\end{figure}

Clearly, smaller values of $b$ result in smaller decreases of $c$ per experiment. This can lead to spending more time learning a sensible estimate of the centring function $\gamma$, before more rapid decreases in $c$ lead to substantially more robustness. A larger value of $b$, such as $b = 0.12$, can lead to $c$ becoming smaller faster, using fewer experiments to learn a good estimate of $\gamma$. A good balance needs to be made between learning $\gamma$ in good enough time and in having $c$ become small to allow for a robust model to be learnt.

\subsection{Bayesian Linear Regression}
\label{regressionappendix}

\subsubsection{Complete Results}

The results presented in the main paper with all methods and metrics can be found in \Tableref{bayesregresultsfull}. All three metrics generally agree with each other. A figure containing performance during the entire experimental horizon is provided in \Figref{plotallscenarioregres}, with all three metrics, and for the well-specified, asymmetric outlier, and misspecified error distribution scenarios.

\begin{table}[ht!]
\centering
\begin{tabular}{lccc}
\toprule
\textbf{Method} & \textbf{RMSE} & \textbf{MMD} & \textbf{NLL} \\
\midrule
\multicolumn{4}{c}{Well-Specified} \\[0.2em]
BOED & $1.7383\,(0.0476)$ & $0.1340\,(0.0119)$ & $1.8032\,(0.0499)$ \\
Power-Like $\omega = 0.8$ & $1.8665\,(0.0577)$ & $0.1889\,(0.0157)$ & $1.9885\,(0.0669)$ \\
Unweighted-SM & $1.6090\,(0.0406)$ & $0.0807\,(0.0083)$ & $1.6423\,(0.0383)$ \\
\citet{laplante2025robustconjugatespatiotemporalgaussian} & $6.9500\,(0.4326)$ & $0.9378\,(0.0652)$ & $31.9929\,(2.3881)$ \\
Exp-Decay $b = 0.04$  & \textbf{1.6024\,(0.0402)} & \textbf{0.0768\,(0.0079)} & \textbf{1.6288\,(0.0371)} \\
\textcolor{blindred}{Random} + \textcolor{blindblue}{Exp-Decay} & $1.6843\,(0.0484)$ & $0.1068\,(0.0114)$ & $1.7419\,(0.0530)$ \\
\textcolor{blindred}{BEIG} + \textcolor{blindblue}{Exp-Decay} & $1.6107\,(0.0419)$ & $0.0810\,(0.0093)$ & $1.6448\,(0.0412)$ \\
\midrule
\multicolumn{4}{c}{Asymmetric Outliers} \\[0.2em]
BOED & $2.6952\,(0.1224)$ & $0.5479\,(0.0341)$ & $4.0477\,(0.3282)$ \\
Power-Like $\omega = 0.8$ & $2.7546\,(0.1188)$ & $0.5705\,(0.0348)$ & $4.0425\,(0.2785)$ \\
Unweighted-SM & $2.5351\,(0.0960)$ & $0.5185\,(0.0301)$ & $3.6622\,(0.1907)$ \\
\citet{laplante2025robustconjugatespatiotemporalgaussian} & $6.8456\,(0.3924)$ & $1.0299\,(0.0611)$ & $28.6630\,(2.1086)$ \\
Exp-Decay $b = 0.04$  & \textbf{2.4716\,(0.1016)} & \textbf{0.4803\,(0.0314)} & \textbf{3.4828\,(0.2064)} \\
\textcolor{blindred}{Random} + \textcolor{blindblue}{Exp-Decay} & $2.7877\,(0.1211)$ & $0.5732\,(0.0329)$ & $4.5146\,(0.3413)$ \\
\textcolor{blindred}{BEIG} + \textcolor{blindblue}{Exp-Decay} & $2.5645\,(0.1186)$ & $0.4990\,(0.0346)$ & $3.9283\,(0.3206)$ \\
\midrule
\multicolumn{4}{c}{Laplacian Errors} \\[0.2em]
BOED & $2.0862\,(0.0558)$ & $0.1615\,(0.0153)$ & $2.4333\,(0.0780)$ \\
Power-Like $\omega = 0.8$ & $2.1284\,(0.0569)$ & $0.1774\,(0.0141)$ & $2.4446\,(0.0649)$ \\
Unweighted-SM & \textbf{1.9614\,(0.0465)} & \textbf{0.1124\,(0.0107)} & \textbf{2.2486\,(0.0493)} \\
\citet{laplante2025robustconjugatespatiotemporalgaussian} & $7.1135\,(0.4261)$ & $0.9350\,(0.0619)$ & $32.4439\,(2.4411)$ \\
Exp-Decay $b = 0.04$  & $1.9652\,(0.0487)$ & $0.1125\,(0.0115)$ & $2.2494\,(0.0534)$ \\
\textcolor{blindred}{Random} + \textcolor{blindblue}{Exp-Decay} & $2.0443\,(0.0570)$ & $0.1398\,(0.0138)$ & $2.3585\,(0.0690)$ \\
\textcolor{blindred}{BEIG} + \textcolor{blindblue}{Exp-Decay} & $2.0169\,(0.0604)$ & $0.1309\,(0.0172)$ & $2.4242\,(0.1291)$ \\
\bottomrule
\end{tabular}
\caption{Comparison of methods under both well-specified and misspecified scenarios in the regression problem, with $\omega = 1$ if not stated. $(q_{1}, q_{2}) = (9, 1)$ and $b = 0.04$ for the exponential decay method. RMSE, MMD, and NLL are recorded with mean (and standard error) over 3 different models, each replicated under 30 random seeds $(3 \times 30 = 90 \text{ replications in total})$.}
\label{bayesregresultsfull}
\end{table}

\begin{figure}[ht!]
    \centering
    \includegraphics[width=\linewidth]{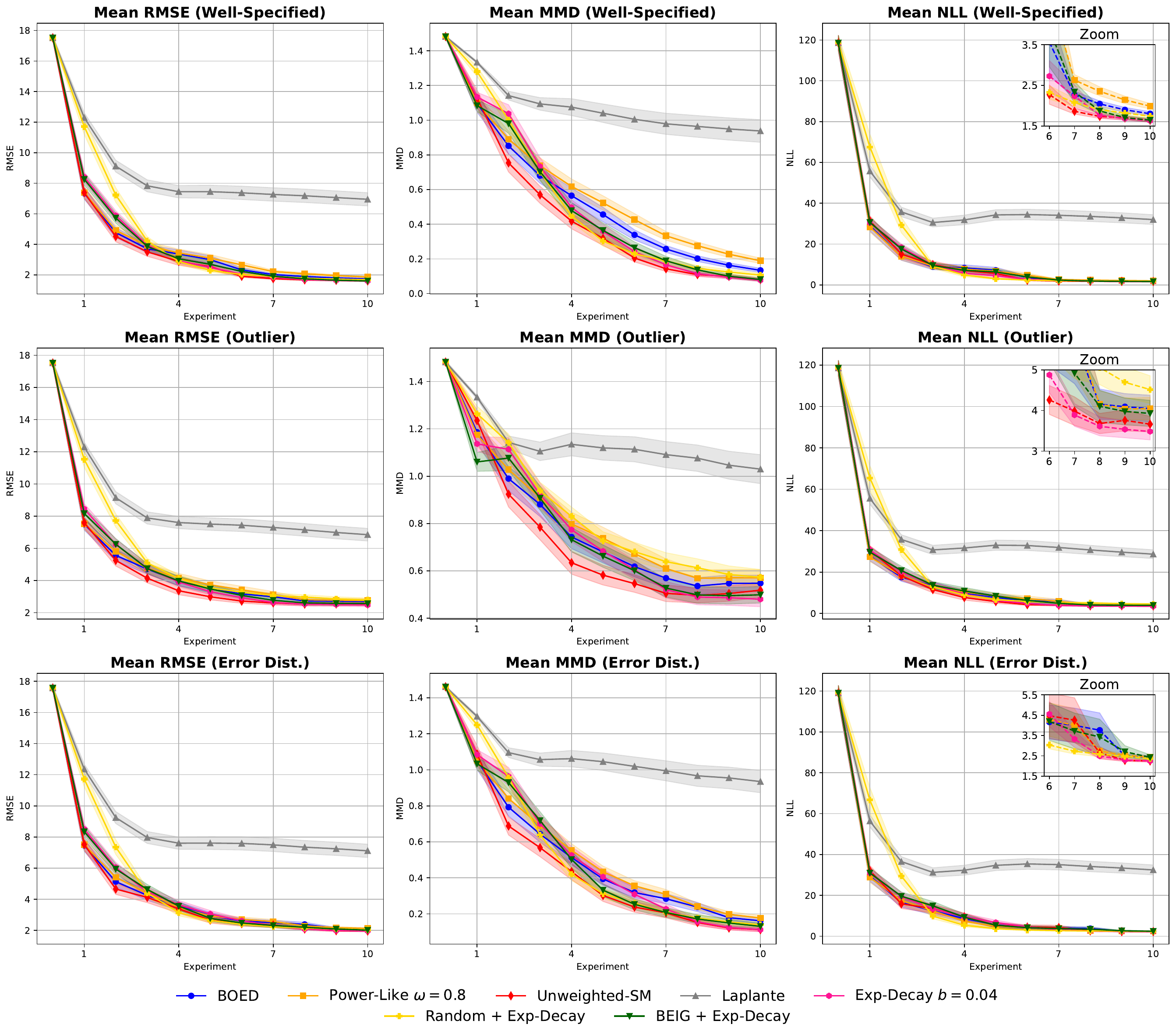}
    \caption{Methods compared on the well-specified scenario and the two misspecified scenarios, for the linear regression problem. Top row displays the well-specified scenario with RMSE, MMD, and NLL from left to right. Middle row displays the asymmetric outlier scenario. Bottom row displays the Laplacian error distribution scenario.}
    \label{plotallscenarioregres}
\end{figure}

\subsubsection{BEIG of Final Designs, and Inference on a Fixed Dataset}
\label{beignllregression}
We investigate in terms of the BEIG how different the designs selected by each GBOED method are from standard BOED. To isolate the effect of Gibbs inference, we here additionally show the results of experiments in which a dataset is already made available and inference is conducted according to a certain method.

The BEIG column in \Tableref{eig_nll_comparison} focuses on how much the designs chosen by each method differ from those selected by the BEIG. This is in terms of the design sequence's absolute difference in BEIG from that of performing a standard BOED regime. Larger values in the BEIG column indicate that the respective method selects designs that a BOED regime would not. The other columns in \Tableref{eig_nll_comparison} present the RMSE, MMD, and NLL from solely performing Gibbs (or Bayesian for BOED) inference on a fixed dataset of designs and observations (i.e., no new data is collected and so a utility function serves no purpose). Each datapoint was processed sequentially to enable the use of sequential inference methods like that by \citet{laplante2025robustconjugatespatiotemporalgaussian}, performing $N$ posterior updates for each of the $N$ datapoints. The datasets were acquired using BOED; one dataset for the well-specified setting, and the other for the asymmetric outlier setting. The data is not necessarily the same across both settings as a result, but all methods anyway see the same data for a specific setting.

Unweighted score matching results in very small differences from designs selected by a BOED regime in terms of BEIG, as seen in \Tableref{eig_nll_comparison}. However, when we start using weighted score matching or, in the case of the power likelihood loss function, choose a lower learning rate, the Gibbs EIG starts to deviate from the BEIG. Using the method by \citet{laplante2025robustconjugatespatiotemporalgaussian} with $c = 2$, and using a power likelihood with $\omega = 0.8$ leads to the final set of designs being more different than those selected on the basis of maximising the BEIG (as in BOED). Interestingly, $c = 10$ is closer to the BEIG under misspecification than in the well-specified setting. Our exponential decay method also appears to be close to the BEIG under misspecification for $b = 0.04$, likely a result of $c$ not falling too quickly during experimentation, unlike the method by \citet{laplante2025robustconjugatespatiotemporalgaussian}.

As we see below, even with a fixed dataset of designs and observations, Gibbs inference leads Bayesian inference by a good margin. A notable distinction between the results in the main paper and \Tableref{eig_nll_comparison} is that predictive performance is generally poorer when we use this fixed dataset, in contrast to using the Gibbs EIG to adaptively select designs -- meaning that the same sequence of designs chosen by the BEIG under BOED is not optimal for Gibbs inference.

\begin{table}[ht!]
\centering
\renewcommand{\arraystretch}{1.05}
\setlength{\tabcolsep}{4.5pt}
\begin{adjustbox}{max width=\linewidth}
\begin{tabular}{lcccc}
\toprule
\textbf{Method} & \textbf{BEIG} & \textbf{RMSE} & \textbf{MMD} & \textbf{NLL} \\
\midrule
\multicolumn{5}{c}{Well-Specified} \\[0.2em]
BOED & $-$ & $1.7383\,(0.0476)$ & $0.1340\,(0.0119)$ & $1.8032\,(0.0499)$ \\
Power-Like $\omega = 0.8$ & $0.0031\,(0.0009)$ & $1.8439\,(0.0555)$ & $0.1787\,(0.0149)$ & $1.9474\,(0.0619)$ \\
Unweighted-SM & $0.0011\,(0.0014)$ & \textbf{1.5981\,(0.0395)} & \textbf{0.0757\,(0.0080)} & \textbf{1.6258\,(0.0365)} \\
\citet{laplante2025robustconjugatespatiotemporalgaussian} & $0.0441\,(0.0035)$ & $6.9071\,(0.4383)$ & $0.9425\,(0.0661)$ & $31.7845\,(2.4816)$ \\
$r_{\mathrm{IMQ}} \mid c = 2$ & $0.0480\,(0.0040)$ & $5.5147\,(0.3793)$ & $0.8503\,(0.0634)$ & $19.7778\,(1.7382)$ \\
$r_{\mathrm{IMQ}} \mid c = 10$ & $0.0033\,(0.0014)$ & $1.6037\,(0.0403)$ & $0.0781\,(0.0083)$ & $1.6331\,(0.0377)$ \\
Exp-Decay $b = 0.04$ & $0.0028\,(0.0014)$ & $1.6118\,(0.0419)$ & $0.0814\,(0.0091)$ & $1.6450\,(0.0405)$ \\
Exp-Decay $b = 0.08$ & $0.0023\,(0.0014)$ & $1.6544\,(0.0553)$ & $0.0966\,(0.0144)$ & $1.7353\,(0.0844)$ \\
Exp-Decay $b = 0.10$ & $0.0012\,(0.0014)$ & $1.7131\,(0.0764)$ & $0.1131\,(0.0193)$ & $1.9027\,(0.1740)$ \\
\midrule
\multicolumn{5}{c}{Asymmetric Outliers} \\[0.2em]
BOED & $-$ & $2.6952\,(0.1224)$ & $0.5479\,(0.0341)$ & $4.0477\,(0.3282)$ \\
Power-Like $\omega = 0.8$ & $0.0031\,(0.0012)$ & $2.7981\,(0.1298)$ & $0.5775\,(0.0357)$ & $4.2296\,(0.3442)$ \\
Unweighted-SM & $0.0007\,(0.0013)$ & \textbf{2.5568\,(0.1125)} & $0.5121\,(0.0323)$ & \textbf{3.8674\,(0.3104)} \\
\citet{laplante2025robustconjugatespatiotemporalgaussian} & $0.0453\,(0.0036)$ & $6.8142\,(0.4061)$ & $1.0218\,(0.0595)$ & $29.0563\,(2.228)$ \\
$r_{\mathrm{IMQ}} \mid c = 2$ & $0.0499\,(0.0046)$ & $5.3316\,(0.3580)$ & $0.8948\,(0.0600)$ & $17.3778\,(1.5928)$ \\
$r_{\mathrm{IMQ}} \mid c = 10$ & $0.0014\,(0.0013)$ & $2.5646\,(0.1181)$ & $0.5045\,(0.0340)$ & $3.9041\,(0.3199)$ \\
Exp-Decay $b = 0.04$ & $0.0017\,(0.0014)$ & $2.5854\,(0.1243)$ & \textbf{0.5031\,(0.0356)} & $3.9824\,(0.3358)$ \\
Exp-Decay $b = 0.08$ & $0.0023\,(0.0014)$ & $2.6330\,(0.1349)$ & $0.5056\,(0.0381)$ & $4.1507\,(0.3655)$ \\
Exp-Decay $b = 0.10$ & $0.0015\,(0.0015)$ & $2.6716\,(0.1421)$ & $0.5096\,(0.0397)$ & $4.2866\,(0.3872)$ \\
\bottomrule
\end{tabular}
\end{adjustbox}
\caption{Comparison of methods based on the absolute difference from BOED of designs selected (in terms of the BEIG) and the RMSE, MMD, and NLL under a fixed set of designs and observations in the regression problem. Mean (and standard error) over 90 replications. $\omega = 1$ if not stated.}
\label{eig_nll_comparison}
\end{table}

\subsubsection{Result of Using Different Rates in the Exponential Decay Method}

We can view in practice the effect of using different rates $b$ from \Appref{exponentialdecappendrate} on the final predictive performance in the regression problem. The results from varying $b$ can be found in \Tableref{differentbregression}. 

From \Tableref{differentbregression}, it would appear that very small decreases in $c$ during experimentation are better than larger ones -- $b = 0.04$ is generally much better than $b = 0.12$. Higher values of $b$ would behave more similarly to the \citet{laplante2025robustconjugatespatiotemporalgaussian} method, which we know from \Tableref{bayesregresultsfull} performs very poorly in our regression setting, in which the initial prior and true posterior are generally far apart. One could also investigate whether adjusting $q_{1}$ and $q_{2}$ in the exponential decay method would improve performance.

\begin{table}[ht!]
\centering
\begin{tabular}{lccc}
\toprule
\textbf{Rate} & \textbf{RMSE} & \textbf{MMD} & \textbf{NLL} \\
\midrule
\multicolumn{4}{c}{Well-Specified} \\[0.2em]
$b = 0.04$  & \textbf{1.6024\,(0.0402)} & \textbf{0.0768\,(0.0079)} & \textbf{1.6288\,(0.0371)} \\
$b = 0.06$  & $1.6086\,(0.0415)$ & $0.0793\,(0.0086)$ & $1.6377\,(0.0393)$ \\
$b = 0.08$  & $1.6191\,(0.0428)$ & $0.0837\,(0.0090)$ & $1.6517\,(0.0410)$ \\
$b = 0.10$  & $1.6418\,(0.0475)$ & $0.0931\,(0.0112)$ & $1.6891\,(0.0503)$ \\
$b = 0.12$  & $1.6954\,(0.0632)$ & $0.1114\,(0.0170)$ & $1.8088\,(0.0997)$ \\
\midrule
\multicolumn{4}{c}{Asymmetric Outliers} \\[0.2em]
$b = 0.04$  & \textbf{2.4716\,(0.1016) }& \textbf{0.4803\,(0.0314)} & \textbf{3.4828\,(0.2064)} \\
$b = 0.06$  & $2.4815\,(0.1072)$ & $0.4766\,(0.0326)$ & $3.5221\,(0.2244)$ \\
$b = 0.08$  & $2.5163\,(0.1152)$ & $0.4806\,(0.0342)$ & $3.6306\,(0.2473)$ \\
$b = 0.10$  & $2.5908\,(0.1310)$ & $0.4911\,(0.0367)$ & $3.9014\,(0.3102)$ \\
$b = 0.12$  & $2.6481\,(0.1421)$ & $0.4997\,(0.0384)$ & $4.0867\,(0.3496)$ \\
\midrule
\multicolumn{4}{c}{Laplacian Errors} \\[0.2em]
$b = 0.04$  & \textbf{1.9652\,(0.0487)} & \textbf{0.1125\,(0.0115)} & \textbf{2.2494\,(0.0534)} \\
$b = 0.06$  & $1.9708\,(0.0504)$ & $0.1141\,(0.0120)$ & $2.2581\,(0.0567)$ \\
$b = 0.08$  & $1.9841\,(0.0537)$ & $0.1185\,(0.0132)$ & $2.2861\,(0.0663)$ \\
$b = 0.10$  & $1.9995\,(0.0604)$ & $0.1218\,(0.0147)$ & $2.3321\,(0.0998)$ \\
$b = 0.12$  & $2.1027\,(0.0947)$ & $0.1480\,(0.0210)$ & $2.6729\,(0.2668)$ \\
\bottomrule
\end{tabular}
\caption{Comparison of different rates $b$ from the exponential decay method under both well-specified and misspecified scenarios in the regression problem, with $\omega = 1$ and $(q_{1}, q_{2}) = (9, 1)$. RMSE, MMD, and NLL are recorded with mean (and standard error) over 3 different models, each replicated under 30 random seeds $(3 \times 30 = 90 \text{ replications in total})$.}
\label{differentbregression}
\end{table}

\subsubsection{Visualising the Gibbs EIG}
\label{visualgibbs1}

We explain the impact of different downweighting rates $c$ in \Appref{interpretgibbs}, and this can be visualised in \Figref{ceig_combined}. The effect of changing the learning rate $\omega$ on the Gibbs EIG is presented in \Figref{eigcomparisonlearningr}, with a review of learning rate selection in \Appref{learningrateselection}. Smaller learning rates appear to result in lower EIG values being output overall. In linear regression, a smaller learning rate also results in the extremes of the design space not always being queried, unlike with the BEIG. Changing the downweighting rate has similar behaviour.

\subsection{Pharmacokinetics}
\label{pharmaappendix}

\subsubsection{Complete Results}

The results presented in the main paper with all methods and metrics can be found in \Tableref{pharmaresultsfull}. We see that BOED is strong in well-specified settings, but much weaker than GBOED when the model is misspecified. The RMSE disagrees with the MMD and NLL for the misspecified error variance scenario, and suggests that BOED is strongest. Since the RMSE is the only metric suggesting different conclusions, we make conclusions based on the MMD and NLL. Interestingly, GBOED with power likelihoods in the well-specified setting perform better than BOED, which may be a result of improved design selection via the Gibbs EIG, and/or better variational inference. On average, the Gibbs EIG appears more powerful in every scenario (whether misspecified or not) than using the BEIG or random acquisition with Gibbs inference. \Figref{plotallscenariopharma} shows performance across the entire experimental horizon, with all three metrics, and for the well-specified, asymmetric outlier, and misspecified error distribution scenarios.

\begin{table}[ht!]
\centering
\begin{tabular}{lccc}
\toprule
\textbf{Method} & \textbf{RMSE} & \textbf{MMD} & \textbf{NLL} \\
\midrule
\multicolumn{4}{c}{Well-Specified} \\[0.2em]
BOED & $1.4047\,(0.0139)$ & $0.1223\,(0.0088)$ & $1.4166\,(0.0331)$ \\
Power-Like $\omega = 0.8$ & \textbf{1.3980\,(0.0124)} & \textbf{0.1157\,(0.0075)} & \textbf{1.3770\,(0.0254)} \\
Unweighted-SM & $1.8908\,(0.0208)$ & $0.3505\,(0.0098)$ & $2.9027\,(0.0815)$ \\
\citet{laplante2025robustconjugatespatiotemporalgaussian} & $1.5190\,(0.0212)$ & $0.2082\,(0.0131)$ & $1.8123\,(0.0626)$ \\
Exp-Decay $b = 0.04$ & $1.4831\,(0.0268)$ & $0.1764\,(0.0145)$ & $1.7411\,(0.1153)$ \\
\textcolor{blindred}{Random} + \textcolor{blindblue}{Exp-Decay} & $1.9565\,(0.0343)$ & $0.3291\,(0.0151)$ & $2.9925\,(0.1276)$ \\
\textcolor{blindred}{BEIG} + \textcolor{blindblue}{Exp-Decay} & $1.8226\,(0.0320)$ & $0.3564\,(0.0212)$ & $2.7030\,(0.1784)$ \\
\textcolor{blindred}{Random} + \textcolor{blindblue}{Laplante} & $1.9523\,(0.0390)$ & $0.3418\,(0.0178)$ & $3.0480\,(0.1425)$ \\
\textcolor{blindred}{BEIG} + \textcolor{blindblue}{Laplante} & $1.7951\,(0.0274)$ & $0.3482\,(0.0199)$ & $2.6574\,(0.1614)$ \\
\midrule
\multicolumn{4}{c}{Asymmetric Outliers} \\[0.2em]
BOED & $12.5387\,(1.8300)$ & $0.9163\,(0.0582)$ & $16.1877\,(1.6433)$ \\
Power-Like $\omega = 0.1$ & $8.7469\,(1.2392)$ & $0.8579\,(0.0498)$ & $10.5812\,(1.2877)$ \\
Unweighted-SM & $23.5026\,(2.1511)$ & $1.0789\,(0.0676)$ & $22.8389\,(1.7987)$ \\
\citet{laplante2025robustconjugatespatiotemporalgaussian} & $3.5266\,(0.0297)$ & $0.4408\,(0.0062)$ & $2.2112\,(0.0168)$ \\
Exp-Decay $b = 0.04$ & \textbf{3.4476\,(0.0251)} & \textbf{0.4256\,(0.0056)} & \textbf{2.1693\,(0.0143)} \\
\textcolor{blindred}{Random} + \textcolor{blindblue}{Exp-Decay} & $3.4908\,(0.0268)$ & $0.4396\,(0.0055)$ & $2.2037\,(0.0152)$ \\
\textcolor{blindred}{BEIG} + \textcolor{blindblue}{Exp-Decay} & $3.4734\,(0.0229)$ & $0.4344\,(0.0053)$ & $2.1897\,(0.0138)$ \\
\textcolor{blindred}{Random} + \textcolor{blindblue}{Laplante} & $3.5133\,(0.0288)$ & $0.4508\,(0.0059)$ & $2.2334\,(0.0168)$ \\
\textcolor{blindred}{BEIG} + \textcolor{blindblue}{Laplante} & $3.5400\,(0.0282)$ & $0.4483\,(0.0060)$ & $2.2290\,(0.0166)$ \\
\midrule
\multicolumn{4}{c}{Misspecified Error Variance} \\[0.2em]
BOED & \textbf{2.5538\,(0.0701)} & $0.2506\,(0.0124)$ & $4.9780\,(0.2977)$ \\
Power-Like $\omega = 0.1$ & $3.0046\,(0.0580)$ & $0.2316\,(0.0077)$ & $3.8735\,(0.1096)$ \\
Unweighted-SM & $2.9909\,(0.3115)$ & $0.2969\,(0.0206)$ & $5.3510\,(0.3751)$ \\
\citet{laplante2025robustconjugatespatiotemporalgaussian} & $3.9093\,(0.0478)$ & $0.2230\,(0.0064)$ & $2.9374\,(0.0362)$ \\
Exp-Decay $b = 0.04$ & $3.9101\,(0.0473)$ & \textbf{0.2184\,(0.0068)} & \textbf{2.9107\,(0.0342)} \\
\textcolor{blindred}{Random} + \textcolor{blindblue}{Exp-Decay} & $4.1663\,(0.0550)$ & $0.2709\,(0.0068)$ & $3.0987\,(0.0442)$ \\
\textcolor{blindred}{BEIG} + \textcolor{blindblue}{Exp-Decay} & $3.9971\,(0.0500)$ & $0.2413\,(0.0060)$ & $3.0015\,(0.0389)$ \\
\textcolor{blindred}{Random} + \textcolor{blindblue}{Laplante} & $4.1514\,(0.0557)$ & $0.2763\,(0.0071)$ & $3.1387\,(0.0474)$ \\
\textcolor{blindred}{BEIG} + \textcolor{blindblue}{Laplante} & $3.9596\,(0.0492)$ & $0.2371\,(0.0062)$ & $2.9961\,(0.0390)$ \\
\bottomrule
\end{tabular}
\caption{Comparison of methods under both well-specified $(\omega = 0.8)$ and misspecified scenarios $(\omega = 0.1)$ in the pharmacokinetics problem. $(q_{1}, q_{2}) = (0.8, 0.2)$ and $b = 0.04$ for the exponential decay method. RMSE, MMD, and NLL are reported with mean (and standard error) over 100 replications.}
\label{pharmaresultsfull}
\end{table}

\begin{figure}[ht!]
    \centering
    \includegraphics[width=\linewidth]{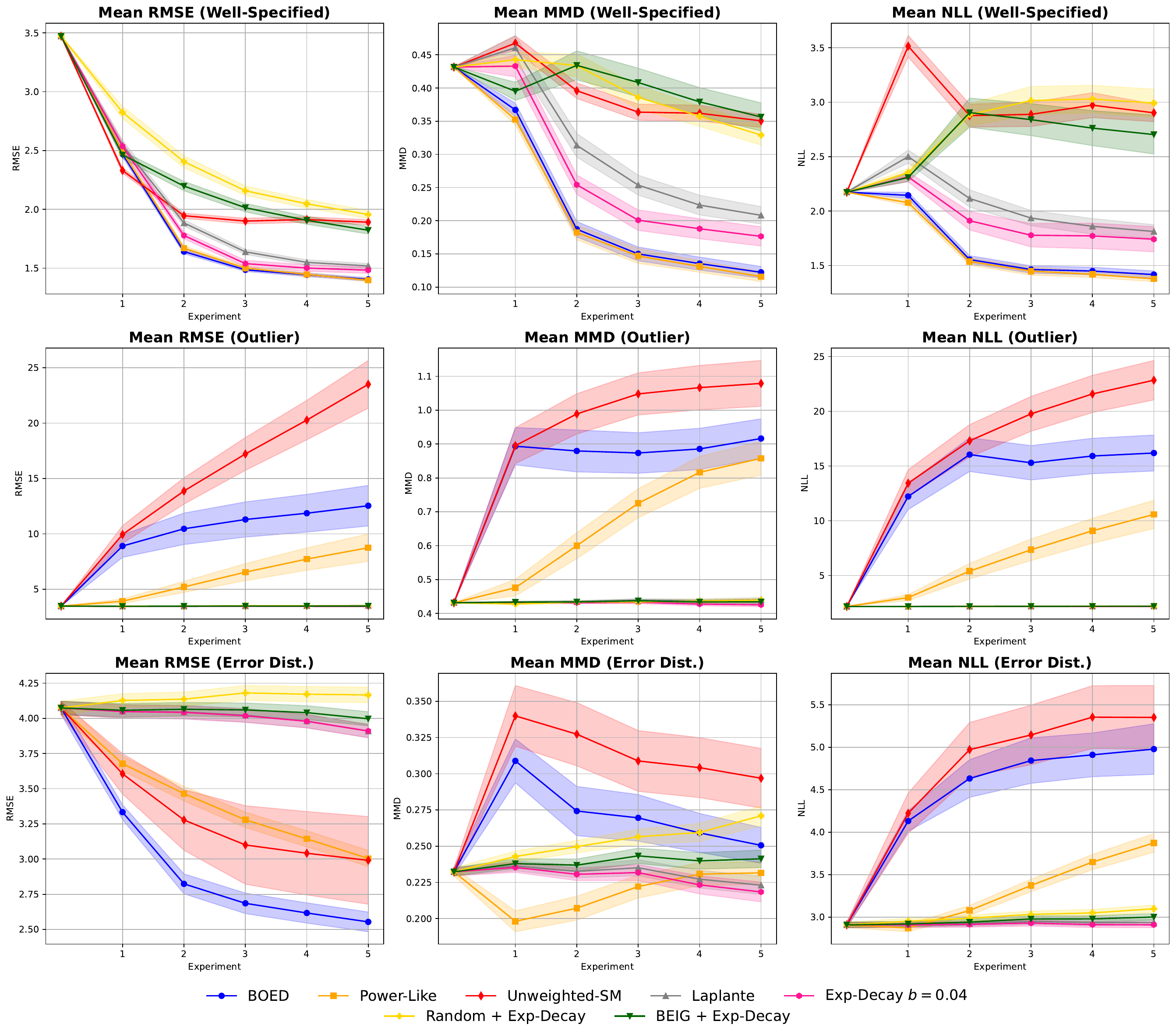}
    \caption{Methods compared on the well-specified scenario and the two misspecified scenarios, for the pharmacokinetics problem. Top row displays the well-specified scenario with RMSE, MMD, and NLL from left to right. Middle row displays the asymmetric outlier scenario. Bottom row displays the misspecified error distribution scenario.}
    \label{plotallscenariopharma}
\end{figure}

\subsubsection{Sensitivity to Learning Rate}

The learning rate $\omega$ can severely affect performance if set poorly. \Tableref{pharmaresultsfullomega1} displays the results of using $\omega = 0.4$ for the well-specified and asymmetric outlier scenarios, and $\omega = 0.2$ for the misspecified error variance scenario. Results appear  \Tableref{pharmaresultsfull}. The prior is not far from the true posterior, so smaller learning rates can focus more on robustly learning from data in misspecified settings, without worrying about slow learning in general.

Although using random acquisition or the BEIG with Gibbs inference can perform better than using the Gibbs EIG (comparing both \Tableref{pharmaresultsfull} and \Tableref{pharmaresultsfullomega1}) with the different learning rates used to compute the statistics in \Tableref{pharmaresultsfullomega1}, the performance is still not better than using full GBOED (with the Gibbs EIG) when the learning rates are those used to compute the statistics in \Tableref{pharmaresultsfull}.

\begin{table}[ht!]
\centering
\begin{tabular}{lccc}
\toprule
\textbf{Method} & \textbf{RMSE} & \textbf{MMD} & \textbf{NLL} \\
\midrule
\multicolumn{4}{c}{Well-Specified} \\[0.2em]
BOED & \textbf{1.4047\,(0.0139)} & \textbf{0.1223\,(0.0088)} & \textbf{1.4166\,(0.0331)} \\
Power-Like $\omega = 0.4$ & $1.5555\,(0.0173)$ & $0.1656\,(0.0081)$ & $1.5400\,(0.0315)$ \\
Unweighted-SM & $1.8085\,(0.0178)$ & $0.3148\,(0.0093)$ & $2.5261\,(0.0628)$ \\
\citet{laplante2025robustconjugatespatiotemporalgaussian} & $2.0657\,(0.0287)$ & $0.3386\,(0.0119)$ & $2.3824\,(0.0670)$ \\
Exp-Decay $b = 0.04$ & $2.1993\,(0.0305)$ & $0.3734\,(0.0120)$ & $2.7099\,(0.0743)$ \\
\textcolor{blindred}{Random} + \textcolor{blindblue}{Exp-Decay} & $2.2825\,(0.0423)$ & $0.3584\,(0.0123)$ & $2.4416\,(0.0698)$ \\
\textcolor{blindred}{BEIG} + \textcolor{blindblue}{Exp-Decay} & $2.1039\,(0.0248)$ & $0.3487\,(0.0100)$ & $2.4853\,(0.0607)$ \\
\textcolor{blindred}{Random} + \textcolor{blindblue}{Laplante} & $2.0981\,(0.0344)$ & $0.3220\,(0.0131)$ & $2.3295\,(0.0772)$ \\
\textcolor{blindred}{BEIG} + \textcolor{blindblue}{Laplante} & $1.9072\,(0.0225)$ & $0.2913\,(0.0094)$ & $2.0929\,(0.0461)$ \\
\midrule
\multicolumn{4}{c}{Asymmetric Outliers} \\[0.2em]
BOED & $12.5387\,(1.8300)$ & $0.9163\,(0.0582)$ & $16.1877\,(1.6433)$ \\
Power-Like $\omega = 0.4$ & $10.9322\,(1.7273)$ & $0.9413\,(0.0505)$ & $13.9991\,(1.3953)$ \\
Unweighted-SM & $36.9604\,(3.7062)$ & $1.3193\,(0.0587)$ & $27.5834\,(1.7469)$ \\
\citet{laplante2025robustconjugatespatiotemporalgaussian} & $9.2519\,(0.6816)$ & $0.9138\,(0.0538)$ & $13.7762\,(1.0822)$ \\
Exp-Decay $b = 0.04$ & $6.3912\,(0.4450)$ & $0.7856\,(0.0430)$ & $8.2364\,(0.6687)$ \\
\textcolor{blindred}{Random} + \textcolor{blindblue}{Exp-Decay} & \textbf{3.7863\,(0.1766)} & \textbf{0.5680\,(0.0259)} & \textbf{3.3209\,(0.1707)} \\
\textcolor{blindred}{BEIG} + \textcolor{blindblue}{Exp-Decay} & $4.7825\,(0.3071)$ & $0.6333\,(0.0339)$ & $5.1077\,(0.3765)$ \\
\textcolor{blindred}{Random} + \textcolor{blindblue}{Laplante} & $5.2808\,(0.3609)$ & $0.7226\,(0.0403)$ & $6.7262\,(0.5660)$ \\
\textcolor{blindred}{BEIG} + \textcolor{blindblue}{Laplante} & $8.0723\,(0.6355)$ & $0.8338\,(0.0516)$ & $11.7659\,(0.9788)$ \\
\midrule
\multicolumn{4}{c}{Misspecified Error Variance} \\[0.2em]
BOED & \textbf{2.5538\,(0.0701)} & $0.2506\,(0.0124)$ & $4.9780\,(0.2977)$ \\
Power-Like $\omega = 0.2$ & $2.8883\,(0.0629)$ & $0.2497\,(0.0075)$ & $4.4074\,(0.1586)$ \\
Unweighted-SM & $3.0248\,(0.4020)$ & $0.2933\,(0.0184)$ & $5.6814\,(0.4065)$ \\
\citet{laplante2025robustconjugatespatiotemporalgaussian} & $3.3612\,(0.0461)$ & \textbf{0.2310\,(0.0097)} & $3.4679\,(0.0679)$ \\
Exp-Decay $b = 0.04$ & $3.4097\,(0.0477)$ & $0.2344\,(0.0104)$ & $3.4671\,(0.0728)$ \\
\textcolor{blindred}{Random} + \textcolor{blindblue}{Exp-Decay} & $3.9548\,(0.0605)$ & $0.2903\,(0.0104)$ & $3.3712\,(0.0727)$ \\
\textcolor{blindred}{BEIG} + \textcolor{blindblue}{Exp-Decay} & $3.6047\,(0.0525)$ & $0.2461\,(0.0092)$ & \textbf{3.3189\,(0.0649)} \\
\textcolor{blindred}{Random} + \textcolor{blindblue}{Laplante} & $3.8468\,(0.0675)$ & $0.2945\,(0.0118)$ & $3.4793\,(0.0831)$ \\
\textcolor{blindred}{BEIG} + \textcolor{blindblue}{Laplante} & $3.4652\,(0.0520)$ & $0.2426\,(0.0089)$ & $3.4113\,(0.0695)$ \\
\bottomrule
\end{tabular}
\caption{Comparison of methods under both well-specified and misspecified scenarios in the pharmacokinetics problem, with $\omega = 0.4$ for the well-specified and asymmetric outlier scenarios, and $\omega = 0.2$ for the misspecified error variance scenario. $(q_{1}, q_{2}) = (0.8, 0.2)$ and $b = 0.04$ for the exponential decay method. RMSE, MMD, and NLL are reported with mean (and standard error) over 100 replications.}
\label{pharmaresultsfullomega1}
\end{table}

\subsubsection{Visualising the Gibbs EIG}
\label{visualgibbs2}

We present in \Figref{pharmaeigs} the (Gibbs) EIG for a fixed prior. When the loss function is the negative log-likelihood and $\omega = 1$, i.e., for the BEIG, the designs with maximal EIG are between $15 < \boldsymbol{\xi} < 20$. This remains the same when $\omega = 0.4$, though different designs within the region are optimal. When we use (unweighted) score matching, the Gibbs EIG is vastly different to the others. The maximal EIG is instead within $13 < \boldsymbol{\xi} < 17$, and the Gibbs EIG values are overall much larger than those computed under different loss functions. Weighted score matching instead assigns the greatest EIG at $22 \leq \boldsymbol{\xi} \leq 24$ (the rightmost extreme), showing greater differences between the curves, rather than those seen for the linear regression problem in \Figref{ceig_combined} and \Figref{eigcomparisonlearningr}.

\begin{figure}[ht!]
    \centering
    \includegraphics[width=120mm]{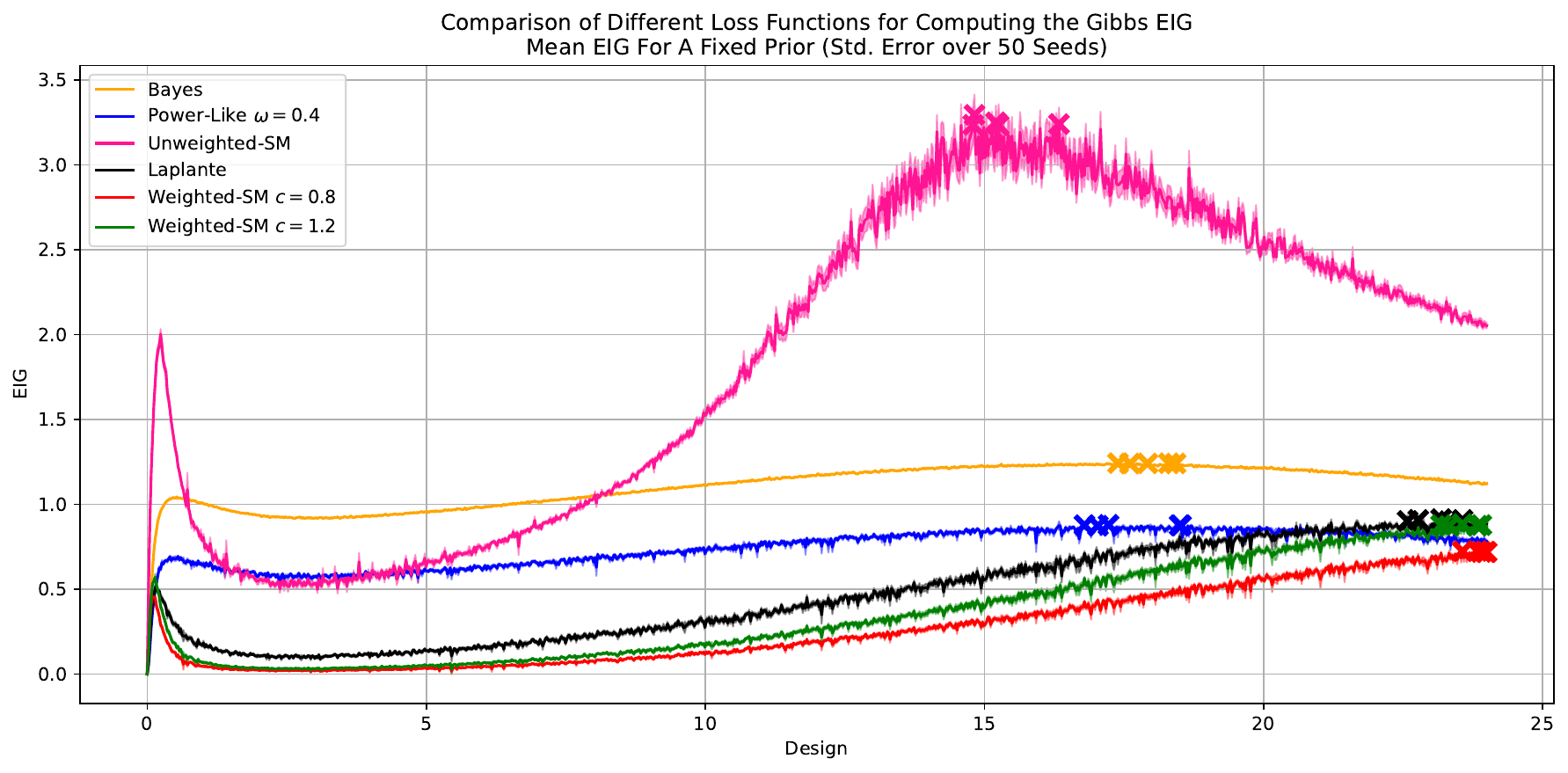}
    \caption{Comparison of different loss functions for computing the Gibbs EIG, under the pharmacokinetics problem. Marked crosses are the 5 designs with the greatest (Gibbs) EIG for a particular curve.}
    \label{pharmaeigs}
\end{figure}

\subsubsection{Deployment Times}

\Tableref{deploymenttimes1} presents the average duration of experimentation (deployment time) in all $T = 5$ experiments for the pharmacokinetics problem, including both selecting optimal designs and computing posteriors. We find that the deployment time of our experiments is affected by the choice and implementation of the loss function. Our considered loss functions can be more expensive than using BOED out of the box in Pyro \citep{bingham2018pyro} due to requiring more calculations, such as the predictive mean and $r_{\mathrm{IMQ}}$ as a whole. This likely is the reason why selecting designs randomly with Gibbs inference is more expensive than just performing BOED; Gibbs inference can be more expensive than Bayesian inference. 

\begin{table}[ht!]
\centering
\begin{tabular}{lc}
\toprule
\textbf{Method} & \textbf{Deployment Time (s)} \\
\midrule
BOED & \textbf{278.6617\,(1.4281)} \\
Power-Like $\omega = 0.8$ & $343.9569\,(0.9780)$ \\
Unweighted-SM & $336.1231\,(0.6788)$ \\
\citet{laplante2025robustconjugatespatiotemporalgaussian} & $480.7204\,(8.9707)$ \\
Exp-Decay $b = 0.04$ & $424.3125\,(1.7382)$ \\
\textcolor{blindred}{Random} + \textcolor{blindblue}{Exp-Decay} & $343.5165\,(3.6122)$ \\
\textcolor{blindred}{BEIG} + \textcolor{blindblue}{Exp-Decay} & $377.5761\,(0.4692)$ \\
\bottomrule
\end{tabular}
\caption{Deployment times in seconds for the pharmacokinetics problem under the well-specified setting, with $\omega = 0.8$ if not stated. Mean (and standard error) over 100 replications.}
\label{deploymenttimes1}
\end{table}

\subsection{Location Finding}
\label{locfindingappendix}

\subsubsection{Complete Results}

The results presented in the main paper with all methods and metrics can be found in \Tableref{locfindingresultsfull}, now also with increased precision by one decimal place. The metrics do not always agree with each other, but generally methods that use Gibbs inference outperform BOED. \Figref{plotallscenariolocfind} shows performance across the entire experimental horizon for $d = 2$, with all three metrics, and for the well-specified, asymmetric outlier, and misspecified error distribution scenarios.

We find that fully GBOED-based methods, particularly with the loss function and inference method by \citet{laplante2025robustconjugatespatiotemporalgaussian}, can beat BOED in predictive performance. Our ablation studies suggest that this relative performance is likely largely obtained through Gibbs inference, rather than to the Gibbs EIG: GBOED does not always outperform use of the BEIG or a random design selection together with Gibbs inference, in other words, when we do not use the Gibbs EIG. Randomly selecting designs appears to perform the best on average. This is mostly true for lower dimensions. The exception is in the misspecified error variance scenario, where using the BEIG or Gibbs EIG usually performs better instead. 

We should mention here that the BEIG may be powerful in the somewhat larger data setting that is location finding $(T = 30)$ with small learning rates for the Gibbs posterior. This is because the next posterior learnt is not so different from the immediately previous posterior, and so maximising the BEIG may be optimal in the somewhat mild misspecification settings we consider. This is while knowing that we ultimately conduct Gibbs inference, so no matter the designs selected (assuming a large enough number of them), we will be conducting robust inferences against the curated dataset. We have already found in the other experimental design problems that the BEIG is not very powerful when there are only a few experiments to be conducted (and thus data to be collected).

\begin{table}[ht!]
\centering
\begin{adjustbox}{max width=\textwidth, angle=90}
\begin{tabular}{lcccccccccccc}
\toprule
\multirow{2}{*}{\textbf{Method}} 
& \multicolumn{3}{c}{$d = 2$} 
& \multicolumn{3}{c}{$d = 4$} 
& \multicolumn{3}{c}{$d = 8$} 
& \multicolumn{3}{c}{$d = 16$} \\
\cmidrule(lr){2-4} \cmidrule(lr){5-7} \cmidrule(lr){8-10} \cmidrule(lr){11-13}
& \textbf{RMSE} & \textbf{MMD} & \textbf{NLL} 
& \textbf{RMSE} & \textbf{MMD} & \textbf{NLL} 
& \textbf{RMSE} & \textbf{MMD} & \textbf{NLL} 
& \textbf{RMSE} & \textbf{MMD} & \textbf{NLL} \\
\midrule
\multicolumn{13}{c}{Well-Specified} \\[0.2em]
BOED & $1.1277\,(0.0156)$ & $0.3665\,(0.0131)$ & $3.0747\,(0.0951)$ 
     & $0.7902\,(0.0030)$ & $0.0825\,(0.0031)$ & $1.0114\,(0.0114)$ 
     & $0.7171\,(0.0004)$ & $0.0066\,(0.0003)$ & $0.7450\,(0.0010)$ & $0.7084\,(0.0001)$ & $0.0009\,(0.0001)$ & $0.7285\,(0.0002)$ \\
Unweighted-SM & \textbf{1.0541\,(0.0103)} & $0.2421\,(0.0076)$ & $1.8884\,(0.0460)$ 
              & $0.7918\,(0.0034)$ & $0.0620\,(0.0029)$ & $0.9243\,(0.0101)$ 
              & $0.7166\,(0.0002)$ & $0.0058\,(0.0002)$ & $0.7425\,(0.0005)$ 
              & \textbf{0.7080\,(0.0001)} & \textbf{0.0006\,(0.0000)} & \textbf{0.7276\,(0.0001)} \\
\citet{laplante2025robustconjugatespatiotemporalgaussian} 
    & $1.1480\,(0.0066)$ & $0.1849\,(0.0043)$ & $1.3329\,(0.0182)$ 
    & $0.7784\,(0.0009)$ & $0.0368\,(0.0007)$ & $0.8352\,(0.0024)$ 
    & \textbf{0.7157\,(0.0001)} & \textbf{0.0049\,(0.0001)} & $0.7400\,(0.0002)$ 
    & $0.7081\,(0.0001)$ & \textbf{0.0006\,(0.0000)} & \textbf{0.7276\,(0.0001)} \\
Exp-Decay $b = 0.04$ 
    & $1.0723\,(0.0057)$ & $0.1808\,(0.0034)$ & $1.3675\,(0.0184)$ 
    & $0.7769\,(0.0010)$ & $0.0388\,(0.0008)$ & $0.8409\,(0.0025)$ 
    & \textbf{0.7157\,(0.0001)} & \textbf{0.0049\,(0.0001)} & $0.7400\,(0.0002)$ 
    & $0.7081\,(0.0001)$ & \textbf{0.0006\,(0.0000)} & \textbf{0.7276\,(0.0001)} \\
Exp-Decay $b = 0.06$ 
    & $1.0728\,(0.0057)$ & $0.1794\,(0.0035)$ & $1.3515\,(0.0181)$ 
    & $0.7767\,(0.0009)$ & $0.0386\,(0.0007)$ & $0.8405\,(0.0024)$ 
    & \textbf{0.7157\,(0.0001)} & \textbf{0.0049\,(0.0001)} & $0.7400\,(0.0002)$ 
    & $0.7081\,(0.0001)$ & \textbf{0.0006\,(0.0000)} & \textbf{0.7276\,(0.0001)} \\
Exp-Decay $b = 0.10$ 
    & $1.0736\,(0.0056)$ & $0.1798\,(0.0034)$ & $1.3508\,(0.0171)$ 
    & $0.7765\,(0.0009)$ & $0.0385\,(0.0007)$ & $0.8399\,(0.0024)$ 
    & \textbf{0.7157\,(0.0001)} & \textbf{0.0049\,(0.0001)} & $0.7400\,(0.0002)$ 
    & \textbf{0.7080\,(0.0001)} & \textbf{0.0006\,(0.0000)} & \textbf{0.7276\,(0.0001)} \\
\textcolor{blindred}{Random} + \textcolor{blindblue}{Laplante} & $1.0611\,(0.0035)$ & \textbf{0.1563\,(0.0020)} & \textbf{1.1989\,(0.0090)} & \textbf{0.7747\,(0.0007)} & $0.0370\,(0.0005)$ & \textbf{0.8346\,(0.0018)} & \textbf{0.7157\,(0.0001)} & \textbf{0.0049\,(0.0001)} & \textbf{0.7399\,(0.0002)} & $0.7081\,(0.0001)$ & \textbf{0.0006\,(0.0000)} & $0.7278\,(0.0001)$ \\
\textcolor{blindred}{BEIG} + \textcolor{blindblue}{Laplante}
& $1.1410\,(0.0069)$ & $0.1879\,(0.0048)$ & $1.3502\,(0.0207)$ 
& $0.7819\,(0.0009)$ & \textbf{0.0368\,(0.0007)} & \textbf{0.8346\,(0.0022)} 
& $0.7158\,(0.0001)$ & \textbf{0.0049\,(0.0001)} & \textbf{0.7399\,(0.0002)}
& \textbf{0.7080\,(0.0001)} & \textbf{0.0006\,(0.0000)} & \textbf{0.7276\,(0.0001)} \\
\midrule
\multicolumn{13}{c}{Asymmetric Outliers} \\[0.2em]
BOED & $1.3679\,(0.0199)$ & $0.5713\,(0.0165)$ & $4.3926\,(0.1221)$ 
     & $0.9810\,(0.0046)$ & $0.2848\,(0.0050)$ & $1.7770\,(0.0196)$ 
     & $0.7831\,(0.0018)$ & $0.0783\,(0.0020)$ & $0.9586\,(0.0059)$ 
     & $0.7252\,(0.0003)$ & $0.0182\,(0.0003)$ & $0.7782\,(0.0010)$ \\
Unweighted-SM & $1.2002\,(0.0190)$ & $0.3594\,(0.0179)$ & $2.6767\,(0.1280)$ 
              & $0.8872\,(0.0065)$ & $0.1713\,(0.0079)$ & $1.3335\,(0.0300)$ 
              & $0.7334\,(0.0010)$ & $0.0241\,(0.0012)$ & $0.7959\,(0.0034)$ 
              & $0.7088\,(0.0001)$ & $0.0013\,(0.0001)$ & $0.7297\,(0.0002)$ \\
\citet{laplante2025robustconjugatespatiotemporalgaussian} 
    & $1.1486\,(0.0064)$ & $0.1786\,(0.0037)$ & $1.3004\,(0.0167)$ 
    & $0.7775\,(0.0008)$ & $0.0364\,(0.0007)$ & $0.8335\,(0.0021)$ 
    & \textbf{0.7157\,(0.0001)} & $0.0048\,(0.0001)$ & $0.7393\,(0.0002)$ 
    & \textbf{0.7081\,(0.0001)} & \textbf{0.0006\,(0.0000)} & $0.7278\,(0.0001)$ \\
Exp-Decay $b = 0.04$ 
    & $1.1083\,(0.0086)$ & $0.2017\,(0.0057)$ & $1.4712\,(0.0287)$ 
    & $0.7821\,(0.0014)$ & $0.0423\,(0.0012)$ & $0.8524\,(0.0037)$ 
    & $0.7158\,(0.0001)$ & $0.0048\,(0.0001)$ & $0.7394\,(0.0003)$ 
    & \textbf{0.7081\,(0.0001)} & \textbf{0.0006\,(0.0000)} & $0.7278\,(0.0001)$ \\
Exp-Decay $b = 0.06$ 
    & $1.1080\,(0.0085)$ & $0.2008\,(0.0057)$ & $1.4613\,(0.0285)$ 
    & $0.7815\,(0.0014)$ & $0.0418\,(0.0011)$ & $0.8506\,(0.0036)$ 
    & \textbf{0.7157\,(0.0001)} & $0.0048\,(0.0001)$ & $0.7394\,(0.0003)$ 
    & \textbf{0.7081\,(0.0001)} & \textbf{0.0006\,(0.0000)} & $0.7278\,(0.0001)$ \\
Exp-Decay $b = 0.10$ 
    & $1.1023\,(0.0079)$ & $0.1932\,(0.0050)$ & $1.4144\,(0.0251)$ 
    & $0.7800\,(0.0012)$ & $0.0405\,(0.0010)$ & $0.8465\,(0.0032)$ 
    & \textbf{0.7157\,(0.0001)} & $0.0048\,(0.0001)$ & $0.7394\,(0.0003)$ 
    & \textbf{0.7081\,(0.0001)} & \textbf{0.0006\,(0.0000)} & $0.7278\,(0.0001)$ \\
\textcolor{blindred}{Random} + \textcolor{blindblue}{Laplante} & \textbf{1.0645\,(0.0034)} & \textbf{0.1569\,(0.0020)} & \textbf{1.1958\,(0.0090)} & \textbf{0.7744\,(0.0006)} & $0.0367\,(0.0005)$ & $0.8339\,(0.0017)$ & \textbf{0.7157\,(0.0001)} & $0.0049\,(0.0001)$ & $0.7398\,(0.0002)$ & \textbf{0.7081\,(0.0001)} & \textbf{0.0006\,(0.0000)} & \textbf{0.7277\,(0.0001)} \\
\textcolor{blindred}{BEIG} + \textcolor{blindblue}{Laplante} 
& $1.1480\,(0.0067)$ & $0.1866\,(0.0044)$ & $1.3338\,(0.0185)$ 
& $0.7803\,(0.0008)$ & \textbf{0.0357\,(0.0007)} & \textbf{0.8313\,(0.0020)}
& \textbf{0.7157\,(0.0001)} & \textbf{0.0047\,(0.0001)} & \textbf{0.7392\,(0.0002)}
& \textbf{0.7081\,(0.0001)} & \textbf{0.0006\,(0.0000)} & $0.7278\,(0.0001)$ \\
\midrule
\multicolumn{13}{c}{Misspecified Error Variance} \\[0.2em]
BOED & $1.6371\,(0.0227)$ & $0.2978\,(0.0060)$ & $5.6348\,(0.1504)$ & $1.4154\,(0.0230)$ & $0.1845\,(0.0052)$ & $3.8272\,(0.1296)$ & $1.3571\,(0.0233)$ & $0.1530\,(0.0058)$ & $3.4716\,(0.1256)$ & $1.3509\,(0.0233)$ & $0.1521\,(0.0058)$ & $3.4771\,(0.1256)$ \\
Unweighted-SM & $1.5902\,(0.0224)$ & $0.2265\,(0.0043)$ & $4.2219\,(0.1259)$ & $1.4019\,(0.0228)$ & $0.1609\,(0.0052)$ & $3.4733\,(0.1179)$ & $1.3550\,(0.0232)$ & $0.1509\,(0.0057)$ & $3.4389\,(0.1232)$ & $1.3500\,(0.0233)$ & \textbf{0.1516\,(0.0058)} & $3.4704\,(0.1253)$ \\
\citet{laplante2025robustconjugatespatiotemporalgaussian} 
    & $1.6364\,(0.0197)$ & $0.1500\,(0.0032)$ & $2.8509\,(0.0753)$ & $1.3906\,(0.0226)$ & $0.1435\,(0.0051)$ & $3.2006\,(0.1081)$ & $1.3540\,(0.0232)$ & \textbf{0.1502\,(0.0057)} & $3.4303\,(0.1228)$ & \textbf{1.3499\,(0.0233)} & \textbf{0.1516\,(0.0058)} & \textbf{3.4702\,(0.1253)} \\
Exp-Decay $b = 0.04$ 
    & $1.6005\,(0.0211)$ & $0.1723\,(0.0032)$ & $3.2384\,(0.0902)$ & $1.3911\,(0.0226)$ & $0.1455\,(0.0051)$ & $3.2324\,(0.1089)$ & \textbf{1.3539\,(0.0232)} & \textbf{0.1502\,(0.0057)} & $3.4303\,(0.1228)$ & \textbf{1.3499\,(0.0233)} & \textbf{0.1516\,(0.0058)} & \textbf{3.4702\,(0.1253)} \\
Exp-Decay $b = 0.06$ 
    & $1.5999\,(0.0209)$ & $0.1728\,(0.0031)$ & $3.2436\,(0.0893)$ & $1.3910\,(0.0226)$ & $0.1454\,(0.0051)$ & $3.2315\,(0.1088)$ & \textbf{1.3539\,(0.0232)} & \textbf{0.1502\,(0.0057)} & $3.4303\,(0.1228)$ & \textbf{1.3499\,(0.0233)} & \textbf{0.1516\,(0.0058)} & \textbf{3.4702\,(0.1253)} \\
Exp-Decay $b = 0.10$ 
    & $1.5981\,(0.0209)$ & $0.1706\,(0.0032)$ & $3.2141\,(0.0899)$ & $1.3910\,(0.0226)$ & $0.1453\,(0.0051)$ & $3.2304\,(0.1088)$ & \textbf{1.3539\,(0.0232)} & \textbf{0.1502\,(0.0057)} & $3.4303\,(0.1228)$ & \textbf{1.3499\,(0.0233)} & \textbf{0.1516\,(0.0058)} & \textbf{3.4702\,(0.1253)} \\
\textcolor{blindred}{Random} + \textcolor{blindblue}{Laplante} & \textbf{1.5898\,(0.0205)} & $0.1512\,(0.0030)$ & $2.8900\,(0.0796)$ & \textbf{1.3897\,(0.0226)} & $0.1452\,(0.0051)$ & $3.2306\,(0.1090)$ & $1.3541\,(0.0232)$ & \textbf{0.1502\,(0.0057)} & \textbf{3.4296\,(0.1228)} & $1.3500\,(0.0233)$ & $0.1517\,(0.0058)$ & $3.4712\,(0.1253)$ \\
\textcolor{blindred}{BEIG} + \textcolor{blindblue}{Laplante}
& $1.6335\,(0.0200)$ & \textbf{0.1473\,(0.0035)} & \textbf{2.8156\,(0.0750)} 
& $1.3919\,(0.0226)$ & \textbf{0.1424\,(0.0051)} & \textbf{3.1777\,(0.1073)}
& $1.3540\,(0.0232)$ & \textbf{0.1502\,(0.0057)} & \textbf{3.4296\,(0.1228)}
& \textbf{1.3499\,(0.0233)} & \textbf{0.1516\,(0.0058)} & \textbf{3.4702\,(0.1253)} \\
\bottomrule
\end{tabular}
\end{adjustbox}
\caption{Comparison of methods across different dimensions $d$ for the location finding problem under well-specified and misspecified scenarios, with $\omega = 0.2$ if not stated. RMSE, MMD, and NLL are reported with mean (and standard error) over 100 replications.}
\label{locfindingresultsfull}
\end{table}

\begin{figure}[ht!]
    \centering
    \includegraphics[width=\linewidth]{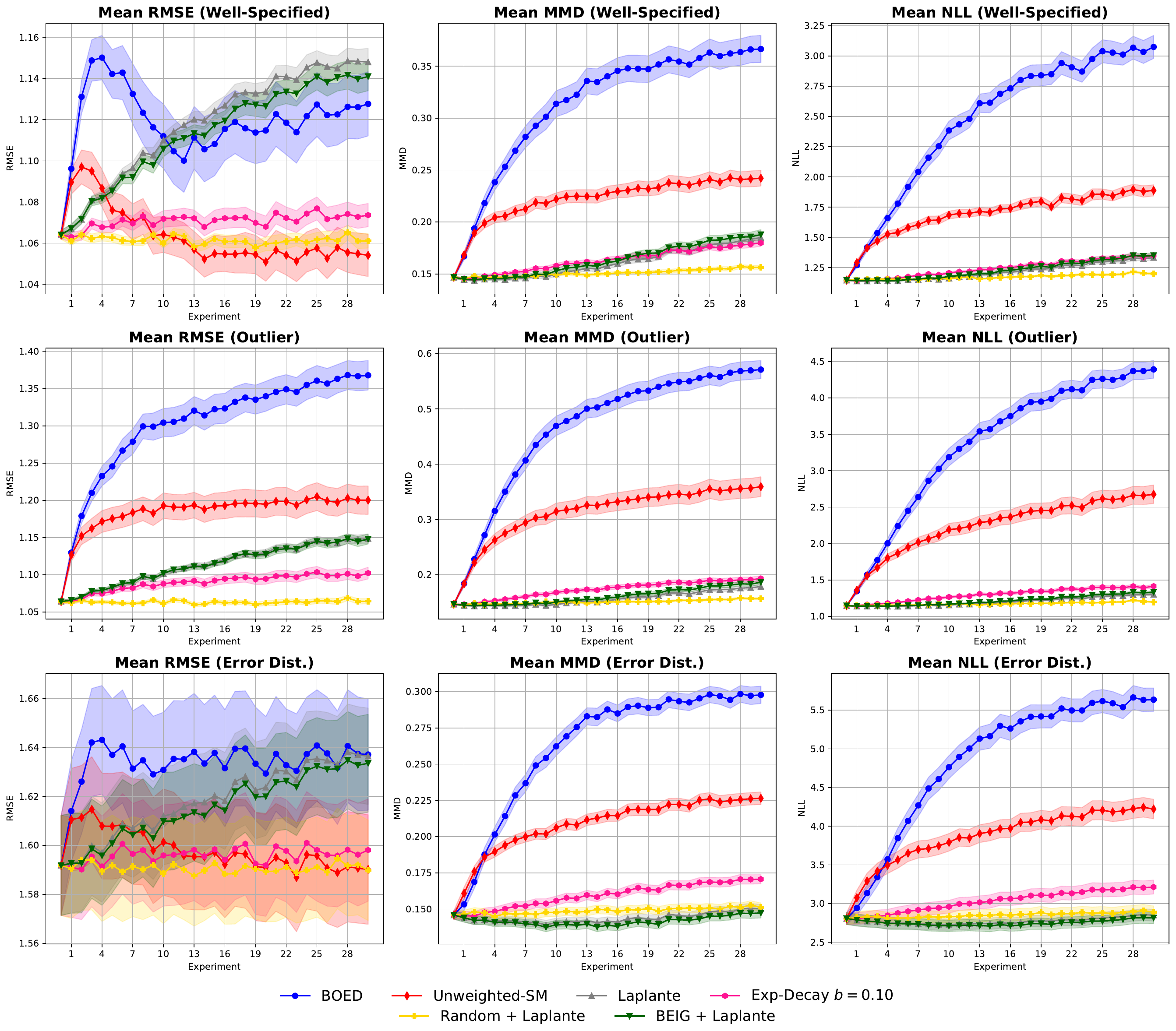}
    \caption{Methods compared on the well-specified scenario and the two misspecified scenarios, for the 2D location finding problem. Top row displays the well-specified scenario with RMSE, MMD, and NLL from left to right. Middle row displays the asymmetric outlier scenario. Bottom row displays the misspecified error distribution scenario.}
    \label{plotallscenariolocfind}
\end{figure}

\subsubsection{Sensitivity to Learning Rate}
\label{sensitlocfindapp}

As in the other experimental design problems, the learning rate $\omega$ too can have a great impact on performance in the location finding problem. Recall that the prior and true posterior are not that far apart, and so selecting a small learning rate is not problematic if we end up having enough data to conduct good inference (as we do here with $T = 30$). 

The results in \Tableref{locfindingresultssensitive} show results when $\omega = 0.1$. The gaps in performance from using $\omega = 0.2$ and now using $\omega = 0.1$ are made narrow between randomly selecting designs and using either the BEIG or Gibbs EIG, suggesting that randomly selecting designs with Gibbs inference now performs worse than with $\omega = 0.2$, or rather that the BEIG and Gibbs EIG perform better. 

\begin{table}[ht!]
\centering
\begin{adjustbox}{max width=\textwidth, angle=90}
\begin{tabular}{lcccccccccccc}
\toprule
\multirow{2}{*}{\textbf{Method}} 
& \multicolumn{3}{c}{$d = 2$} 
& \multicolumn{3}{c}{$d = 4$} 
& \multicolumn{3}{c}{$d = 8$} 
& \multicolumn{3}{c}{$d = 16$} \\
\cmidrule(lr){2-4} \cmidrule(lr){5-7} \cmidrule(lr){8-10} \cmidrule(lr){11-13}
& \textbf{RMSE} & \textbf{MMD} & \textbf{NLL} 
& \textbf{RMSE} & \textbf{MMD} & \textbf{NLL} 
& \textbf{RMSE} & \textbf{MMD} & \textbf{NLL} 
& \textbf{RMSE} & \textbf{MMD} & \textbf{NLL} \\
\midrule
\multicolumn{13}{c}{Well-Specified} \\[0.2em]
BOED & $1.1277\,(0.0156)$ & $0.3665\,(0.0131)$ & $3.0747\,(0.0951)$ 
     & $0.7902\,(0.0030)$ & $0.0825\,(0.0031)$ & $1.0114\,(0.0114)$ 
     & $0.7171\,(0.0004)$ & $0.0066\,(0.0003)$ & $0.7450\,(0.0010)$ 
     & $0.7084\,(0.0001)$ & $0.0009\,(0.0001)$ & $0.7285\,(0.0002)$ \\
Unweighted-SM & $1.0672\,(0.0095)$ & $0.2379\,(0.0069)$ & $1.8234\,(0.0394)$
              & $0.7939\,(0.0027)$ & $0.0561\,(0.0022)$ & $0.9014\,(0.0073)$
              & $0.7165\,(0.0002)$ & $0.0054\,(0.0001)$ & $0.7416\,(0.0004)$ & \textbf{0.7080\,(0.0001)} & \textbf{0.0006\,(0.0000)} & \textbf{0.7276\,(0.0001)} \\
\citet{laplante2025robustconjugatespatiotemporalgaussian} 
    & $1.0775\,(0.0027)$ & \textbf{0.1450\,(0.0015)} & \textbf{1.1456\,(0.0057)} 
    & $0.7748\,(0.0006)$ & $0.0365\,(0.0004)$ & $0.8335\,(0.0016)$
    & \textbf{0.7157\,(0.0001)} & \textbf{0.0049\,(0.0001)} & $0.7400\,(0.0002)$
    & \textbf{0.7080\,(0.0001)} & \textbf{0.0006\,(0.0000)} & \textbf{0.7276\,(0.0001)} \\
Exp-Decay $b = 0.10$ 
    & $1.0718\,(0.0027)$ & $0.1492\,(0.0016)$ & $1.1622\,(0.0062)$
    & \textbf{0.7745\,(0.0006)} & $0.0367\,(0.0004)$ & $0.8341\,(0.0016)$
    & \textbf{0.7157\,(0.0001)} & \textbf{0.0049\,(0.0001)} & $0.7400\,(0.0002)$
    & \textbf{0.7080\,(0.0001)} & \textbf{0.0006\,(0.0000)} & \textbf{0.7276\,(0.0001)} \\
\textcolor{blindred}{Random} + \textcolor{blindblue}{Laplante}  
    & \textbf{1.0630\,(0.0022)} & $0.1481\,(0.0013)$ & $1.1523\,(0.0046)$ 
    & $0.7747\,(0.0006)$ & $0.0369\,(0.0005)$ & $0.8343\,(0.0017)$
    & \textbf{0.7157\,(0.0001)} & \textbf{0.0049\,(0.0001)} & \textbf{0.7399\,(0.0002)} 
    & $0.7081\,(0.0001)$ & \textbf{0.0006\,(0.0000)} & $0.7278\,(0.0001)$ \\
\textcolor{blindred}{BEIG} + \textcolor{blindblue}{Laplante} 
    & $1.0811\,(0.0023)$ & $0.1465\,(0.0016)$ & $1.1524\,(0.0058)$ 
    & $0.7753\,(0.0006)$ & \textbf{0.0363\,(0.0004)} & \textbf{0.8330\,(0.0016)}
    & \textbf{0.7157\,(0.0001)} & \textbf{0.0049\,(0.0001)} & $0.7400\,(0.0002)$ 
    & \textbf{0.7080\,(0.0001)} & \textbf{0.0006\,(0.0000)} & \textbf{0.7276\,(0.0001)} \\
\midrule
\multicolumn{13}{c}{Asymmetric Outliers} \\[0.2em]
BOED & $1.3679\,(0.0199)$ & $0.5713\,(0.0165)$ & $4.3926\,(0.1221)$ 
     & $0.9810\,(0.0046)$ & $0.2848\,(0.0050)$ & $1.7770\,(0.0196)$ 
     & $0.7831\,(0.0018)$ & $0.0783\,(0.0020)$ & $0.9586\,(0.0059)$ 
     & $0.7252\,(0.0003)$ & $0.0182\,(0.0003)$ & $0.7782\,(0.0010)$ \\
Unweighted-SM & $1.2281\,(0.0187)$ & $0.3699\,(0.0170)$ & $2.7005\,(0.1188)$
              & $0.8643\,(0.0048)$ & $0.1399\,(0.0061)$ & $1.2077\,(0.0230)$ & $0.7249\,(0.0005)$ & $0.0142\,(0.0006)$ & $0.7670\,(0.0018)$
              & $0.7083\,(0.0001)$ & $0.0008\,(0.0000)$ & $0.7282\,(0.0001)$ \\
\citet{laplante2025robustconjugatespatiotemporalgaussian} 
    & $1.0748\,(0.0027)$ & \textbf{0.1439\,(0.0014)} & \textbf{1.1389\,(0.0051)}
    & \textbf{0.7743\,(0.0005)} & $0.0361\,(0.0004)$ & $0.8326\,(0.0015)$
    & \textbf{0.7156\,(0.0001)} & \textbf{0.0048\,(0.0001)} & $0.7394\,(0.0002)$
    & \textbf{0.7081\,(0.0001)} & \textbf{0.0006\,(0.0000)} & $0.7278\,(0.0001)$ \\
Exp-Decay $b = 0.10$ 
    & $1.0756\,(0.0032)$ & $0.1502\,(0.0017)$ & $1.1664\,(0.0059)$
    & $0.7744\,(0.0005)$ & $0.0362\,(0.0005)$ & $0.8330\,(0.0016)$
    & $0.7157\,(0.0001)$ & \textbf{0.0048\,(0.0001)} & $0.7394\,(0.0002)$
    & \textbf{0.7081\,(0.0001)} & \textbf{0.0006\,(0.0000)} & $0.7278\,(0.0001)$ \\
\textcolor{blindred}{Random} + \textcolor{blindblue}{Laplante}  
    & \textbf{1.0657\,(0.0022)} & $0.1489\,(0.0013)$ & $1.1524\,(0.0045)$ 
    & $0.7744\,(0.0006)$ & $0.0366\,(0.0005)$ & $0.8338\,(0.0016)$
    & $0.7157\,(0.0001)$ & $0.0049\,(0.0001)$ & $0.7398\,(0.0002)$
    & \textbf{0.7081\,(0.0001)} & \textbf{0.0006\,(0.0000)} & \textbf{0.7277\,(0.0001)} \\
\textcolor{blindred}{BEIG} + \textcolor{blindblue}{Laplante} 
    & $1.0778\,(0.0026)$ & $0.1454\,(0.0015)$ & $1.1427\,(0.0052)$
    & $0.7746\,(0.0005)$ & \textbf{0.0360\,(0.0004)} & \textbf{0.8321\,(0.0015)}
    & \textbf{0.7156\,(0.0001)} & \textbf{0.0048\,(0.0001)} & \textbf{0.7393\,(0.0002)}
    & \textbf{0.7081\,(0.0001)} & \textbf{0.0006\,(0.0000)} & $0.7278\,(0.0001)$ \\
\midrule
\multicolumn{13}{c}{Misspecified Error Variance} \\[0.2em]
BOED & $1.6371\,(0.0227)$ & $0.2978\,(0.0060)$ & $5.6348\,(0.1504)$ & $1.4154\,(0.0230)$ & $0.1845\,(0.0052)$ & $3.8272\,(0.1296)$ & $1.3571\,(0.0233)$ & $0.1530\,(0.0058)$ & $3.4716\,(0.1256)$ & $1.3509\,(0.0233)$ & $0.1521\,(0.0058)$ & $3.4771\,(0.1256)$ \\
Unweighted-SM & $1.6111\,(0.0238)$ & $0.2242\,(0.0053)$ & $4.1598\,(0.1286)$
              & $1.4022\,(0.0226)$ & $0.1526\,(0.0051)$ & $3.3275\,(0.1122)$
              & $1.3546\,(0.0232)$ & $0.1505\,(0.0057)$ & $3.4320\,(0.1229)$
              & \textbf{1.3499\,(0.0233)} & \textbf{0.1516\,(0.0058)} & \textbf{3.4702\,(0.1253)} \\
\citet{laplante2025robustconjugatespatiotemporalgaussian} 
    & $1.5984\,(0.0202)$ & $0.1431\,(0.0030)$ & $2.7742\,(0.0758)$
    & \textbf{1.3896\,(0.0226)} & $0.1446\,(0.0051)$ & $3.2229\,(0.1088)$
    & $1.3540\,(0.0232)$ & \textbf{0.1502\,(0.0057)} & $3.4303\,(0.1228)$
    & \textbf{1.3499\,(0.0233)} & \textbf{0.1516\,(0.0058)} & \textbf{3.4702\,(0.1253)} \\
Exp-Decay $b = 0.10$ 
    & $1.5965\,(0.0204)$ & $0.1470\,(0.0030)$ & $2.8232\,(0.0772)$
    & $1.3897\,(0.0226)$ & $0.1448\,(0.0051)$ & $3.2257\,(0.1088)$
    & \textbf{1.3539\,(0.0232)} & \textbf{0.1502\,(0.0057)} & $3.4303\,(0.1228)$
    & \textbf{1.3499\,(0.0233)} & \textbf{0.1516\,(0.0058)} & \textbf{3.4702\,(0.1253)} \\
\textcolor{blindred}{Random} + \textcolor{blindblue}{Laplante}  
    & \textbf{1.5914\,(0.0204)} & $0.1468\,(0.0029)$ & $2.8193\,(0.0769)$
    & $1.3897\,(0.0226)$ & $0.1452\,(0.0051)$ & $3.2298\,(0.1090)$
    & $1.3541\,(0.0232)$ & \textbf{0.1502\,(0.0057)} & \textbf{3.4296\,(0.1228)}
    & $1.3500\,(0.0233)$ & $0.1517\,(0.0058)$ & $3.4712\,(0.1253)$ \\
\textcolor{blindred}{BEIG} + \textcolor{blindblue}{Laplante} 
    & $1.5993\,(0.0202)$ & \textbf{0.1425\,(0.0030)} & \textbf{2.7672\,(0.0756)}
    & $1.3897\,(0.0226)$ & \textbf{0.1444\,(0.0051)} & \textbf{3.2197\,(0.1087)}
    & $1.3540\,(0.0232)$ & \textbf{0.1502\,(0.0057)} & $3.4302\,(0.1228)$
    & \textbf{1.3499\,(0.0233)} & \textbf{0.1516\,(0.0058)} & \textbf{3.4702\,(0.1253)} \\
\bottomrule
\end{tabular}
\end{adjustbox}
\caption{Comparison of methods across different dimensions $d$ for the location finding problem under well-specified and misspecified scenarios, with $\omega = 0.1$ if not stated. RMSE, MMD, and NLL are reported with mean (and standard error) over 100 replications.}
\label{locfindingresultssensitive}
\end{table}

\subsubsection{Random Design Selection with Gibbs Inference}
\label{randomdesigngibbs}

We now discuss the use of random acquisition in selecting designs, together with Gibbs inference and the loss function/method by \citet{laplante2025robustconjugatespatiotemporalgaussian}, in the location finding experimental design problem. This allows us to better understand how and why randomly selecting designs performs well.

We will keep this discussion to the case of $\omega = 0.2$, where randomly selecting designs generally does better (at least in lower dimensions) than the other methods we tested (see \Tableref{locfindingresultsfull}). The results from using the method for Gibbs inference as in \citet{laplante2025robustconjugatespatiotemporalgaussian} and using a design selection mechanism that randomly queries points in the design space can be found in \Tableref{locfindingresultsfull} as \textit{\textcolor{blindred}{Random}}. 

In many situations, particularly in lower dimensions, using a random method for selecting designs appears to lead to better performance than using the Gibbs EIG to select designs. This suggests that maximising the Gibbs EIG in a myopic manner with Bayesian optimisation, the hyperparameters of which we explain in \Appref{locfindexpdetails}, is not the best approach to tackling the location finding problem. By randomly selecting designs, one is often exploring many diverse regions of the design space, which in turn can help one learn a predictive model better. The Gibbs EIG (and BEIG) likely suffers from this challenge in exploration, which leads nicely to recent methods in learning policy networks that are non-myopic and can better navigate design spaces \citep{foster2021deep, ivanova2021implicit, blau2022optimizing}. Such methods look instead at gradient-based optimisation \citep{foster2020unified}, or, more specifically, techniques commonly found in training reinforcement learning agents \citep{Sutton1998, blau2022optimizing, lim2022policybasedbayesianexperimentaldesign}. 

A key point under misspecification is that the clustering behaviour found in the left panel of \Figref{locfindingplotrandom} under misspecification shows that BOED can waste precious resources on selecting poor designs, particularly due to being faced with outliers. A random acquisition method will generally avoid the clustering behaviour seen by both standard BOED and standard GBOED, which is likely why performance is much greater -- BOED and GBOED can be prone to failure through myopic Bayesian optimisation, the hyperparameters of which we explain in \Appref{locfindexpdetails}. Gradient-based optimisation is known to perform better than Bayesian optimisation \citep{foster2020unified}, and so performance could be improved using this method of searching for the optimal design on the EIG surface. Although randomly selecting designs can perform better predictively, this does not necessarily mean that the designs themselves are good for creating a dataset. This can be seen in \Figref{locfindingplotrandom}, where designs chosen randomly can sometimes be quite far from the true locations of the objects. The selected designs could also cluster within the same region by chance, and one may prefer the datasets provided by the BEIG or Gibbs EIG over that provided by a random method of acquisition. 
The performance achieved from randomly selecting designs is also likely due to the restricted design space. For a larger design space, say instead $[-10, 10]^{d}$, the random acquisition strategy might exhibit far worse performance, as the designs could be much further away from the locations of the objects. There is evidence to suggest that this may be true: \Tableref{locfindingresultsfull} shows that the random acquisition method is less optimal in higher dimensions, where GBOED or using the BEIG to select designs can perform better.

\begin{figure}[ht!]
    \centering
    \begin{subfigure}{\linewidth}
        \centering
        \includegraphics[width=\linewidth]{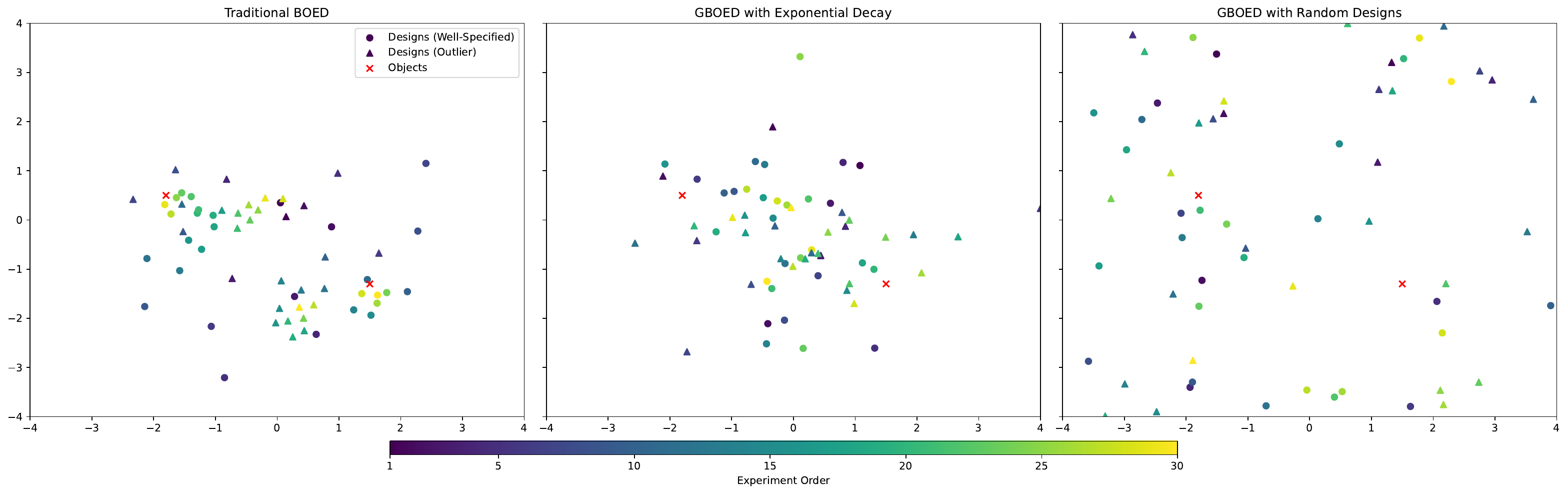}
        \caption{2D location finding example of designs selected by BOED, GBOED with exponential decay, and a random method of selecting designs with Gibbs inference in the well-specified scenario and the asymmetric outlier scenario. Best performing runs out of 100 replications in terms of the NLL are shown. BOED under well-specification achieves NLL $0.8031$ and $1.6898$ under misspecification. GBOED with exponential decay achieves NLL $1.1018$ and $1.0461$ under misspecification. Random under well-specification achieves NLL $1.0362$ and $1.0665$ under misspecification. $\omega = 0.2$ if not stated.}
        \label{locfindingplotrandom}
    \end{subfigure}

    \vspace{1em}

    \begin{subfigure}{\linewidth}
        \centering
        \includegraphics[width=\linewidth]{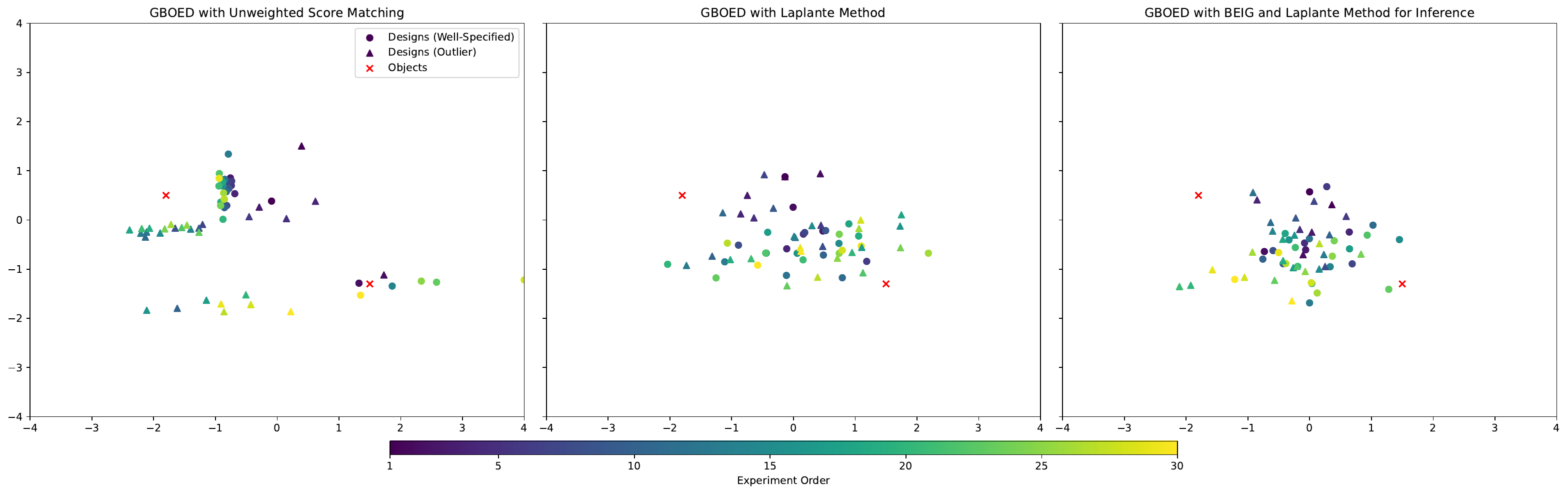}
        \caption{2D location finding example of designs selected by GBOED with the unweighted score matching loss function, GBOED with the method by \citet{laplante2025robustconjugatespatiotemporalgaussian} for the loss function, and a mix of using the BEIG to select designs but using Gibbs inference with the loss function as by \citet{laplante2025robustconjugatespatiotemporalgaussian}. Plots show both the well-specified and the asymmetric outlier scenarios. Best performing runs out of 100 replications in terms of the NLL are shown. Unweighted score matching under well-specification achieves NLL $1.1464$ and $1.2756$ under misspecification. \citet{laplante2025robustconjugatespatiotemporalgaussian} achieves NLL $1.0611$ and $1.0414$ under misspecification. BEIG and \citet{laplante2025robustconjugatespatiotemporalgaussian} under well-specification achieves NLL $1.0519$ and $1.0553$ under misspecification. $\omega = 0.2$ if not stated.}
        \label{locfindingplotmore}
    \end{subfigure}

    \caption{Comparison of BOED, GBOED variants, and random design selection methods under well-specified and asymmetric outlier scenarios for the 2D location finding problem. Subfigure (a) shows BOED, GBOED with exponential decay, and random design selection with the \citet{laplante2025robustconjugatespatiotemporalgaussian} method for inference. Subfigure (b) shows GBOED with alternative loss functions and using the BEIG to select designs with the \citet{laplante2025robustconjugatespatiotemporalgaussian} method for inference.}
    \label{locfinding_combined}
\end{figure}

We can also view histograms of the NLL for all 100 replications under a random acquisition method and using the Gibbs EIG to confirm whether there may be issues with the Bayesian optimisation regime. As seen in \Figref{histogramperf}, the random acquisition method generally does well more often on average than using the loss function by \citet{laplante2025robustconjugatespatiotemporalgaussian} for selecting designs with the Gibbs EIG (recall both use the same loss function for parameter inference). Using the Gibbs EIG does have the ability to achieve a lower NLL than randomly selecting designs, as seen in one example on the histogram. One can speculate that using the Gibbs EIG with a better design optimisation method than Bayesian optimisation could prove fruitful and increase this to many more cases.

\begin{figure}[ht!]
    \centering
    \includegraphics[width=100mm]{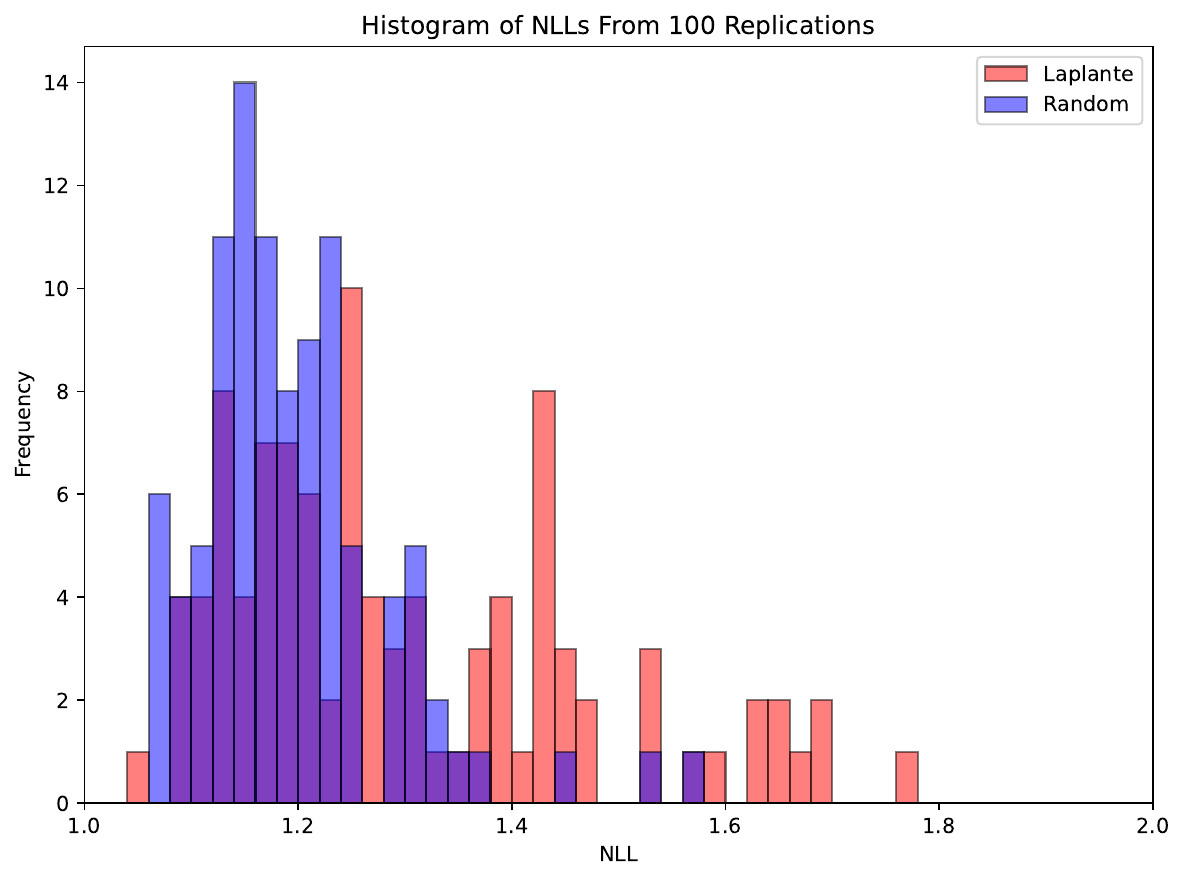}
    \caption{NLL histograms for 100 replications on the 2D location finding problem under the asymmetric outlier scenario. $\omega = 0.2$ if not stated.}
    \label{histogramperf}
\end{figure}

We should lastly mention that since the performance of GBOED relies heavily on Gibbs inference, choosing smaller learning rates $\omega$ would cause the posterior to deviate less from the prior. This can drastically reduce the gap in performance between randomly selecting designs and using the Gibbs EIG, offering enhanced predictive performance. The drawback here is that the curated dataset would also change as a result; finding the right learning rate remains a difficult problem.

\subsubsection{Qualitative Analysis of Specific GBOED Methods}

We have already seen how BOED, random acquisition, and GBOED with exponential decay behave qualitatively. We are still missing an account of how different loss functions behave in practice. This account would explain how design selection can contribute to quantitative differences in predictive performance, and what properties the selected designs exhibit that might be useful to an experimenter.

\Figref{locfindingplotmore} presents the designs queried from performing GBOED with unweighted score matching, the \citet{laplante2025robustconjugatespatiotemporalgaussian} method, and using the BEIG for selecting designs and the \citet{laplante2025robustconjugatespatiotemporalgaussian} method for inference. One may argue that unweighted score matching selects a very poor set of designs. Although several designs seem to be useful, such as those close to the object on the bottom right, and perhaps a few not too distant from the top left object, most of the designs seem to cluster around the same region -- regardless of whether or not there is misspecification present. Most of the designs chosen in the well-specified setting are perpendicular to those in the misspecified setting. Due to the major clustering around tight regions of the design space, it is not clear whether there is sufficient exploration, and selecting designs so closely to others can imply wasted resources (especially as those on the top left do not get any closer to the top left object).
This could be an issue with Bayesian optimisation for choosing optimal designs on the EIG surface, but the chances of this are low due to the other methods selecting designs much better.

Onto the other two methods, which behave similarly, the wild clustering behaviour from unweighted score matching is avoided. Instead, designs seem to be chosen somewhat close to the origin, perhaps the result of a unit Gaussian prior. Although both methods use the same method for inference, the BEIG seems to choose designs in a more clustered fashion than using standard GBOED with the loss function by \citet{laplante2025robustconjugatespatiotemporalgaussian}. The Gibbs EIG seems to result in a greater distance between previously chosen designs, likely a result of using a small learning rate. All in all, neither method seems to select many of the designs around the objects like BOED would under well-specification (\Figref{locfindingplotrandom}). Using the plots from the six different experimental design regimes, one could argue that GBOED with exponential decay offers the most useful dataset. The clustering behaviour is mostly avoided here, and there are some designs chosen close to the objects in both the well-specified and misspecified settings. Even if predictive performance is ultimately not the most optimal, one can still train a new model using the data gathered -- of which many machine learning methods outside of Bayesian inference can be used.

\subsubsection{Expected Information Gained During Experimentation}

Even though the Gibbs EIG is unique for each loss function, we can view how the EIG acquired during experimentation varies between loss functions. To do so, we present \Figref{gibbseigcomp}, which displays the EIG gained on a 2D location finding problem under both the well-specified and asymmetric outlier scenarios.

An interesting finding is that the Gibbs EIG gathered through GBOED is generally quite consistent regardless of whether we are in a misspecified setting or not. This does not mean that the final designs chosen are the same across both scenarios. When we use traditional BOED with the BEIG, it seems that less information is acquired in the asymmetric outlier setting -- likely a result of being unable to deal with outliers well enough during inference. The BEIG acquired from performing BOED closely trails behind using exponential decay with $b = 0.10$ for the asymmetric outlier setting.

Using unweighted score matching appears to assign roughly the same amount of Gibbs EIG for each design selected (the line is linear in the number of experiments), which could explain why it performs much worse in terms of predictive performance than in the regression problem. Tuning the learning rate $\omega$ can result in different behaviour.

\begin{figure}[ht!]
    \centering
    \includegraphics[width=\linewidth]{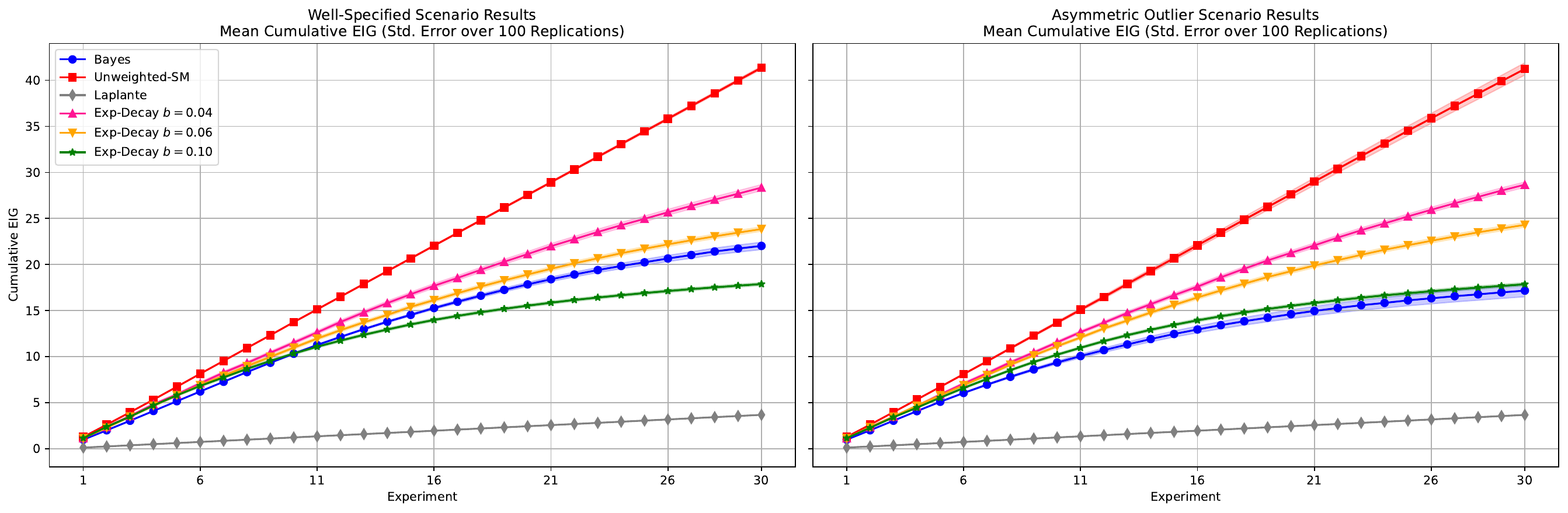}
    \caption{Comparison of methods in the (maximum) Gibbs EIG acquired during experimentation for the 2D location finding problem. Mean (and standard error) over 100 replications. Left: Well-specified setting. Right: Observations corrupted with asymmetric outliers. $\omega = 0.2$ if not stated.}
    \label{gibbseigcomp}
\end{figure}

\subsubsection{Deployment Times}

\Tableref{deploymenttimes2} presents the average duration of experimentation (deployment time) in all $T = 30$ experiments for the 4D location finding problem, including both selecting optimal designs and computing posteriors. As in the pharmacokinetics problem, the choice of loss function can affect the deployment time of our experiments. All experiments here were run on the same node in ascending order (with the same start time), which may be why the difference between BOED and unweighted score matching, and the difference between the \citet{laplante2025robustconjugatespatiotemporalgaussian} method and exponential decay, are quite small. 

\begin{table}[ht!]
\centering
\begin{tabular}{lc}
\toprule
\textbf{Method} & \textbf{Deployment Time (s)} \\
\midrule
BOED & $1511.6394\,(5.5374)$ \\
Unweighted-SM & $1523.9550\,(5.5449)$ \\
\citet{laplante2025robustconjugatespatiotemporalgaussian} & $2201.2556\,(10.3973)$ \\
Exp-Decay $b = 0.04$ & $2224.9622\,(8.9104)$ \\
\textcolor{blindred}{Random} + \textcolor{blindblue}{Laplante} & \textbf{754.5069\,(5.4250)} \\
\textcolor{blindred}{BEIG} + \textcolor{blindblue}{Laplante} & $2013.1275\,(1.7917)$ \\
\bottomrule
\end{tabular}
\caption{Deployment times in seconds for the 4D location finding problem under the well-specified setting, with $\omega = 0.2$ if not stated. Mean (and standard error) over 100 replications.}
\label{deploymenttimes2}
\end{table}

\end{document}